\newcommand{\prob}{{\bf P}}
\newcommand{\e}{{\bf E}}
\newcommand{\norm}[1]{\left\lVert #1 \right\rVert}
\newcommand{\bae}{\begin{equation}\begin{aligned}}
\newcommand{\eae}{\end{aligned}\end{equation}}
\newcommand{\beq}{\begin{equation}}
\newcommand{\eeq}{\end{equation}}
\newtheorem{theorem}{Theorem}[section]
\newtheorem{lemma}[theorem]{Lemma}
\newtheorem{proposition}[theorem]{Proposition}
\theoremstyle{definition}
\newtheorem{assumption}[theorem]{Assumption}
\theoremstyle{remark}
\newtheorem{remark}[theorem]{Remark}
\numberwithin{equation}{section}
\title{Continuous-time stochastic gradient descent for optimizing over the stationary distribution of stochastic differential equations}
\author{Ziheng Wang\footnote{Mathematical Institute, University of Oxford, E-mail: wangz1@math.ox.ac.uk.} \ and Justin Sirignano\footnote{Mathematical Institute, University of Oxford, Corresponding Author, E-mail: Justin.Sirignano@maths.ox.ac.uk.}}
\begin{document}
\maketitle

\begin{abstract}
We develop a new continuous-time stochastic gradient descent method for optimizing over the stationary distribution of stochastic differential equation (SDE) models. The algorithm continuously updates the SDE model's parameters using an estimate for the gradient of the stationary distribution. The gradient estimate is simultaneously updated using forward propagation of the SDE state derivatives, asymptotically converging to the direction of steepest descent. We rigorously prove convergence of the online forward propagation algorithm for linear SDE models (i.e., the multi-dimensional Ornstein-Uhlenbeck process) and present its numerical results for nonlinear examples. The proof requires analysis of the fluctuations of the parameter evolution around the direction of steepest descent. Bounds on the fluctuations are challenging to obtain due to the online nature of the algorithm (e.g., the stationary distribution will continuously change as the parameters change). We prove bounds for the solutions of a new class of Poisson partial differential equations (PDEs), which are then used to analyze the parameter fluctuations in the algorithm. Our algorithm is applicable to a range of mathematical finance applications involving statistical calibration of SDE models and stochastic optimal control for long time horizons where ergodicity of the data and stochastic process is a suitable modeling framework. Numerical examples explore these potential applications, including learning a neural network control for high-dimensional optimal control of SDEs and training stochastic point process models of limit order book events.
\end{abstract}

\section{Introduction}

\hspace{1.4em} Consider a parametric process $X^\theta_t \in \mathbb{R}^d$ which satisfies the stochastic differential equation (SDE):
\bae
\label{ergodic process}
dX_t^\theta &= \mu(X_t^\theta, \theta) dt + \sigma(X_t^\theta, \theta) dW_t, \\
X_0^\theta &= x,
\eae
where $\theta \in \mathbb{R}^\ell, \mu \in \mathbb{R}^d, \sigma \in \mathbb{R}^{d \times d}$, and $W_t$ is a standard Brownian motion. Suppose $X_t^\theta$ is ergodic with the stationary distribution $\pi_\theta$.\footnote{ Sufficient conditions (\cite{pardoux2003poisson}) for the existence and uniqueness of  $\pi_\theta$ are: (1) both coefficients $\mu$ and $\sigma$ are assumed to be bounded and $\sigma$ is uniformly continuous with respect to $x$ variable, (2) $ \displaystyle \lim _{|x| \to \infty} \sup _{\theta} \mu(x, \theta) x = -\infty$, and (3) there exist two constants $0<\lambda<\Lambda<\infty$  such that
$ \lambda I_d \leq \sigma \sigma^{\top}(x, \theta) \leq \Lambda I_d $ where $I_d$ is the $d\times d$ identity matrix.}

Our goal is to select the parameters $\theta$ which minimize the objective function 

\beq
\label{objective function}
J(\theta) = \sum\limits_{n=1}^N  \left( \e_{ \pi_\theta} \left[ f_n(Y) \right] - \beta_n \right)^2,
\eeq
where $Y$ is a random variable with distribution $\pi_\theta$, $f_n$ are known functions, and $\beta_n$ are the target quantities. Thus, we are interested in optimizing the parameterized SDEs (\ref{ergodic process}) such that their stationary distribution matches, as closely as possible, the target statistics $\beta_n$. In practice, the target statistics may be data from real-world observations which are then used to calibrate the SDE model (\ref{ergodic process}). 

\subsection{Existing methods to optimize over the stationary distribution of SDEs} \label{ExistingMethods}

\hspace{1.4em} The stationary distribution $\pi_{\theta}$ is typically unknown and therefore it is challenging to optimize over $J(\theta)$. The quantity $\e_{Y \sim\pi_\theta} \big{[} f_n(Y) \big{]}$ as well as its gradient $\nabla_{\theta} \e_{Y \sim\pi_\theta} \big{[} f_n(Y) \big{]}$ must be estimated in order to minimize $J(\theta)$. $\e_{Y \sim\pi_\theta} \big{[} f_n(Y) \big{]}$ can be evaluated using the forward Kolmogorov equation

\begin{eqnarray}
\mathcal{L}^{\theta, *}_{x} p_\infty(x, \theta) = 0,
\label{ForwardKolmogorov}
\end{eqnarray}
where $\mathcal{L}^{\theta}_x$ is the infinitesimal generator of the process $X_t^{\theta}$ and $\mathcal{L}^{\theta, *}_x$ is the adjoint operator of $\mathcal{L}^{\theta}_x$. $\nabla_{\theta} \e_{Y \sim\pi_\theta} \left[ f_n(Y) \right]$ can be calculated using an appropriate adjoint PDE for (\ref{ForwardKolmogorov}) \cite{annunziato2013fokker, butt2022numerical, fleig2017optimal, kaltenbacher2018parameter}. However, if the dimension of $d$ for $X_t^{\theta}$ is large, solving the forward Kolmogorov equation and its adjoint PDE become extremely computationally expensive. In the special case where the drift function $\mu$ is the gradient of a scalar function and the volatility function $\sigma$ is constant, there exists a closed-form formula for the stationary distribution \cite{Pavliotis}.

Alternatively, $\e_{Y \sim\pi_\theta} \big{[} f_n(Y) \big{]}$ can be approximated by simulating (\ref{ergodic process}) over a long time $[0,T]$. Similar to \cite{carmona2021deep}, the gradient descent algorithm would be:

\begin{itemize}
\item Simulate $X^{\theta_k}_t$ for $ t \in [0,T]$.
\item Evaluate the gradient of $J_T(\theta_k) := \displaystyle \sum_{n=1}^N \left( \frac1T \int_0^T f_n(X^{\theta_k}_t) dt - \beta_n \right)^2$ .
\item Update the parameter as $\theta_{k+1} = \theta_k - \alpha_k \nabla_\theta J_T(\theta_k)$,
\end{itemize}
where $\alpha_k$ is the learning rate. This gradient descent algorithm will be slow; a long simulation time $T$ will be required for each optimization iteration. A second disadvantage is that $J_T(\theta)$ is an approximation to $J(\theta)$ and therefore error is introduced into the algorithm, i.e. $\nabla_{\theta} J_T(\theta) \neq \nabla_{\theta} J(\theta)$.

\subsection{An Online Optimization Algorithm}

\hspace{1.4em} We propose a new continuous-time stochastic gradient descent algorithm which allows for computationally efficient optimization of \eqref{objective function}. The algorithm uses \textbf{online forward propagation} to asymptotically estimate the gradient of the objective function with respect to the parameters. For notational convenience (and without loss of generality), we will set $N = 1$ and $\beta_1 = \beta$. The online forward propagation algorithm for optimizing \eqref{objective function} is:
\bae
\label{nonlinear update}
\frac{d\theta_t}{dt} &= -2\alpha_t \left(f(\bar X_t) - \beta \right) \left(\nabla f(X_t) \tilde X_t\right)^\top, \\
d \tilde X_t &= \left( \nabla_x \mu( X_t, \theta_t )\tilde X_t + \nabla_\theta \mu(X_t, \theta_t) \right) dt + \left( \nabla_x \sigma(X_t,\theta_t)\tilde X_t + \nabla_\theta \sigma(X_t, \theta_t) \right) dW_t, \\
dX_t &= \mu(X_t, \theta_t) dt + \sigma(X_t, \theta_t) dW_t, \\
d\bar X_t &= \mu( \bar X_t, \theta_t ) dt + \sigma(\bar X_t, \theta_t) d \bar W_t,
\eae
where $W_t$ and $\bar W_t$ are independent Brownian motions and $\alpha_t$ is the learning rate. Before proceeding with our analysis, we first clarify the notation in \eqref{nonlinear update}. In this paper, the Jacobian matrix of a vector value function $f: x \in \mathbb{R}^n \to \mathbb{R}^m $ is an $m \times n$ matrix, i.e. $\nabla_x f(x) \in \mathbb{R}^{n \times m}$. When the function has only one variable, we may omit the subscript in the gradient. For example, we may use $\nabla f(x)$ to denote $\nabla_x f(x)$. For functions of several variables, we use the subscript in the gradient to denote the partial derivative with respect to a subset of variables. For example, we will use $\nabla_x \mu(X_t^\theta, \theta)$ to denote $\nabla_x \mu(x, \theta) \Big|_{x = X_t^\theta}$. Therefore, the variables have the following dimensions:
$$
\tilde X_t \in \mathbb{R}^{d \times \ell}, \quad \nabla_x \mu \in \mathbb{R}^{d \times d}, \quad \nabla_\theta \mu \in \mathbb{R}^{ d \times \ell}, \quad \nabla_x \sigma \in \mathbb{R}^{d \times d \times d}, \quad \nabla_\theta \sigma \in \mathbb{R}^{d \times d \times \ell}.
$$
Let $\tilde X^i_t$ denote the i-th row of $\tilde X_t$ and then the dynamics of $\tilde X_t$ in \eqref{nonlinear update} are:
$$
d \tilde X^i_t = \left( \nabla_x \mu_i( X_t, \theta_t )\tilde X_t + \nabla_\theta \mu_i(X_t, \theta_t) \right) dt + \sum_{j=1}^d \left( \nabla_x \sigma_{i, j} (X_t,\theta_t)\tilde X_t + \nabla_\theta \sigma_{i, j}(X_t, \theta_t) \right) dW^j_t.
$$
In \eqref{nonlinear update}, $\bar X_t$ and $X_t$ have the same dynamics, although they are driven by independent Brownian motions. The role of $\bar X_t$ will be explained in detail later in this section. The learning rate $\alpha_t$ in \eqref{nonlinear update} must be chosen such that $\int_0^{\infty} \alpha_s ds = \infty$ and $\int_0^{\infty} \alpha_s^2 ds < \infty$. (An example is $\alpha_t = \frac{C}{1 + t}$.) $\tilde X_t$ estimates the derivative of $X_t$ with respect to $\theta_t$. The parameter $\theta_t$ is continuously updated using $\left(f(\bar X_t) - \beta \right) \left( \nabla f(X_t) \tilde X_t \right)^\top$ as a stochastic estimate for $\nabla_{\theta} J(\theta_t)$. Deterministic gradient descent in continuous-time is often referred to as a ``gradient flow"; therefore, the proposed algorithm can be viewed as a ``stochastic gradient flow".

To better understand the algorithm (\ref{nonlinear update}), let us first re-write the gradient of the objective function using the ergodicity of $X_t^{\theta}$:
\begin{eqnarray}
\label{gradient}
\nabla_\theta J(\theta) &=&  2 \left( \e_{Y \sim \pi_\theta} f(Y) - \beta \right) \nabla_\theta \e_{Y \sim\pi_\theta} f(Y) \notag \\
&\overset{a.s.}=& 2\left( \lim_{T \to \infty}\frac{1}{T} \int_0^T f(X^{\theta}_t) dt- \beta \right) \cdot \nabla_\theta \left( \lim_{T \to \infty}\frac1T \int_0^T f(X^\theta_t)dt \right).
\end{eqnarray}
If the derivative and the limit can be interchanged, the gradient can be expressed as
\begin{eqnarray}
\nabla_\theta J(\theta)= 2\left(\lim_{T \to \infty}\frac{1}{T} \int_0^T f(X^{\theta}_t) dt - \beta \right) \cdot \lim_{T \to \infty} \frac1T \int_0^T \nabla f(X^\theta_t) \nabla_\theta X_t^\theta dt.
\end{eqnarray}
Define $\tilde X_t^{\theta} = \nabla_\theta X_t^\theta$ and, under mild regularity conditions for the coefficients (see for example \cite{rockner2021strong, wang2022forward}), $\tilde X_t^{\theta}$ will satisfy
\beq
\label{tilde X theta}
d \tilde X_t^{\theta} = \left( \nabla_x \mu( X_t^{\theta}, \theta ) \tilde X_t^{\theta} + \nabla_\theta \mu(X_t^{\theta}, \theta) \right) dt + \left( \nabla_x \sigma(X_t^{\theta},\theta) \tilde X_t^{\theta} + \nabla_\theta \sigma(X_t^{\theta}, \theta) \right) dW_t.
\eeq
Note that $\tilde X_t$ and $\tilde X_t^{\theta}$ satisfy the same equations, except $\theta$ is a fixed constant for $\tilde X_t^{\theta}$ while $\theta_t$ is updated continuously in time for $\tilde X_t$. Then, we have that

\begin{eqnarray}
\nabla_\theta J(\theta)= 2\left( \lim_{T \to \infty}\frac{1}{T} \int_0^T f(X^{\theta}_t ) dt - \beta \right) \cdot \lim_{T \to \infty} \frac1T \int_0^T \nabla f(X^\theta_t) \tilde X_t^{\theta} dt.
\label{IntroJ00}
\end{eqnarray}

The formula (\ref{IntroJ00}) can be used to evaluate $\nabla_{\theta} J(\theta)$ and thus allows for optimization via a gradient descent algorithm. However, as highlighted in Section \ref{ExistingMethods}, $X_t^{\theta}$ must be simulated for a large time period $[0,T]$ for each optimization iteration, which is computationally costly. A natural alternative is to develop a \emph{continuous-time} stochastic gradient descent algorithm which updates $\theta$ using a stochastic estimate $G(\theta_t)$ for $\nabla_\theta J(\theta_t)$, where $G(\theta_t)$ asymptotically converges to an unbiased estimate for the direction of steepest descent $\nabla_{\theta} J(\theta_t)$. (The random variable $G(\theta_t)$ is called an unbiased estimate for $\nabla_{\theta} J(\theta_t)$ if $\e[ G(\theta_t) | \theta_t ] = \nabla_{\theta} J(\theta_t)$.)  The online algorithm (\ref{nonlinear update}) does exactly this using $G(\theta_t) = 2 \left(f(\bar X_t) - \beta \right) \nabla f(X_t) \tilde X_t$ as a stochastic estimate for $\nabla_{\theta} J(\theta_t)$. 

For large $t$, we expect that $\e\left[ f(\bar X_t) - \beta \right] \approx \e_{Y \sim \pi_{\theta_t}}\left[ f(Y) - \beta \right]$ and $\e \left[ \nabla f(X_t) \tilde X_t \right] \approx \nabla_\theta \left( \e_{Y \sim \pi_{\theta_t}}\left[ f(X) - \beta \right] \right)$ since $\theta_t$ is changing very slowly as $t$ becomes large due to $\displaystyle \lim_{t \rightarrow \infty} \alpha_t = 0$. Here we highlight that for random variables $X$ and $Y$, it is not typically true that $\e[ X Y ] = \e X \cdot \e Y$ unless $X$ and $Y$ are independent. This is the reason why the process $\bar X_t$ is introduced. Since $\bar X_t$ and $X_t$ are driven by independent Brownian motions, we expect that $\e \left[ 2 \left(f(\bar X_t) - \beta \right) \nabla f(X_t) \tilde X_t  \right] \approx \nabla_{\theta} J(\theta_t)$ for large $t$ due to $\bar X_t$ and $(X_t, \tilde X_t)$ becoming asymptotically independent since $\theta_t$ will be changing very slowly for large $t$. Thus, we expect that for large $t$, the stochastic sample $G(\theta_t) = 2 \left(f(\bar X_t) - \beta \right) \nabla f(X_t) \tilde X_t$ will provide an asymptotically unbiased estimate for the direction of steepest descent $\nabla_{\theta} J(\theta_t)$ and $\norm{\nabla_{\theta} J(\theta_t)}$ will converge to zero as $t \rightarrow \infty$.

\subsection{Contributions of this Paper}

\hspace{1.4em} We rigorously prove the convergence of the algorithm (\ref{nonlinear update}) when $\mu(\cdot)$ is linear and for constant $\sigma$. Even in the linear case, the distribution of $(X_t, \bar X_t, \tilde X_t, \theta_t)$ will be non-Gaussian and convergence analysis is non-trivial. Unlike in the traditional stochastic gradient descent algorithm, the data is not i.i.d. (i.e., $X_t$ is correlated with $X_s$ for $s \neq t$) and, for a finite time $t$, the stochastic update direction $G(\theta_t)$ is not an unbiased estimate of $\nabla_{\theta} J(\theta_t)$. One must show that asymptotically $G(\theta_t)$ becomes an unbiased estimate of the direction of steepest descent $\nabla_{\theta} J(\theta_t)$. Furthermore, it must be proven that the stochastic fluctuations of $G(\theta_t)$ around the direction of steepest descent vanish in an appropriate way as $t \rightarrow \infty$.

The proof therefore requires analysis of the fluctuations of the stochastic update direction $G(\theta_t)$ around $\nabla_{\theta} J(\theta_t)$. Bounds on the fluctuations are challenging to obtain due to the online nature of the algorithm. The stationary distribution $\pi_{\theta_t}$ will continuously change as the parameters $\theta_t$ evolve. We prove bounds on a new class of Poisson partial differential equations, which are then used to analyze the parameter fluctuations in the algorithm. The fluctuations are re-written in terms of the solution to the Poisson PDE using Ito's Lemma, the PDE solution bounds are subsequently applied, and then we can show asymptotically that the fluctuations vanish. Our main theorem proves for the multi-dimensional Ornstein-Uhlenbeck process that:
\begin{eqnarray}
\lim_{t \rightarrow \infty} \left| \nabla_{\theta} J(\theta_t) \right| \overset{a.s.} = 0.
\end{eqnarray}

In the numerical section of this paper, we evaluate the performance of our online algorithm (\ref{nonlinear update}) for a variety of linear and nonlinear examples. In these examples, we show that the algorithm can also perform well in practice for nonlinear SDEs. We also demonstrate that the online algorithm can optimize over path-dependent SDEs and pathwise statistics of SDEs such as the auto-covariance. In addition, we also demonstrate the applications of the online optimization algorithm to mathematical finance problems, such as SDE model calibration, parameter estimation for partially-observed SDEs, high dimensional stochastic control problems, and limit order book models.

\subsection{Literature Review}

\hspace{1.4em} In this paper we show that, if $\alpha_{t}$ is appropriately chosen, then $\nabla_\theta J(\theta_t) \rightarrow 0$ as $t \rightarrow \infty$ with probability 1. Similar results have been previously proven for stochastic gradient descent (SGD) in discrete time. \cite{bertsekas2000gradient} proves the convergence of SGD with i.i.d. data samples. \cite{benveniste2012adaptive} proves the convergence of SGD in discrete time with the correlated data samples under stronger conditions than \cite{bertsekas2000gradient}. We refer readers to \cite{benveniste2012adaptive, bertsekas2000gradient, bottou2018optimization, goodfellow2016deep, kushner2003stochastic} for a thorough review of the very large literature on SGD and similar stochastic optimization algorithms (e.g., SGD with momentum, Adagrad, ADAM, and RMSprop). However, these articles do not study stochastic gradient descent methods for optimizing over the stationary distribution of stochastic models, which is the focus of our paper. 

Recent articles such as \cite{bhudisaksang2021online,  sharrock2020two, sirignano2017stochastic, sirignano2020stochastic, surace2018online} have studied continuous-time stochastic gradient descent. \cite{sirignano2017stochastic} proposed a ``stochastic gradient descent in continuous time" (SGDCT) algorithm for estimating parameters $\theta$ in an SDE $X_t^{\theta}$ from continuous observations of $X_t^{\theta^{\ast}}$ where $\theta^{\ast}$ is the true parameter. \cite{sirignano2017stochastic} proves convergence of the algorithm to a stationary point. \cite{bhudisaksang2021online} extended SGDCT to estimate the drift parameter of a continuous-time jump-diffusion process. \cite{sirignano2020stochastic} analyzed proved a central limit theorem for the SGDCT algorithm and a convergence rate for strongly convex objective functions. \cite{sharrock2020two} established the almost sure convergence of two-timescale stochastic gradient descent algorithms in continuous time.  \cite{surace2018online} designed an online learning algorithm for estimating the parameters
of a partially observed diffusion process and studied its convergence.  \cite{sharrock2021parameter} proposes an online estimator for the parameters of
the McKean-Vlasov SDE and proves that this estimator converges in $L_1$ to the stationary points of the asymptotic log-likelihood.

Our paper has several important differences as compared to \cite{bhudisaksang2021online, sharrock2020two,  sharrock2021parameter, sirignano2017stochastic, sirignano2020stochastic, surace2018online}. These previous papers estimate the parameter $\theta$ for the SDE $X_t^{\theta}$ from observations of $X_t^{\theta^{\ast}}$ where $\theta^{\ast}$ is the true parameter. In this paper, our goal is to select $\theta$ such that the stationary distribution of $X_t^{\theta}$ matches certain target statistics. Therefore, unlike the previous papers, we are directly optimizing over the stationary distribution of $X_t^{\theta}$. The presence of the $X$ process in SGDCT makes the mathematical analysis challenging as the $X$ term introduces correlation across times, and this correlation does not disappear as time tends to infinity. In order to prove convergence, \cite{sirignano2017stochastic, sirignano2020stochastic} use an appropriate Poisson PDE \cite{gilbarg2015elliptic, pardoux2001poisson, pardoux2003poisson} associated with $X$ to describe the evolution of the parameters for large times and analyze the fluctuations of the parameter around the direction of steepest descent. However, the theoretical results from \cite{pardoux2001poisson, pardoux2003poisson} do not apply to the PDE considered in this paper since the diffusion term in our PDE is not uniformly elliptic. This is a direct result of the process $\tilde X_t$ in (\ref{nonlinear update}), which shares the same Brownian motion with the process $X_t$. In the case of constant $\sigma$, the PDE operator will not be uniformly elliptic and, furthermore, the coefficient for derivatives such as $\frac{\partial^2 }{\partial \tilde x^2}$ is zero. Consequently, we must analyze a new class of Poisson PDEs which is different than the class of Poisson PDEs studied in \cite{pardoux2001poisson, pardoux2003poisson}. We prove there exists a solution to this new class of Poisson PDEs which satisfies polynomial bounds. The polynomial bounds are crucial for analyzing the fluctuations of the parameter evolution in the algorithm (\ref{nonlinear update}).

\subsection{Organization of Paper}

The paper is organized into three main sections.  In Section \ref{main result}, we present the assumptions and the main theorem. Section \ref{detail proof} rigorously proves the convergence of our algorithm for multi-dimensional linear SDEs. Section \ref{numerical experiment} studies the numerical performance of our algorithm for a variety of linear and nonlinear SDEs, including McKean-Vlasov and path-dependent SDEs. Applications of the online optimization algorithm in mathematical finance are discussed, including SDE model calibration, parameter estimation for partially-observed SDE models, stochastic optimal control, and mean-field games. Numerical examples demonstrate how the method can be used to numerically solve high-dimensional stochastic optimal control problems and high-dimensional stochastic models of limit order book events.

\section{Main Result}\label{main result}

\hspace{1.4em} In this section, we rigorously prove convergence of the algorithm \eqref{nonlinear update} for the following multi-dimensional Ornstein–Uhlenbeck process:
\bae
\label{process}
dX^\theta_t &= \left( g(\theta) - h(\theta) X^\theta_t \right)dt + \sigma dW_t, \\
X_0^\theta &= x,
\eae
where $\theta \in \mathbb{R}^\ell$, $g(\theta) \in \mathbb{R}^d$, $h(\theta) \in \mathbb{R}^{d \times d}_+$, $W_t \in \mathbb{R}^d$, $X^\theta_t \in \mathbb{R}^d$, and $\sigma$ is a scalar constant. Since $h(\theta)$ is positive definite, the solution to the SDE (\ref{process}) is
\beq
\label{solution}
X^\theta_t = e^{-h(\theta) t} x + \left(h(\theta)\right)^{-1}\left(I_d- e^{-h(\theta)t}\right)g(\theta) + e^{-h(\theta)t} \int_0^t e^{h(\theta)s} \sigma dW_s,
\eeq
where $I_d$ is the $d\times d$ identity matrix. Let $\pi_\theta$ be the stationary distribution of $X_t^\theta$. ($\pi_{\theta}$ exists and is unique; for example, see \cite{Pavliotis}.) Our goal is to solve the optimization problem 
\beq
\label{object}
\min\limits_{\theta} J(\theta) = \min\limits_{\theta}  \left( \e_{Y \sim \pi_\theta}f(Y) - \beta \right)^2,
\eeq
where $\beta$ is a constant. To solve \eqref{object}, our online algorithm \eqref{nonlinear update} becomes:
\bae
\label{update}
\frac{d\theta_t}{dt} &= -2\alpha_t \left(f(\bar X_t) - \beta \right)  \nabla f(X_t) \tilde X_t, \\
dX_t &= ( g(\theta_t) - h(\theta_t) X_t ) dt + \sigma d W_t, \\
\frac{d \tilde X_t}{dt} &= \nabla_\theta g(\theta_t) - \nabla_\theta h(\theta_t) X_t - h(\theta_t) \tilde X_t,\\
d\bar X_t &= ( g(\theta_t) - h(\theta_t) \bar X_t ) dt + \sigma d \bar W_t,
\eae
where $W_t$ and $\bar W_t$ are independent Brownian motions, $\nabla_\theta g(\theta_t) \in \mathbb{R}^{d \times \ell}, \  \nabla_\theta h(\theta_t) \in \mathbb{R}^{d \times d \times \ell}$ and $\tilde X_t \in \mathbb{R}^{d \times \ell}$ is the gradient process for $X_t$. The element $(i,j)$ of the process $\tilde{X}_t$ satisfies:
\beq
\frac{d}{dt} \tilde{X}_t^{i, j} = \frac{\partial g_i(\theta_t)}{\partial \theta_j} - \sum\limits_{k=1}^d \frac{\partial h_{ik}(\theta_t) }{\partial \theta_j} X_t^{k} - \sum\limits_{k=1}^d h_{ik}(\theta_t) \tilde X_t^{k, j}, \quad i \in \{1,2,\cdots, d\}, \quad j \in \{1,2,\cdots, \ell\}.
\eeq

For the rest of this article, we will use $C, C_k, C_p$ to denote generic constants. Our convergence theorem will require the following assumptions.

\begin{assumption}
\label{condition}
\begin{itemize}
\item[(1)] $g(\theta)$, $\nabla^i_\theta g(\theta)$, $h(\theta)$ and $\nabla^i_\theta h(\theta)$ are uniformly bounded functions for $i=1,2$. 
\item[(2)] $h$ is symmetric and uniformly positive definite, i.e. there exists a constant $c>0$ such that 
$$ 
\min\left\{ x^\top h(\theta) x \right\} \ge c|x|^2, \quad \forall \theta \in \mathbb{R}^\ell, x\in \mathbb{R}^d.
$$
\item[(3)] $f, \nabla^i f, i=1,2,3$ are polynomially bounded\footnote{$|\cdot|$ denotes the Euclidean norm. Sometimes for a square matrix $x$, $|x|$ will be used to denote its spectral norm which is equivalent to the Euclidean norm.}:
\beq
\label{poly}
|f(x)| + \sum\limits_{i=1}^3|\nabla^i f(x)| \le C(1+|x|^{\hat{m}}),\quad \forall x\in \mathbb{R}^d 
\eeq
for some constant $C, \hat{m}>0$. 
\item[(4)] The learning rate $\alpha_t$ satisfies $\int_{0}^{\infty} \alpha_{t} d t=\infty$, $\int_{0}^{\infty} \alpha_{t}^{2} d t<\infty$, $\int_{0}^{\infty}\left|\alpha_{s}^{\prime}\right| d s<\infty$, and there is a $\hat{p}>0$ such that $\displaystyle \lim _{t \rightarrow \infty} \alpha_{t}^{2} t^{\frac12 + 2 \hat{p}}=0$.
\end{itemize} 
\end{assumption}

Under these assumptions, we are able to prove the following convergence result.

\begin{theorem}
\label{conv f}
Under Assumption \ref{condition} and for the Ornstein–Uhlenbeck process \eqref{process}, the algorithm \eqref{update} will converge to a stationary point almost surely:
\beq
\lim_{t \rightarrow \infty} \left| \nabla_\theta J(\theta_t) \right|  \overset{a.s.} =  0. 
\eeq
\end{theorem}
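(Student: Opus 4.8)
The plan is to treat this as a continuous-time stochastic approximation problem and run a Lyapunov argument with $J(\theta_t)$ itself as the Lyapunov function. Since the $\theta$-dynamics in \eqref{update} has finite variation (it is an ODE driven by the noisy observable, with no direct Brownian term), the chain rule gives
$$\frac{dJ(\theta_t)}{dt} = \nabla_\theta J(\theta_t)^\top \frac{d\theta_t}{dt} = -\alpha_t \nabla_\theta J(\theta_t)^\top G(\theta_t),$$
where $G(\theta_t) = 2(f(\bar X_t)-\beta)\tilde X_t^\top \nabla_x f(X_t)$. I would add and subtract $\nabla_\theta J(\theta_t)$ to split this into a strict descent term $-\alpha_t|\nabla_\theta J(\theta_t)|^2$ and a fluctuation term $-\alpha_t\nabla_\theta J(\theta_t)^\top(G(\theta_t)-\nabla_\theta J(\theta_t))$. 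Integrating and using $J\ge 0$, the theorem reduces to showing that the cumulative fluctuation $\int_0^t \alpha_s \nabla_\theta J(\theta_s)^\top (G(\theta_s)-\nabla_\theta J(\theta_s))\,ds$ converges almost surely as $t\to\infty$; this forces $\int_0^\infty \alpha_s|\nabla_\theta J(\theta_s)|^2\,ds<\infty$.

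First I would establish uniform-in-time polynomial moment bounds. From the explicit representations \eqref{solution} and \eqref{tilde} together with Assumption \ref{condition}(1)--(2), the contraction $|e^{-h(\theta)t}|\le e^{-ct}$ yields $\sup_{t}\e|X_t|^k<\infty$, $\sup_t \e|\bar X_t|^k<\infty$, and $\sup_t\e|\tilde X_t|^k<\infty$ for every $k$. Combined with the polynomial growth of $f,\nabla f$ in Assumption \ref{condition}(3), this controls all moments of $G(\theta_t)$ and of $\nabla_\theta J(\theta_t)$, and is what makes the growth condition $\lim_t\alpha_t^2 t^{1/2+2p}=0$ usable.

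The heart of the argument, and the main obstacle, is controlling the fluctuation. Because $G(\theta_t)$ is not conditionally unbiased at finite time (the joint process $(X_t,\bar X_t,\tilde X_t)$ has not reached stationarity and carries long-range temporal correlation), I would represent the fluctuation through a Poisson equation. Freezing $\theta$, let $\mathcal{L}^\theta$ be the generator of $(X,\bar X,\tilde X)$ and seek $v(x,\bar x,\tilde x,\theta)$ solving $\mathcal{L}^\theta v = G - \nabla_\theta J(\theta)$, formally $v=-\int_0^\infty \e[G(Z_s)-\nabla_\theta J(\theta)]\,ds$. The obstruction flagged in the introduction is that $\mathcal{L}^\theta$ is \emph{not} uniformly elliptic: since $\sigma$ is constant, $\tilde X_t$ has no diffusion term and the coefficient of $\partial^2/\partial\tilde x^2$ vanishes, so the Pardoux--Veretennikov theory of \cite{pardoux2001poisson,pardoux2003poisson} does not apply. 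Here I would exploit the linear--Gaussian structure: for frozen $\theta$ the pair $(X_s,\tilde X_s)$ is jointly Gaussian with covariances decaying like $e^{-cs}$, so the resolvent integral converges and yields a $v$ obeying polynomial bounds in $(x,\bar x,\tilde x)$ with controlled $\theta$-derivatives. Applying Ito's lemma to $v(X_t,\bar X_t,\tilde X_t,\theta_t)$ rewrites $\int_0^t(G(\theta_s)-\nabla_\theta J(\theta_s))\,ds$ as a boundary term $v(Z_t)-v(Z_0)$, a stochastic integral, and a correction $\int_0^t\partial_\theta v(Z_s)\,d\theta_s$; pairing with the slowly varying prefactor $\alpha_s\nabla_\theta J(\theta_s)^\top$ and integrating by parts then produces the cumulative fluctuation as a boundary term of order $\alpha_t|v(Z_t)|$, a martingale, and corrections driven by $\alpha_s'$, by $\tfrac{d}{ds}\nabla_\theta J(\theta_s)=O(\alpha_s)$, and by $d\theta_s=-\alpha_s G\,ds$. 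The boundary term vanishes almost surely by the growth condition together with the moment bounds; the martingale converges almost surely since its quadratic variation is integrable by $\int_0^\infty\alpha_s^2\,ds<\infty$ (Burkholder--Davis--Gundy plus $L^2$ martingale convergence); and the corrections are summable using $\int_0^\infty|\alpha_s'|\,ds<\infty$ and $\int_0^\infty\alpha_s^2\,ds<\infty$.

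Finally, with $\int_0^\infty \alpha_s|\nabla_\theta J(\theta_s)|^2\,ds<\infty$ almost surely and $\int_0^\infty\alpha_s\,ds=\infty$, the integral condition immediately gives $\liminf_{t\to\infty}|\nabla_\theta J(\theta_t)|=0$. To upgrade $\liminf$ to $\lim$ I would bound the oscillation of $t\mapsto|\nabla_\theta J(\theta_t)|^2$: since $\tfrac{d}{dt}|\nabla_\theta J(\theta_t)|^2$ is controlled by $\alpha_t$ times a polynomially bounded quantity of finite moments, $|\nabla_\theta J(\theta_t)|$ cannot make infinitely many up-crossings of a fixed interval $[\delta,2\delta]$ without forcing $\int_0^\infty\alpha_s|\nabla_\theta J(\theta_s)|^2\,ds=\infty$, a contradiction. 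A standard crossing argument then yields $\limsup_{t\to\infty}|\nabla_\theta J(\theta_t)|=0$, proving the theorem.
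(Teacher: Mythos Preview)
Your proposal is correct and follows essentially the same route as the paper: both identify the degenerate Poisson equation for the frozen generator of $(X,\bar X,\tilde X)$ as the key technical device, exploit the explicit linear--Gaussian structure to obtain polynomial bounds on the Poisson solution and its derivatives despite non-ellipticity, and combine this with It\^o's formula and the moment bounds to kill the fluctuations. The only differences are packaging: (i) the paper absorbs the prefactor $\nabla_\theta J(\theta)$ directly into the right-hand side of a \emph{scalar} Poisson equation (Lemma~\ref{decreasing f}) and applies It\^o to $\alpha_t v$, whereas you solve the vector equation and then integrate by parts against $\alpha_s\nabla_\theta J(\theta_s)$; (ii) the paper runs the Bertsekas--Tsitsiklis cycle-of-stopping-times argument (Lemmas~\ref{estimation f}--\ref{increasing f}) in place of your ``$\int_0^\infty\alpha_s|\nabla_\theta J|^2\,ds<\infty$ plus up-crossing'' argument. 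These are equivalent formulations of the same stochastic-approximation scheme, and in fact the paper's stopping-time cycle is precisely a version of your crossing argument. One small point: to make your crossing step airtight you need convergence of the \emph{vector} fluctuation $\int_0^t\alpha_s(G(\theta_s)-\nabla_\theta J(\theta_s))\,ds$, not just its projection onto $\nabla_\theta J(\theta_s)$; this falls out of applying It\^o to $\alpha_t v$ rather than integrating by parts, which is the cleaner route and what the paper does.
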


\section{Proof of Theorem \ref{conv f}}\label{detail proof}

\hspace{1.4em} In this section, we present the proof of Theorem \ref{conv f}.  We begin by decomposing the evolution of $\theta_t$ in \eqref{update} into several terms:
\begin{eqnarray}
\label{gradient with error}
\frac{d\theta_t}{dt} &=&  -2\alpha_t ( f(\bar X_t) - \beta ) \left( \nabla f(X_t) \tilde X_t\right)^\top \notag \\
&=& -2\alpha_t (\e_{Y \sim \pi_{\theta_t}}f(Y) - \beta) \left(\nabla f(X_t) \tilde X_t\right)^\top - 2\alpha_t \left( f(\bar X_t) - \e_{Y \sim \pi_{\theta_t}}f(Y) \right)  \left(\nabla f(X_t) \tilde X_t\right)^\top \notag \\
&=&  \underbrace{-\alpha_t \nabla_\theta J(\theta_t)}_{\textrm{Direction of Steepest Descent}} - \underbrace{ 2\alpha_t (\e_{Y \sim \pi_{\theta_t}}f(Y)-\beta) \left( \nabla f(X_t) \tilde X_t - \nabla_\theta \e_{Y \sim \pi_{\theta_t}}f(Y) \right)^\top }_{\textrm{Fluctuation term $1$}} \notag \\
&-& \underbrace{2\alpha_t \left( f(\bar X_t) - \e_{Y \sim \pi_{\theta_t}}f(Y) \right)  \left(\nabla f(X_t)\tilde X_t\right)^\top }_{\textrm{Fluctuation term $2$}}.
\end{eqnarray}
Define the error terms
\bae
\label{error}
Z_t^1 &= (\e_{Y \sim \pi_{\theta_t}}f(Y)-\beta) \left( \nabla f(X_t) \tilde X_t -  \nabla_\theta \e_{Y \sim \pi_{\theta_t}}f(Y) \right)^\top, \\
Z_t^2 &= \left( f(\bar X_t) - \e_{Y \sim \pi_{\theta_t}}f(Y) \right)  \left(\nabla f(X_t) \tilde X_t \right)^\top.
\eae
We have therefore decomposed the evolution of $\theta_t$ into the direction of steepest descent $-\alpha_t \nabla_\theta J(\theta_t)$ and the two fluctuation terms $2\alpha_t Z_t^1$ and $2\alpha_t Z_t^2$.

As in \cite{sirignano2017stochastic}, we study a cycle of stopping times to control the time periods where $|\nabla_\theta J(\theta_t)|$ is close to zero and away from zero. Let us select an arbitrary constant $\kappa>0$ and also define $\mu=\mu(\kappa)>0$ (to be chosen later). Then set $\sigma_{0}=0$ and define the cycles of random times
$$
0=\sigma_{0} \leq \tau_{1} \leq \sigma_{1} \leq \tau_{2} \leq \sigma_{2} \leq \ldots,
$$
where for $k=1,2, \ldots$
\bae
\label{cycle of time}
&\tau_{k}=\inf \left\{t>\sigma_{k-1}:\left|\nabla_\theta  J\left(\theta_{t}\right)\right| \geq \kappa\right\} \\
&\sigma_{k}=\sup \left\{t>\tau_{k}: \frac{\left|\nabla_\theta J\left(\theta_{\tau_{k}}\right)\right|}{2} \leq\left|\nabla_\theta J\left(\theta_{s}\right)\right| \leq 2\left|\nabla_\theta J\left(\theta_{\tau_{k}}\right)\right| \text { for all } s \in\left[\tau_{k}, t\right] \text { and } \int_{\tau_{k}}^{t} \alpha_{s} d s \leq \mu \right\}.
\eae 
We define the random time intervals $J_{k}=\left[\sigma_{k-1}, \tau_{k}\right)$ and $I_{k}=\left[\tau_{k}, \sigma_{k}\right)$. We introduce $\eta > 0$ which will be chosen to be sufficiently small later. We first seek to control
\beq
\label{error integral}
\Delta^i_{\tau_k,\sigma_k + \eta} := \int_{\tau_k}^{\sigma_k + \eta} \alpha_s Z^i_s ds, \quad i =1,2
\eeq
and, as in \cite{sirignano2017stochastic}, we will use a Poisson equation to bound the online fluctuation terms $\Delta^{i}_{\tau_k,\sigma_k + \eta}$ where the ergodic properties of $X_t^\theta$ will be leveraged in the analysis.

In this paper, we focus on the Ornstein–Uhlenbeck process \eqref{process}. As in \eqref{tilde X theta}, its gradient process $\tilde X^\theta_t := \nabla_\theta X^\theta_t = \left( \frac{\partial X_t^{\theta,i}}{\partial \theta_j}\right)_{i,j} \in \mathbb{R}^{d \times \ell}$ now satisfies the SDE:
\beq
\label{tilde theta}
\frac{d \tilde X^\theta_t}{dt} = \nabla_\theta g(\theta) - \nabla_\theta h(\theta) X_t^\theta - h(\theta) \tilde X^\theta_t,
\eeq
which can be equivalently written as
\beq
\frac{d}{dt} \frac{\partial X_t^{\theta,i}}{\partial \theta_j} = \frac{\partial g_i(\theta)}{\partial \theta_j} - \sum\limits_{k=1}^d \frac{\partial h_{ik}(\theta) }{\partial \theta_j} X_t^{\theta, k} - \sum\limits_{k=1}^d h_{ik}(\theta) \frac{\partial X_t^{\theta,k}}{\partial \theta_j},
\eeq
for $i \in \{1,2,\cdots, d\}$ and $j \in \{1,2,\cdots, \ell\}$. Thus, we know the solution of \eqref{tilde theta} with initial point $\tilde x$ is 
\beq
\label{tilde}
\tilde X^\theta_t = e^{-h(\theta)t} \tilde x + e^{-h(\theta)t} \int_0^t e^{h(\theta)s} \left(  \nabla_\theta g(\theta) - \nabla_\theta h(\theta) X_s^\theta \right)ds.
\eeq
The independent Ornstein–Uhlenbeck process used to obtain the asymptotic unbiased gradient is
\bae
\label{independent process}
d\bar X^\theta_t &= ( g(\theta) - h(\theta) \bar X^\theta_t )dt + \sigma d\bar W_t,\\
\bar X_0^\theta &= \bar x,
\eae
where $\bar W_t$ is another Brownian motion independent of $W_t$. For the processes $X_t^\theta, \tilde X_t^\theta, \bar X_t^\theta$ in \eqref{process}, \eqref{tilde theta}, and \eqref{independent process}, we can prove the following convergence results.

\begin{proposition}
\label{ergodic estimation}
Let $p_t(x,x',\theta)$ and $p_\infty(x', \theta)$ denote the transition probability and invariant density of the multi-dimensional Ornstein–Uhlenbeck process \eqref{process}. Under Assumption \ref{condition}, we have the following ergodic result:
\begin{itemize}
\item[(\romannumeral1)] For any $m>0$, there exists a constant $C=C(m)$ such that
\beq
\label{invariant density}
\left|\nabla_\theta^i p_{\infty}\left(x^{\prime}, \theta\right)\right| \leq \frac{C}{1+\left|x^{\prime}\right|^m}, \quad i = 0, 1, 2.
\eeq
\item[(\romannumeral2)] For any $m', k$ there exist constants $C,m$ such that for any $t>1$
\beq
\label{x prime decay}
\left| \nabla_\theta^i p_{t}\left(x, x^{\prime}, \theta\right) - \nabla_\theta^i p_{\infty}\left(x^{\prime}, \theta\right)\right| \leq \frac{C\left(1+|x|^{m}\right)}{\left(1+\left|x^{\prime}\right|^{m^{\prime}}\right)(1+t)^{k}}, \quad i=0,1,2.
\eeq
\item[(\romannumeral3)] For any $m^{\prime}, k$ there exist constants $C, m$ such that for any $t>1$
\beq
\label{x decay}
\left|\nabla_{x}^{j} \nabla_\theta^i p_{t}\left(x, x^{\prime}, \theta\right)\right| \leq \frac{C\left(1+|x|^{m}\right)}{\left(1+\left|x^{\prime}\right|^{m^{\prime}}\right)(1+t)^{k}}, \quad i = 0, 1, \quad j = 1,2.
\eeq
\begin{comment}{\item[(\romannumeral4)] 
And when $t \in (0,1]$ we have 
\beq
\label{x theta finite time}
\left|\nabla_{x}^{j} \nabla_\theta^i p_{t}\left(x, x^{\prime}, \theta\right)\right| \leq C t^{-\frac{d+j}{2}} \exp \left(-c\frac{\left|x-x^{\prime}\right|^{2}}{t}\right), \quad i = 0, 1, 2, \quad j = 0, 1.
\eeq
\end{comment}
\item[(\romannumeral4)] For any $m>0$, there exists a constant $C=C(m)$ such that for any $t\ge 0$
\beq
\label{normal bound}
\e_x \left| X_t^\theta \right|^m \le C(1+|x|^m), \quad \e_{x, \tilde x} \left| \tilde X_t^\theta \right|^m \le C(1+|x|^m+|\tilde x|^m). 
\eeq
Here $\e_{x}$ denotes that the initial condition for the process $X_t^\theta$ is x, i.e. $X_0^\theta = 0$. $\e_{x, \tilde x}$ denotes that the initial conditions of the processes $(X_t^\theta, \tilde X_t^\theta)$ in \eqref{tilde X theta} are $(x, \tilde x)$, i.e. $X^\theta_0 = x$ and $\tilde X^\theta_0 = \tilde x$.
\item[(\romannumeral5)] For any function $f$ satisfying \eqref{poly}, there exists constants $C, m$ such that for any $t\in[0, 1]$
\beq
\label{expectation bound}
\left| \nabla^j_x \nabla^i_\theta \e_x f(X_t^\theta) \right| \le C(1+|x|^m), \quad i = 0,1, \quad j=0,1,2.
\eeq
\end{itemize}
\end{proposition} 

\begin{remark}
Proposition \ref{ergodic estimation} is similar to Theorem $1$ in \cite{pardoux2003poisson}. However, the assumption of uniform boundedness in \cite{pardoux2001poisson} does not hold for the multi-dimensional Ornstein–Uhlenbeck process \eqref{process}. Thus we give a brief proof by direct calculations in Section \ref{ergodic appendix}.
\end{remark}

We must analyze the fluctuation terms $Z_t^1$ and $Z_t^2$. In order to do this, we prove a polynomially-bounded solution exists to a new class of Poisson PDEs. The polynomial bound is in the spatial coordinates and, importantly, the bound is uniform in the parameter $\theta$. A Poisson PDE was also used in \cite{sirignano2017stochastic}. However, several key innovations are required for the online optimization algorithm \eqref{update} that we consider in this paper. Unlike in \cite{sirignano2017stochastic}, $\tilde X_t^\theta$ in \eqref{tilde theta} does not have a diffusion term, which means $(X^\theta_t, \tilde X_t^\theta)$ is a degenerate diffusion process and its generator $\mathcal{L}_{x,\tilde x}^\theta$ is not a uniformly elliptic operator. Thus we cannot use the results from \cite{pardoux2001poisson, pardoux2003poisson}. Instead, we must prove existence and bounds for this new class of Poisson PDEs.

\begin{lemma}
\label{poisson eq}
Define the error function 
\beq
\label{function}
G^1(x,\tilde x, \theta) = (\e_{Y \sim \pi_\theta} f(Y)-\beta) \left(\nabla f(x) \tilde x - \nabla_\theta \e_{Y \sim \pi_\theta}f(Y) \right)^\top
\eeq
and
\beq
\label{representation}
v^1(x, \tilde x, \theta) = -\int_0^\infty \e_{x,\tilde x} G^1(X_t^\theta,\tilde X_t^\theta, \theta) dt,
\eeq
where $\e_{x,\tilde x}$ is a conditional expectation given $X^\theta_0 = x$ and $\tilde X^\theta_0 = \tilde x$. Then, under Assumption \ref{condition}, $v^1(x,\tilde x, \theta)$ is the classical solution of the Poisson equation 
\beq
\label{PDE}
\mathcal{L}_{x,\tilde x}^\theta u(x,\tilde x, \theta) = G^1(x,\tilde x, \theta),
\eeq
where $u = (u_1, \ldots, u_\ell)^\top \in \mathbb{R}^\ell$ is a vector,  $\mathcal{L}_{x,\tilde x}^\theta u(x,\tilde x, \theta) = (\mathcal{L}_{x,\tilde x}^\theta u_1(x,\tilde x, \theta), \ldots, \mathcal{L}_{x,\tilde x}^\theta u_{\ell}(x,\tilde x, \theta))^\top$, and $\mathcal{L}^\theta_{x,\tilde x}$ is the infinitesimal generator of the process $(X_\cdot^\theta, \tilde X_\cdot^\theta)$, i.e. for any test function $\varphi$
\beq
\mathcal{L}^\theta_{x,\tilde x} \varphi(x,\tilde x) = \mathcal{L}_x^\theta \varphi(x, \tilde x) + \text{tr}\left( \nabla_{\tilde x}\varphi(x, \tilde x)^\top \left( \nabla_\theta g(\theta) - \nabla_\theta h(\theta) x - h(\theta) \tilde x \right) \right) .
\eeq 
Furthermore, there exist an integer $m'$ and a constant $C = C(m')$ which do not depend upon $(x,\tilde x, \theta)$ such that the solution $v^1$ satisfies the bound
\bae
\label{control v1}
\left| v^1(x, \tilde x, \theta)\right| + \left|\nabla_\theta v^1(x, \tilde x, \theta)\right|+\left|\nabla_x v^1(x, \tilde x, \theta)\right| + \left|\nabla_{\tilde x} v^1(x, \tilde x, \theta)\right| \le C\left(1+ |x|^{m'} + |\tilde x|^{m^{\prime}}\right).
\eae
\end{lemma}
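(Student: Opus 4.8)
The plan is to follow the probabilistic (Poisson-equation) route of \cite{pardoux2003poisson} rather than elliptic PDE theory, since the generator $\mathcal{L}_{x,\tilde x}^\theta$ is degenerate: $\tilde X_t^\theta$ carries no diffusion term, so there is no second-order part in $\tilde x$ and the joint process $(X^\theta,\tilde X^\theta)$ admits no nondegenerate transition density. Accordingly I would take the candidate $v^1$ in \eqref{representation} as given and verify three things in order: (i) the integral defining $v^1$ converges; (ii) $v^1$ is smooth enough ($C^2$ in $x$, $C^1$ in $\tilde x$) and solves \eqref{PDE} classically; and (iii) $v^1$ together with its gradients obeys the polynomial bound \eqref{control v1}.

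For (i) the crucial point is that $G^1$ is centered with respect to the joint stationary law $\mu_\theta$ of $(X^\theta,\tilde X^\theta)$. Since $G^1=(\e_{\pi_\theta}f(Y)-\beta)\big(\tilde x^\top\nabla_x f(x)-\nabla_\theta\e_{\pi_\theta}f(Y)\big)$, centering reduces to the identity $\e_{\mu_\theta}[\tilde X^\top\nabla_x f(X)]=\nabla_\theta\e_{\pi_\theta}f(Y)$, which is exactly the interchange of gradient and ergodic average used in \eqref{IntroJ00} (because $\tilde X^\theta=\nabla_\theta X^\theta$ gives $\tilde X^\top\nabla_x f(X)=\nabla_\theta f(X^\theta)$). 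To obtain an integrable decay rate I would insert the explicit representation \eqref{tilde} for $\tilde X_t^\theta$, splitting it into the initial-data term $e^{-h(\theta)t}\tilde x$, which decays exponentially by Assumption \ref{condition}(2) and absorbs all of the $\tilde x$-growth, and the path integral $e^{-h(\theta)t}\int_0^t e^{h(\theta)s}(\nabla_\theta g-\nabla_\theta h\,X_s^\theta)\,ds$. This reduces $\e_{x,\tilde x}[G^1(X_t^\theta,\tilde X_t^\theta,\theta)]$ to correlations of the $X^\theta$-process at times $s\le t$ against $\nabla_x f(X_t^\theta)$, which decay by the ergodic estimates \eqref{x prime decay}--\eqref{normal bound}. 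The outcome should be a bound $|\e_{x,\tilde x}[G^1(X_t^\theta,\tilde X_t^\theta,\theta)]|\le C(1+|x|^m+|\tilde x|^m)(1+t)^{-k}$ for $k$ as large as desired, so the integral converges.

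For (ii) and (iii) I would differentiate the representation \eqref{representation} under the integral sign; smoothness and the bounds come from the same source. The maps $(x,\tilde x)\mapsto(X_t^\theta,\tilde X_t^\theta)$ are smooth and explicit (indeed linear in $(x,\tilde x)$), and depend on $\theta$ only through $g,h$; combining this with the density estimates \eqref{invariant density}--\eqref{x decay} and the moment bounds \eqref{normal bound} lets me pass $\nabla_x,\nabla_{\tilde x},\nabla_\theta$ through the integral while keeping the differentiated integrands decaying at an integrable rate uniformly in $\theta$. The $\nabla_{\tilde x}$-derivative is easiest, since $\nabla_{\tilde x}\tilde X_t^\theta=e^{-h(\theta)t}$ decays exponentially; the $\nabla_x$- and $\nabla_\theta$-derivatives rely on \eqref{x decay} and \eqref{invariant density}, which is where uniformity in $\theta$ enters. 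This yields \eqref{control v1}. Given the established smoothness, the PDE then follows from the Markov identity $\e_{x,\tilde x}[v^1(X_t^\theta,\tilde X_t^\theta,\theta)]-v^1(x,\tilde x,\theta)=\int_0^t\e_{x,\tilde x}[G^1(X_s^\theta,\tilde X_s^\theta,\theta)]\,ds$, whose $t$-derivative at $t=0$ gives $\mathcal{L}_{x,\tilde x}^\theta v^1=G^1$.

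The main obstacle is handling the degeneracy and the path-dependence of $\tilde X^\theta$ simultaneously. Because $(X^\theta,\tilde X^\theta)$ has no usable transition density, every decay and regularity estimate must be extracted indirectly: the explicit formula \eqref{tilde} expresses each quantity involving $\tilde X_t^\theta$ as a functional of the $X^\theta$-path, after which only the nondegenerate $X^\theta$-estimates of Proposition \ref{ergodic estimation} are needed. The delicate requirement is that every bound — for convergence, for smoothness, and for the gradients — be \emph{uniform in} $\theta$, since this uniformity is what feeds the fluctuation analysis of $\Delta^i_{\tau_k,\sigma_k+\eta}$; preserving it while differentiating in $\theta$, so that $\theta$-derivatives of the law of $X_t^\theta$ also appear, is the step demanding the most care.
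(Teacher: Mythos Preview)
Your plan is correct and follows the same overall strategy as the paper: probabilistic representation, convergence of the integral via the explicit formula \eqref{tilde} for $\tilde X_t^\theta$, differentiation under the integral sign using Proposition \ref{ergodic estimation}, and the Markov/Fubini argument for the PDE identity. One organizational difference is worth noting. You propose to split $\tilde X_t^\theta$ into the initial-data piece $e^{-h(\theta)t}\tilde x$ and the path integral, then treat the latter as producing two-time correlations $\e_x[X_s^\theta\,\nabla_x f(X_t^\theta)]$ to be controlled via ergodic estimates. The paper instead adds and subtracts $\nabla_\theta \e_x f(X_t^\theta)$, yielding the decomposition $v^1=v^{1,1}+v^{1,2}$ with
\[
v^{1,2}(x,\tilde x,\theta)=(\e_{\pi_\theta}f(Y)-\beta)\int_0^\infty\Big(\nabla_\theta \e_x f(X_t^\theta)-\e_{x,\tilde x}(\tilde X_t^\theta)^\top\nabla_x f(X_t^\theta)\Big)\,dt.
\]
The point is that the path-integral part of $\tilde X_t^\theta$ is \emph{exactly} $\tilde X_t^\theta$ started from $\tilde x=0$, and $\e_{x,0}[(\tilde X_t^\theta)^\top\nabla_x f(X_t^\theta)]=\nabla_\theta\e_x f(X_t^\theta)$ (this is the finite-time version of the centering identity you invoke only at the stationary level). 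Hence the integrand of $v^{1,2}$ collapses to $-(e^{-h(\theta)t}\tilde x)^\top\e_x\nabla f(X_t^\theta)$, which decays exponentially and carries all the $\tilde x$-dependence. The remaining piece $v^{1,1}$ involves only $\nabla_\theta\e_{\pi_\theta}f(Y)-\nabla_\theta\e_x f(X_t^\theta)$, i.e.\ one-time marginals of $X^\theta$, and is handled directly by \eqref{x prime decay}. This sidesteps the two-time correlations you anticipate; Proposition \ref{ergodic estimation} as stated gives only one-time density bounds, so your route would require an additional (straightforward, for the OU process) computation of the joint law of $(X_s^\theta,X_t^\theta)$. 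Both arguments work, but the paper's add-and-subtract trick is what keeps everything within the scope of the density estimates already proved.
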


The proof of Lemma \ref{poisson eq} is in Appendix \ref{Poisson appendix}. We will next study the fluctuation terms $Z_t^i$. It will be necessary to prove bounds on the moments of $X_t$ and $\tilde X_t$ in order to analyze the error term $\Delta^i_{\tau_k, \sigma_k + \eta}$. 
\begin{lemma}
\label{moment}
For any $p>0$, there exists a constant $C_p$ that only depends on $p$ such that the processes $X_t, \tilde X_t$ from \eqref{update} satisfy
\beq
\label{moment bound}
\e_x |X_t|^p \le C_p\left( 1 + |x|^p \right), \quad \e_{x,\tilde x} |\tilde X_t|^p \le C_p\left( 1 + |x|^p + |\tilde x|^p \right).
\eeq 
Furthermore, we have the bounds
\bae
\label{uniform moment bound}
\e_x \left( \sup\limits_{0 \le t' \le t} |X_{t'}|^p \right) &= O(\sqrt t) \quad \text{as}\  t \to \infty,\\
\e_{x, \tilde x} \left( \sup\limits_{0 \le t' \le t} |\tilde X_{t'}|^p \right) &= O(\sqrt t) \quad \text{as}\ t \to \infty.\\
\eae
\end{lemma}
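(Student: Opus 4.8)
The engine behind all four estimates is the uniform dissipativity of the drift. Since $h$ is symmetric with $x^\top h(\theta)x \ge c|x|^2$ uniformly in $\theta$ (Assumption \ref{condition}(2)), the time-ordered propagator $\Phi(t,s)$ solving $\partial_t\Phi(t,s) = -h(\theta_t)\Phi(t,s)$ with $\Phi(s,s)=I$ satisfies $\|\Phi(t,s)\| \le e^{-c(t-s)}$; crucially this holds pathwise, uniformly in the random adapted trajectory $\theta_t$, because the contraction rate depends only on the uniform constant $c$. The plan is to combine this contraction with the uniform bounds on $g,h,\nabla_\theta g,\nabla_\theta h$ to reduce everything to elementary Gr\"onwall and Burkholder--Davis--Gundy (BDG) arguments in which no constant ever sees the particular path of $\theta_t$. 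This uniformity is exactly what distinguishes the statement from Proposition \ref{ergodic estimation}(\romannumeral4), where $\theta$ was frozen.

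For the plain moment bound on $X_t$, I would use the Lyapunov function $\phi_p(x)=(1+|x|^2)^{p/2}$, which is smooth for every $p>0$ since $1+|x|^2\ge 1$. Using $x^\top(g(\theta)-h(\theta)x)\le \|g\|_\infty|x|-c|x|^2$ together with Young's inequality, a direct computation gives
\beq
\mathcal{L}_x^{\theta}\phi_p(x) \le -c'\phi_p(x) + C ,
\eeq
the second-order term and the $\|g\|_\infty|x|$ contribution being of strictly lower order than $\phi_p$. After a standard localization making the stochastic integral a genuine martingale, Gr\"onwall applied to $\e_x\phi_p(X_t)$ yields $\e_x\phi_p(X_t)\le e^{-c't}\phi_p(x)+C$, hence $\e_x|X_t|^p\le C_p(1+|x|^p)$. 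For $\tilde X_t$ I would bypass It\^o entirely: variation of constants gives $\tilde X_t = \Phi(t,0)\tilde x + \int_0^t\Phi(t,s)\bigl(\nabla_\theta g(\theta_s)-\nabla_\theta h(\theta_s)X_s\bigr)\,ds$, so $\|\Phi(t,s)\|\le e^{-c(t-s)}$ and Minkowski's integral inequality give $\|\tilde X_t\|_{L^p}\le e^{-ct}|\tilde x| + C\int_0^t e^{-c(t-s)}(1+\|X_s\|_{L^p})\,ds$, which the first bound closes to $\e_{x,\tilde x}|\tilde X_t|^p\le C_p(1+|x|^p+|\tilde x|^p)$.

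The supremum bounds are the only delicate point, and the obstacle is the exponent: the naive estimate $\sup_{t'\le t}\phi_p(X_{t'})\le \phi_p(x)+\int_0^t(\mathcal{L}_x^{\theta}\phi_p)_+\,ds + \sup_{t'}|\text{mart.}|$ bounds the drift integral by $Ct$ and so only gives $O(t)$, not $O(\sqrt t)$. I would instead chop $[0,t]$ into unit intervals and set $A_n=\sup_{t'\in[n,n+1]}\phi_p(X_{t'})$. On a single unit interval the drift contributes only a bounded constant, and BDG together with the moment bounds from the first part give, for \emph{every} polynomial Lyapunov function, $\e\,A_n^q = \e\sup_{t'\in[n,n+1]}\phi_{pq}(X_{t'})\le C_q$ uniformly in $n$. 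Since all polynomial moments are finite, the maximal inequality
\beq
\e_x\sup_{t'\le t}\phi_p(X_{t'}) = \e_x\max_{0\le n\le \lceil t\rceil}A_n \le \Big(\sum_{n=0}^{\lceil t\rceil}\e\,A_n^q\Big)^{1/q}\le \big(C_q(\lceil t\rceil+1)\big)^{1/q} = O(t^{1/q})
\eeq
holds for every $q$, and choosing $q=2$ delivers exactly $O(\sqrt t)$ (indeed $O(t^\varepsilon)$ for any $\varepsilon>0$). The matching bound for $\tilde X_t$ then comes for free from variation of constants: since $\int_0^{t'}e^{-c(t'-s)}(1+|X_s|)\,ds\le c^{-1}(1+\sup_{s\le t}|X_s|)$, one obtains $\sup_{t'\le t}|\tilde X_{t'}|\le |\tilde x|+C(1+\sup_{s\le t}|X_s|)$ pathwise, so raising to the $p$-th power and taking expectations reduces the $\tilde X$ supremum bound to the $X$ supremum bound just proved.

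I expect the main obstacle to be precisely the passage from the crude $O(t)$ drift estimate to the claimed $O(\sqrt t)$; the resolution is the per-unit-interval maximal inequality above, which trades the abundance of finite polynomial moments for an arbitrarily small power of $t$. A secondary technical point worth flagging is the localization argument needed to justify that the It\^o correction terms have vanishing expectation, which is routine once the moment bounds are in hand.
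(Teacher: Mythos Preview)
Your proposal is correct. For the plain moment bounds on $X_t$ and $\tilde X_t$ you take essentially the paper's route: a Lyapunov/It\^o argument using uniform dissipativity for $X_t$, and variation of constants together with the pathwise contraction $\|\Phi(t,s)\|\le e^{-c(t-s)}$ for $\tilde X_t$. The reduction of $\sup_{t'\le t}|\tilde X_{t'}|^p$ to $\sup_{t'\le t}|X_{t'}|^p$ via variation of constants is also exactly what the paper does.

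The genuine difference is in the $O(\sqrt t)$ bound for $\e_x\sup_{t'\le t}|X_{t'}|^p$. The paper's device is a doubling trick: apply It\^o to $|X_t|^{2p}$, use BDG, and obtain $\e_x\sup_{t'\le t}|X_{t'}|^{2p}\le |x|^{2p}+C_p(t+\sqrt t)(1+|x|^{2p-1})=O(t)$, then recover the $p$-th moment by H\"older, $\e_x\sup|X|^p\le(\e_x\sup|X|^{2p})^{1/2}=O(\sqrt t)$. Your block-maximal approach is different and in fact slightly stronger: chopping into unit intervals and using that \emph{all} polynomial moments of the per-block suprema are uniformly bounded (itself a consequence of the plain moment bounds plus BDG on a unit interval) yields $\e_x\sup_{t'\le t}|X_{t'}|^p=O(t^{1/q})$ for every $q\ge 1$, hence $O(t^\varepsilon)$ for any $\varepsilon>0$. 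The paper's argument is shorter and more direct for the stated $O(\sqrt t)$ claim, while yours is more modular and shows the exponent $1/2$ is not sharp; either is perfectly adequate for the subsequent Borel--Cantelli step in Lemma~\ref{fluctuation 1}, which only needs $\alpha_t^2\,\e\sup_{s\le t}|X_s|^{m'}\to 0$.
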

\begin{proof}
By adapting the method in \cite{fang2017adaptive}, we first prove \eqref{moment bound} for $p \geq 2$ and then the result for $0<p<2$ follows from Hölder's inequality. Let $p = 2m$ and applying Itô's formula to $e^{m \alpha  t}\left|X_{t}\right|^{2m}$, we have for any $t \ge 0$,
\bae
\label{p order}
e^{p \alpha t / 2}\left|X_t\right|^{p}-\left|X_{0}\right|^{p} &\leq \int_{0}^{t} p\left(\frac{\alpha}{2}\left|X_{t}\right|^{2}+ \langle X_{s}, g(\theta_s) - h(\theta_s) X_{s} \rangle \right) e^{p \alpha s / 2}\left|X_{s}\right|^{p-2} ds \\
&+\int_{0}^{t} \frac{p(p-1)d}{2} e^{p \alpha s / 2}\left|X_{s}\right|^{p-2} ds + \int_{0}^{t} p e^{p \alpha s / 2}\left|X_{s}\right|^{p-2} \langle X_s, \ dW_{s} \rangle,
\eae
where $\langle a,\  b\rangle := a^T b$. By Assumption \ref{condition}, we know there exists constants $\alpha>0, \beta>d$ such that for any $\theta$
\beq
\label{ergodic}
\langle x, g(\theta) - h(\theta) x \rangle \le -\alpha |x|^2 + \beta.
\eeq
Thus by taking expectations on both sides of \eqref{p order} and using \eqref{ergodic}, we obtain
$$
\e_x \left[ e^{p \alpha t / 2}\left|X_{t}\right|^{p}\right]- \left|x\right|^{p} \leq  \int_{0}^{t} -\frac{p \alpha}{2} \e_x \left[ e^{p \alpha s / 2}\left|X_{s}\right|^{p}\right] ds +\int_{0}^{t} \e_x \left[\frac{p(p+1) \beta}{2} e^{p \alpha s / 2}\left|X_{s}\right|^{p-2}\right] ds.
$$
Young's inequality implies that
$$
\frac{p(p+1) \beta}{2} \mathrm{e}^{p \alpha s / 2}\left|X_{s}\right|^{p-2} \leq \frac{p \alpha}{2} e^{p \alpha s / 2}\left|X_{s}\right|^{p}+c_{p}e^{p \alpha s / 2}
$$
where $c_{p}=\left(\frac{p-2}{p \alpha}\right)^{p / 2-1}(\beta(p+1))^{p / 2}$. Therefore, we obtain
$$
\e_x \left[ e^{p \alpha t / 2}\left|X_{t}\right|^{p}\right]- \left|x\right|^{p} \leq \int_{0}^{t} c_{p} \mathrm{e}^{p \alpha s / 2} ds
$$
and
$$
\e_x \left|X_{t}\right|^{p} \leq \frac{2 c_{p}}{p \alpha} + e^{-p \alpha t / 2} \left|x\right|^{p} \leq C_{p}\left( 1 + |x|^p \right).
$$

Using the moment bound for $X_t$, we can derive the moment bound for $\tilde X_t$. From \eqref{tilde} and \eqref{update} we know 
\beq
\label{tilde x}
\tilde X_t = e^{-\int_0^th(\theta_u)du}\tilde X_0 + \int_0^t e^{-\int_s^t h(\theta_u)du} \left( \nabla_\theta g(\theta_s) - \nabla_\theta h(\theta_s)X_s \right) ds
\eeq
and thus 
\bae
\label{tilde bound}
\e_{x,\tilde x}|\tilde X_t|^p &\le 2|\tilde x|^p + 2\e_{x, \tilde x}\left| \int_0^t \left| e^{-\int_s^t h(\theta_u)du}\right| \cdot \left| \nabla_\theta g(\theta_s) - \nabla_\theta h(\theta_s)X_s \right| ds \right|^p\\
&\overset{(a)}{\le} 2|\tilde x|^p + C_p \e_{x, \tilde x}\left| \int_0^t e^{-c(t-s)} \left( 1 + |X_s| \right) ds \right|^p\\
&\le 2|\tilde x|^p + C_p \e_{x}\left| \int_0^t \frac{e^{cs}}{e^{ct} - 1} \left( 1 + |X_s| \right) ds \right|^p e^{-cp t}\left( e^{ct} - 1 \right)^p \\
&\overset{(b)}{\le} 2|\tilde x|^p + C_p \e_x\left| \int_0^t \frac{e^{cs}}{e^{ct} - 1} \left( 1 + |X_s| \right)^p ds \right|\\
&\le C_p \left( 1 + |x|^p + |\tilde x|^p \right),
\eae		
where step $(a)$ is by Assumption \ref{condition} and the fact
\beq
\lambda_{\max} \left( e^{-\int_s^{t'} h(\theta_u)du} \right) = e^{ -\lambda_{\min}\left( \int_s^{t'} h(\theta_u)du \right)} \le e^{-c(t'-s)}
\eeq
and step (b) is by Jensen's inequality. 

To prove \eqref{uniform moment bound}, we use a similar method as in \cite{pardoux2001poisson}. By Itô's formula, we have for $p \ge 1$
\bae
\left|X_{t}\right|^{2p}-\left|X
_{0}\right|^{2p} &\leq \int_{0}^{t} 2p \left| X_s \right|^{2p-2} \left\langle X_s,\  g(\theta_s) - h(\theta_s) X_{s} \right\rangle ds + \int_{0}^{t} p(d+ 2p-1) \left|X_{s}\right|^{2p-2} ds + 2p \int_{0}^{t} \left|X_{s}\right|^{2p-2} \langle X_s, \ dW_{s} \rangle\\
&\le C_p \int_{0}^{t} \left|X_{s}\right|^{2p-2} ds + 2p \int_{0}^{t} \left|X_{s}\right|^{2p-2} \langle X_s, \ dW_{s} \rangle.
\eae
Using the Burkholder-Davis-Gundy inequality, there exists a constant $C$ such that
\beq
\label{uniform}
\e_{x}\left(\sup _{t^{\prime} \leq t}\left|X_{t^{\prime}}\right|^{2 p}\right) \leq |x|^{2 p} + C_p \left(\e_{x} \int_{0}^{t}\left|X_{s}\right|^{4 p-2} d s\right)^{1 / 2} +  C_p \e_{x} \int_{0}^{t}\left|X_{s}\right|^{2 p-2} ds,
\eeq
which together with estimate \eqref{moment bound} can be used to derive the bound
\beq
\e_{x}\left(\sup _{t' \leq t}\left|X_{t'}\right|^{2 p}\right) \leq |x|^{2p} + C_p \left(t+t^{1 / 2}\right)\left(1+|x|^{2 p-1}\right).
\eeq
Furthermore, for $t \geq 1$,
\beq
\label{p}
\e_{x}\left(\sup _{t^{\prime} \le t}\left|X_{t^{\prime}}\right|^{p}\right) \overset{(a)}{\leq} \left( \e_{x} \sup _{t^{\prime} \leq t}\left|X_{t^{\prime}}\right|^{2p}\right)^{\frac12}
\le \left( |x|^{2p} + C_p \left(t+t^{1 / 2}\right)\left(1+|x|^{2 p-1}\right) \right)^{\frac12}
\le |x|^{p} + C_p \left(1+|x|^{p - \frac12}\right) \sqrt{t},
\eeq
where step (a) is by H\"{o}lder inequality. Similarly, we have for any $p^{\prime}<p$ and $t \geq 1$ that
\beq
\label{p'}
\e_{x}\left(\sup_{t^{\prime} \leq t}\left|X_{t^{\prime}}\right|^{p^{\prime}}\right) \leq C|x|^{p^{\prime}}+C\left(1+|x|^{p-\frac12}\right) t^{\frac{p'}{2p}},
\eeq
and thus the result for $X_t$ in \eqref{uniform moment bound} follows. Finally, similarly as in \eqref{tilde bound}, 
\bae
\label{relation}
\e_{x, \tilde x} \sup_{t' \le t} |\tilde X_{t'}|^p &\le 2|\tilde x|^p + 2\e_{x, \tilde x}\sup_{t' \le t}\left| \int_0^{t'} \left| e^{-\int_s^{t'} h(\theta_u)du} \right| \cdot \left| \nabla_\theta g(\theta_s) - \nabla_\theta h(\theta_s)X_s \right| ds \right|^p\\
&\le 2|\tilde x|^p + C_p \e_x  \sup_{t' \le t} \left| \int_0^{t'} e^{-c(t'-s)} \left( 1 + |X_s| \right) ds \right|^p\\
&\le 2|\tilde x|^p + C_p \e_x \sup_{t' \le t} \left( 1 + |X_{t'}|^p \right).
\eae
Combining \eqref{p}, \eqref{p'}, and \eqref{relation}, we can prove the bound for $\tilde X_t$ in \eqref{uniform moment bound}.
\end{proof}

Using the estimates in Lemma \ref{poisson eq} and Lemma \ref{moment}, we can now bound the first fluctuation term $\Delta^1_{\tau_k, \sigma_k + \eta}$ in \eqref{error integral}.
\begin{lemma}
\label{fluctuation 1}
Under Assumption \ref{condition}, for any fixed $\eta>0$
\beq
\label{errors conv}
\left|\Delta^1_{\tau_k, \sigma_k + \eta}\right| \rightarrow 0 \text { as } k \rightarrow \infty, \quad \text{a.s.}
\eeq
\end{lemma}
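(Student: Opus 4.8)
The plan is to follow the Poisson-equation strategy of \cite{sirignano2017stochastic}, but using the new bounds from Lemma \ref{poisson eq} to handle the degenerate, non-uniformly-elliptic generator. The starting observation is that $Z^1_s = G^1(X_s, \tilde X_s, \theta_s)$ and, by Lemma \ref{poisson eq}, $\mathcal{L}^{\theta}_{x,\tilde x} v^1 = G^1$. I would therefore apply It\^o's formula to $v^1(X_t, \tilde X_t, \theta_t)$ along the coupled dynamics \eqref{update}. Because $\tilde X_t$ carries no diffusion term and $\theta_t$ evolves by an ODE, the only martingale contribution comes from $X_t$, and the second-order/drift part in $(x,\tilde x)$ collapses exactly to $\mathcal{L}^{\theta_t}_{x,\tilde x} v^1 = G^1(X_t,\tilde X_t,\theta_t) = Z^1_t$. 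Componentwise this yields
\[
Z^1_t\, dt = d v^1(X_t,\tilde X_t,\theta_t) - \nabla_\theta v^1 \cdot \tfrac{d\theta_t}{dt}\, dt - \sigma \nabla_x v^1 \cdot dW_t .
\]

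Multiplying by $\alpha_t$, integrating over $[\tau_k, \sigma_k+\eta]$, and integrating by parts in the first term (using that $\alpha_t$ is deterministic and $\int_0^\infty |\alpha_s'|\,ds < \infty$), I would decompose $\Delta^1_{\tau_k,\sigma_k+\eta}$ into four pieces: a boundary term $\big[\alpha_s v^1\big]_{\tau_k}^{\sigma_k+\eta}$; an $\alpha'$-term $-\int_{\tau_k}^{\sigma_k+\eta}\alpha_s' v^1\,ds$; a $\theta$-drift term $-\int_{\tau_k}^{\sigma_k+\eta}\alpha_s \nabla_\theta v^1 \cdot \tfrac{d\theta_s}{ds}\,ds$, which carries an extra factor $\alpha_s$ since $\tfrac{d\theta_s}{ds}$ is itself proportional to $\alpha_s$; and a martingale term $-\sigma\int_{\tau_k}^{\sigma_k+\eta}\alpha_s \nabla_x v^1\cdot dW_s$. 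The goal is to show each piece vanishes as $k\to\infty$ (note $\tau_k\to\infty$).

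For three of the four pieces the argument is then soft. Combining the polynomial bound \eqref{control v1} with the moment bounds \eqref{moment bound} gives a uniform-in-time second-moment bound $\e|\nabla_x v^1(X_s,\tilde X_s,\theta_s)|^2 \le C$ (and similarly for $\nabla_\theta v^1$, $\nabla_{\tilde x} v^1$, and $v^1$). Hence the martingale has $\e\int_0^\infty \alpha_s^2 |\nabla_x v^1|^2\,ds \le C\int_0^\infty \alpha_s^2\,ds < \infty$, so by the martingale convergence theorem it converges a.s.\ and its increment over $[\tau_k,\sigma_k+\eta]$ tends to $0$. The $\alpha'$-term and the $\theta$-drift term are controlled by showing the full integrals $\int_0^\infty |\alpha_s'|\,|v^1|\,ds$ and $\int_0^\infty \alpha_s^2 |\nabla_\theta v^1|\,|(f(\bar X_s)-\beta)\tilde X_s^\top \nabla_x f(X_s)|\,ds$ are finite almost surely (their expectations are bounded by $C\int_0^\infty|\alpha_s'|\,ds$ and $C\int_0^\infty\alpha_s^2\,ds$ respectively, using \eqref{control v1}, \eqref{poly}, and \eqref{moment bound}); finiteness of the whole integral forces the tail over $[\tau_k,\infty)$ to $0$.

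The main obstacle is the boundary term, i.e.\ showing $\alpha_t\,|v^1(X_t,\tilde X_t,\theta_t)| \to 0$ almost surely at the random times $\tau_k$ and $\sigma_k+\eta$. Here pointwise moment bounds are not enough; I would instead use the uniform sup-moment bounds \eqref{uniform moment bound}, which together with \eqref{control v1} give $\e\big(\sup_{t'\le t}|v^1(X_{t'},\tilde X_{t'},\theta_{t'})|^q\big) = O(\sqrt t)$ for every $q>0$. Writing $V_t := \sup_{t'\le t}|v^1|$, Chebyshev gives $\prob(\alpha_n V_n > \epsilon)\le \epsilon^{-q}\alpha_n^q\, O(\sqrt n)$ along integers $n$; choosing $q$ large and invoking Assumption \ref{condition}(4) (which forces $\alpha_n = o(n^{-1/4 - p})$ through $\alpha_t^2 t^{1/2+2p}\to 0$) makes this summable, so Borel--Cantelli yields $\alpha_n V_n \to 0$ a.s. Interpolating between integers (using the monotonicity of $V_t$ and $\alpha_n/\alpha_{n+1}\to 1$) upgrades this to $\alpha_t V_t\to0$ for all $t$, and since $\tau_k,\sigma_k+\eta\to\infty$ the boundary term vanishes. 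This quantitative a.s.\ control, which is exactly what the decay rate in Assumption \ref{condition}(4) is designed to provide, is the crux of the lemma.
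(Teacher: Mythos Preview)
Your proposal is correct and follows essentially the same route as the paper: apply It\^o's formula (the paper applies it directly to $\alpha_t v^1$, while you apply it to $v^1$ and then integrate by parts against $\alpha_t$---these produce the identical decomposition), then control the boundary term via \eqref{uniform moment bound} plus Borel--Cantelli, the $\alpha'$- and $\theta$-drift terms via finiteness of the full $L^1$ integral, and the martingale term via the martingale convergence theorem. The only cosmetic difference is that the paper runs Borel--Cantelli along dyadic times $2^n$ (so that $\sum_n (2^n)^{-2\delta}<\infty$ for any $\delta>0$ with $q=2$), whereas you use integer times and compensate by taking a higher moment $q$; both work, and both rely implicitly on a mild comparability of $\alpha_t$ across the interpolation interval.
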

\begin{proof} 
The idea is to use the Poisson equation in Lemma \ref{poisson eq} to derive an equivalent expression for the term $\Delta^i_{\tau_k, \sigma_k + \eta}$ which we can appropriately control as $k$ becomes large. Consider the function 
$$
G^1(x,\tilde x, \theta) = (\e_{Y \sim\pi_\theta} f(Y)- \beta) \left(\nabla f(x) \tilde x - \nabla_\theta \e_{Y \sim\pi_\theta}f(Y) \right)^\top.
$$
By Lemma \ref{poisson eq}, the Poisson equation $ \mathcal{L}^\theta_{x\tilde x} u(x,\tilde x, \theta) = G^1(x, \tilde x, \theta) $ will have a unique smooth solution $v^1(x, \tilde x, \theta)$ that grows at most polynomially in $(x, \tilde x)$. Let us apply Itô's formula to the function
$$
u^1(t, x, \tilde x, \theta) := \alpha_{t} v^1(x, \tilde x, \theta) \in \mathbb{R}^\ell,
$$
evaluated on the stochastic process $(X_t, \tilde X_t, \theta_t)$. Recall that $u_i$ denotes the $i$-th element of $u$ for $i \in \{1,2,\cdots, \ell\}$. Then,
\bae
u^1_i\left(\sigma, X_{\sigma}, \tilde X_\sigma, \theta_{\sigma}\right) =& u^1_i\left(\tau, X_{\tau}, \tilde X_\tau, \theta_{\tau}\right) + \int_{\tau}^{\sigma} \partial_{s} u^1_i\left(s, X_{s}, \tilde X_s, \theta_{s}\right) ds + \int_{\tau}^{\sigma} \mathcal{L}^{\theta_s}_{x\tilde x} u^1_i\left(s, X_{s}, \tilde X_s, \theta_{s}\right) ds \\
+& \int_{\tau}^{\sigma}  \nabla_\theta u^1_i\left(s, X_{s}, \tilde X_s, \theta_{s}\right) d\theta_s + \int_{\tau}^{\sigma} \nabla_{x} u^1_i\left(s, X_{s}, \tilde X_s, \theta_{s}\right) \sigma dW_{s}. 
\eae
Rearranging the previous equation, we obtain the representation
\bae
\label{representation 1}
\Delta^1_{\tau_k,\sigma_k + \eta} =& \int_{\tau_{k}}^{\sigma_{k}+ \eta} \alpha_{s} G^1(X_s, \tilde X_s, \theta_s) ds = \int_{\tau_{k}}^{\sigma_{k}+\eta} \mathcal{L}^{\theta_s}_{x\tilde x} u^1\left(s, X_{s}, \tilde X_s, \theta_{s}\right) ds \\
=& \alpha_{\sigma_{k}+ \eta} v^1\left(X_{\sigma_{k}+\eta}, \tilde  X_{\sigma_{k}+\eta},  \theta_{\sigma_{k}+\eta}\right)-\alpha_{\tau_{k}} v^1\left(X_{\tau_{k}}, \tilde X_{\tau_{k}}, \theta_{\tau_{k}}\right)-\int_{\tau_{k}}^{\sigma_{k}+\eta} \alpha'_{s} v^1\left(X_{s}, \tilde X_s,  \theta_{s}\right) ds  \\
+&\int_{\tau_{k}}^{\sigma_{k}+\eta} 2\alpha^2_{s} \nabla_\theta v^1\left(X_{s}, \tilde X_s, \theta_{s}\right) (f(\bar X_s) - \beta) \left( \nabla f(X_s) \tilde X_s \right)^\top ds - \int_{\tau_{k}}^{\sigma_{k}+\eta} \alpha_s \nabla_{x} v^1\left(X_{s}, \tilde X_s, \theta_{s}\right) dW_{s}.
\eae
The next step is to treat each term on the right hand side of \eqref{representation 1} separately. For this purpose, let us first set
\beq
J_{t}^{1,1}=\alpha_{t} \sup _{s \in[0, t]}\left|v^1\left(X_{s}, \tilde X_s, \theta_{s}\right)\right|.
\eeq
By \eqref{control v1} and \eqref{uniform moment bound}, there exists a constant $C$ that only depends on $m'$ such that 
\bae
\e \left|J_{t}^{1,1}\right|^{2} &\leq C \alpha_{t}^{2} \e \left[1 + \sup _{s \in[0, t]}\left|X_{s}\right|^{m'} + \sup _{s \in[0, t]}\left|\tilde X_{s}\right|^{m'} \right]\\
&= C \alpha_{t}^{2}\left[1+\sqrt{t} \frac{\e \sup _{s \in[0, t]}\left|X_{s}\right|^{m'} + \e \sup _{s \in[0, t]}\left|\tilde X_{s}\right|^{m'} }{\sqrt{t}}  \right] \\
&\leq C \alpha_{t}^{2} \sqrt{t}.
\eae
Let $p>0$ be the constant in Assumption \ref{condition} such that $\lim _{t \rightarrow \infty} \alpha_{t}^{2} t^{1 / 2+2 p}=0$ and for any $\delta \in(0, p)$ define the event $A_{t, \delta}=\left\{J_{t}^{1,1} \geq t^{\delta-p}\right\} .$ Then we have for $t$ large enough such that $\alpha_{t}^{2} t^{1 / 2+2 p} \leq 1$
$$
\prob \left(A_{t, \delta}\right) \leq \frac{\e\left|J_{t}^{1,1}\right|^{2}}{t^{2(\delta-p)}} \leq C \frac{\alpha_{t}^{2} t^{1 / 2+2 p}}{t^{2 \delta}} \leq C \frac{1}{t^{2 \delta}}.
$$
The latter implies that
$$
\sum_{n \in \mathbb{N}} \prob \left(A_{2^{n}, \delta}\right)<\infty.
$$
Therefore, by the Borel-Cantelli lemma we have that for every $\delta \in(0, p)$ there is a finite positive random variable $d(\omega)$ and some $n_{0}<\infty$ such that for every $n \geq n_{0}$ one has
$$
J_{2^{n}}^{1, 1} \leq \frac{d(\omega)}{2^{n(p-\delta)}}.
$$
Thus, for $t \in\left[2^{n}, 2^{n+1}\right)$ and $n \geq n_{0}$ one has for some finite constant $C<\infty$
$$
J_{t}^{1, 1} \leq C \alpha_{2^{n+1}} \sup _{s \in\left(0,2^{n+1}\right]}\left| v^1\left(X_{s}, \tilde X_s, \theta_{s}\right) \right| \leq C \frac{d(\omega)}{2^{(n+1)(p-\delta)}} \leq C \frac{d(\omega)}{t^{p-\delta}},
$$
which proves that for $t \geq 2^{n_{0}}$ with probability one
\beq
\label{conv 1}
J_{t}^{1, 1} \leq C \frac{d(\omega)}{t^{p-\delta}} \rightarrow 0, \text { as } t \rightarrow \infty.
\eeq

Next we consider the term
$$
J_{t, 0}^{1,2} = \int_{0}^{t}\left|\alpha_{s}^{\prime} v^1\left(X_{s}, \tilde X_s, \theta_{s}\right) - 2\alpha^2_{s} \nabla_\theta v^1\left(X_{s}, \tilde X_s, \theta_{s}\right) (f(\bar X_s) - \beta) \left( \nabla f(X_s) \tilde X_s \right)^\top \right| ds.
$$
There exists a constant $0<C<\infty$ (that may change from line to line ) and $0< m' <\infty$ such that
$$
\begin{aligned}
\sup _{t>0} \e \left|J_{t, 0}^{1,2}\right| & \overset{(a)}{\le} C \int_{0}^{\infty}\left(\left|\alpha_{s}^{\prime}\right|+\alpha_{s}^{2}\right)\left(1+\e\left|X_{s}\right|^{m'} + \e\left|\bar X_{s}\right|^{m'} + \e|\tilde X_t|^{m'}\right) ds \\
& \overset{(b)}{\le} C \int_{0}^{\infty}\left(\left|\alpha_{s}^{\prime}\right|+\alpha_{s}^{2}\right) d s \\
& \leq C,
\end{aligned}
$$
where step $(a)$ is by Assumption \ref{condition} and \eqref{control v1} and in step $(b)$ we use \eqref{moment bound}. Thus  there is a finite random variable $J_{\infty, 0}^{1, 2}$ such that
\beq
\label{conv 2}
J_{t, 0}^{1, 2} \rightarrow J_{\infty, 0}^{1, 2}, \text{as} \ t \rightarrow \infty \ \text{with probability one}.
\eeq 

The last term we need to consider is the martingale term
$$
J_{t, 0}^{1, 3}=\int_{0}^{t} \alpha_s  \nabla_{x} v^1\left(X_{s}, \tilde X_s, \theta_{s}\right) d W_{s}.
$$
By Doob's inequality, Assumption \ref{condition}, \eqref{control v1}, \eqref{moment bound}, and using calculations similar to the ones for the term $J_{t, 0}^{1, 2}$, we can show that for some finite constant $C<\infty$,
$$
\sup _{t>0} \e \left|J_{t, 0}^{1, 3} \right|^{2} \le C \int_{0}^{\infty} \alpha_{s}^{2} d s<\infty.
$$
Thus, by Doob's martingale convergence theorem there is a square integrable random variable $J_{\infty, 0}^{1,3}$ such that
\beq
\label{conv 3}
J_{t, 0}^{1, 3} \to J_{\infty, 0}^{1, 3},\quad  \text{as} \ t \to \infty \ \text{ both almost surely and in $L^{2}$}.
\eeq
Let us now return to \eqref{representation 1}. Using the terms $J_{t}^{1,1}, J_{t, 0}^{1,2}$, and $J_{t, 0}^{1,3}$ we can write
$$
\left|\Delta^1_{\tau_k, \sigma_k+ \eta}\right| \leq J_{\sigma_{k}+\eta}^{1,1}+J_{\tau_{k}}^{1,1}+\left|J_{\sigma_{k}+\eta, \tau_{k}}^{1,2}\right| + \left|J_{\sigma_{k}+\eta, \tau_{k}}^{1,3}\right|,
$$
which together with \eqref{conv 1}, \eqref{conv 2}, and \eqref{conv 3} prove the statement of the Lemma.
\end{proof}

Now we prove a similar convergence result for $\Delta^2_{\tau_k, \sigma_k + \eta}$. We first give an extension of Lemma \ref{poisson eq} for the Poisson equation.
\begin{lemma}
\label{poisson eq 2}
Define the error function 
\beq
\label{function 2}
G^2(x,\tilde x, \bar x, \theta) = [ f(\bar x) -  \e_{Y \sim \pi_{\theta}}f(Y) ] \left( \nabla f(x) \tilde x \right)^\top.
\eeq
and 
\beq
\label{representation2}
v^2(x, \tilde x, \bar x, \theta) =  -\int_0^\infty \e_{x,\tilde x,\bar x} G^2(X_t^\theta,\tilde X_t^\theta, \bar X_t^\theta, \theta) dt,
\eeq
where $\e_{x,\tilde x, \bar x}$ is a conditional expectation given $X^\theta_0 = x, \ \tilde X^\theta_0 = \tilde x,$ and $\ \bar X_0^\theta = \bar x$. Under Assumption \ref{condition}, $v^2(x, \tilde x, \bar x, \theta)$ is the classical solution of the Poisson equation 
\beq
\label{PDE 2}
\mathcal{L}^\theta_{x, \tilde x, \bar x} u(x, \tilde x, \bar x, \theta) = G^2(x, \tilde x, \bar x, \theta),
\eeq
where $\mathcal{L}^\theta_{x,\tilde x,\bar x}$ is generator of the process $(X_\cdot^\theta, \tilde X_\cdot^\theta, \bar X_\cdot^\theta)$, i.e. for any test function $\varphi$
\beq
\mathcal{L}^\theta_{x,\tilde x,\bar x} \varphi(x,\tilde x, \bar x) = \mathcal{L}_{x,\tilde x}^\theta \varphi(x, \tilde x, \bar x) + \mathcal{L}_{\bar x}^\theta \varphi(x, \tilde x, \bar x).
\eeq 
Furthermore, there exist an integer $m'$ and a constant $C = C(m')$ which do not depend upon $(x, \tilde x, \bar x, \theta)$ such that the solution $v^2$ satisfies the bound	
\bae
\label{control v2}
\left| v^2(x, \tilde x, \bar x, \theta)\right|+\left| \nabla_{\bar x} v^2(x, \tilde x, \bar x, \theta)\right| + \left|\nabla_\theta v^2(x, \tilde x, \bar x, \theta)\right| +  \left| \nabla_x v^2(x, \tilde x, \bar x, \theta)\right| \le C\left(1+ |x|^{m'} + |\tilde x|^{m'} + |\bar x|^{m^{\prime}}\right).
\eae
\end{lemma}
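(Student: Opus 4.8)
The plan is to mirror the proof of Lemma \ref{poisson eq}, exploiting the fact that $\bar X_t^\theta$ is driven by the independent Brownian motion $\bar W_t$ and is therefore independent of the pair $(X_t^\theta, \tilde X_t^\theta)$. Since $\e_{\pi_\theta} f(Y)$ is deterministic given $\theta$, this independence lets me factor the integrand of \eqref{representation2} as a product
\beq
\e_{x,\tilde x, \bar x} G^2(X_t^\theta, \tilde X_t^\theta, \bar X_t^\theta, \theta) = A_t(\bar x, \theta)\, B_t(x, \tilde x, \theta),
\eeq
where $A_t(\bar x, \theta) = \e_{\bar x} f(\bar X_t^\theta) - \e_{\pi_\theta} f(Y)$ depends only on the $\bar x$-coordinate and $B_t(x, \tilde x, \theta) = \e_{x,\tilde x}(\tilde X_t^\theta)^\top \nabla f(X_t^\theta)$ is exactly the quantity already analyzed in Lemma \ref{poisson eq}. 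The first factor is an ergodic difference: by \eqref{x prime decay} it satisfies $|A_t(\bar x, \theta)| \le C(1+|\bar x|^m)(1+t)^{-k}$ for $t>1$, and $|A_t| \le C(1+|\bar x|^m)$ for $t \le 1$ by \eqref{expectation bound}. The second factor is bounded uniformly in $t$ by a polynomial in $(x, \tilde x)$: using \eqref{tilde x cal} one has $B_t = \e_{x,0}(\tilde X_t^\theta)^\top \nabla f(X_t^\theta) + (e^{-h(\theta)t}\tilde x)^\top \e_x \nabla f(X_t^\theta)$, whose first term is bounded by \eqref{pre DCT} and whose second decays exponentially in $t$. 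Multiplying the two bounds yields an integrand dominated by $C(1+|\bar x|^m)(1+|x|^{m'}+|\tilde x|^{m'})(1+t)^{-k}$, which is integrable once $k>1$; this proves that $v^2$ is finite and satisfies the growth bound in \eqref{control v2}.

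For differentiability and the first-derivative bounds, I would differentiate under the integral sign, justified as in Lemma \ref{poisson eq} by Theorem $2.27$ of \cite{folland1999real} (it suffices to dominate the differentiated integrand by an integrable function). The derivative $\nabla_{\bar x}$ acts only on $A_t$, and \eqref{x decay} with $j=1$ gives $|\nabla_{\bar x} A_t| \le C(1+|\bar x|^m)(1+t)^{-k}$, which multiplied by the polynomially bounded $B_t$ is integrable. The derivative $\nabla_x$ acts only on $B_t$ and is controlled exactly as $\nabla_x v^{1,2}$ in \eqref{control v1 x}, now multiplied by the decaying $A_t$. The derivative $\nabla_\theta$ hits both factors: on $A_t$ it produces $\theta$-derivatives of the transition and invariant densities, controlled by \eqref{x prime decay} and \eqref{invariant density} with $i=1$, while on $B_t$ it is handled as $\nabla_\theta v^{1,2}$ in \eqref{control v12 theta}. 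Assembling these estimates yields the stated bound on $|v^2| + |\nabla_{\bar x} v^2| + |\nabla_\theta v^2| + |\nabla_x v^2|$.

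To verify that $v^2$ is a classical solution of \eqref{PDE 2}, I first need the higher regularity required by the generator $\mathcal{L}^\theta_{x,\tilde x,\bar x} = \mathcal{L}^\theta_{x,\tilde x} + \mathcal{L}^\theta_{\bar x}$, namely that $v^2$ is $C^2$ in $x$ and $\bar x$ and $C^1$ in $\tilde x$. The $\nabla_{\bar x}^2$ bound comes from \eqref{x decay} with $j=2$ applied to $A_t$; the $\nabla_x^2$ bound from \eqref{x decay} with $j=2$ together with the short-time estimate \eqref{expectation bound} with $j=2$ applied to $B_t$; and $\nabla_{\tilde x} B_t = e^{-h(\theta)t}\e_x \nabla f(X_t^\theta)$ decays exponentially. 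With this smoothness in hand, I would run the same semigroup/Fubini argument as in \eqref{fubini 1}--\eqref{weak solution}: the Markov property of $(X_\cdot^\theta, \tilde X_\cdot^\theta, \bar X_\cdot^\theta)$ and a change of variables give
\beq
\frac{1}{s}\left[ \e_{x,\tilde x, \bar x} v^2(X_s^\theta, \tilde X_s^\theta, \bar X_s^\theta, \theta) - v^2(x, \tilde x, \bar x, \theta) \right] = \frac{1}{s}\int_0^s \e_{x,\tilde x, \bar x} G^2(X_t^\theta, \tilde X_t^\theta, \bar X_t^\theta, \theta)\, dt,
\eeq
and letting $s \to 0^+$ identifies the left-hand limit with $\mathcal{L}^\theta_{x,\tilde x,\bar x} v^2$ and the right-hand limit with $G^2$.

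The main obstacle I expect is not the PDE verification---which is a direct transplant of the argument for $v^1$---but rather establishing the $C^2$-in-$x$ (and in $\bar x$) regularity with bounds that are polynomial in the spatial variables and uniform in $\theta$, while justifying the interchange of two spatial derivatives with the infinite time integral. This forces a careful pairing of the decay-in-$t$ of one factor against the polynomial growth of the other: one must choose the decay exponent $k$ in \eqref{x decay} large enough (relative to the polynomial degree $m$) to dominate the differentiated integrand, and choose the spatial exponents $m'$ in the ergodic estimates large enough (relative to $m+d$) for the $dx'$-integrals to converge, exactly as in steps $(a)$--$(c)$ of the proof of Lemma \ref{poisson eq}. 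The factorization via independence is what keeps this bookkeeping manageable, since it isolates the genuinely new $\bar x$-dependence into the single, rapidly decaying factor $A_t$.
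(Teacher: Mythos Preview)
Your proposal is correct and follows essentially the same route as the paper: factor the integrand via the independence of $\bar X^\theta$ from $(X^\theta,\tilde X^\theta)$ into $A_t(\bar x,\theta)\,B_t(x,\tilde x,\theta)$, use the uniform-in-$t$ polynomial bound on $B_t$ (the paper records this as \eqref{control v2 half}) together with the ergodic decay of $A_t$ from Proposition~\ref{ergodic estimation} to control $v^2$ and its derivatives, and then verify the PDE by the same Markov/Fubini computation \eqref{fubini 1}--\eqref{weak solution}. The paper also explicitly checks $\nabla_{\tilde x} v^2$ and the second-order $\nabla_x^2,\nabla_{\bar x}^2$ bounds needed for the generator, exactly as you outline.
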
 
The proof of Lemma \ref{poisson eq 2} can be found in Appendix \ref{Poisson appendix}.

\begin{lemma}
\label{fluctuation 2}
Under Assumption \ref{condition}, for any fixed $\eta>0$, we have
\beq
\label{errors conv 2}
\left|\Delta^2_{\tau_k, \sigma_k + \eta}\right| \rightarrow 0, \text { as } k \rightarrow \infty, \quad \text{a.s.}.
\eeq
\end{lemma}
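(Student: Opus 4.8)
The plan is to follow the same strategy as the proof of Lemma \ref{fluctuation 1}, now invoking the Poisson equation of Lemma \ref{poisson eq 2} in place of Lemma \ref{poisson eq}. Since $Z^2_s = G^2(X_s, \tilde X_s, \bar X_s, \theta_s)$ and $v^2$ solves $\mathcal{L}^{\theta}_{x,\tilde x,\bar x} v^2 = G^2$ with the polynomial bound \eqref{control v2}, I will apply It\^o's formula to $u^2(t,x,\tilde x, \bar x, \theta) := \alpha_t v^2(x,\tilde x,\bar x,\theta)$ along the joint process $(X_t, \tilde X_t, \bar X_t, \theta_t)$, thereby trading the slowly-decaying integrand $\alpha_s G^2$ for boundary terms plus terms that are either integrable in time or martingales with summable quadratic variation.

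Using $\mathcal{L}^{\theta_s}_{x,\tilde x,\bar x} v^2 = G^2$ and the dynamics \eqref{update}, It\^o's formula yields a representation of exactly the same shape as \eqref{representation 1}:
\bae
\Delta^2_{\tau_k,\sigma_k+\eta} =& \, \alpha_{\sigma_k+\eta} v^2\!\left(X_{\sigma_k+\eta}, \tilde X_{\sigma_k+\eta}, \bar X_{\sigma_k+\eta}, \theta_{\sigma_k+\eta}\right) - \alpha_{\tau_k} v^2\!\left(X_{\tau_k}, \tilde X_{\tau_k}, \bar X_{\tau_k}, \theta_{\tau_k}\right) \\
&- \int_{\tau_k}^{\sigma_k+\eta} \alpha'_s v^2 \, ds + \int_{\tau_k}^{\sigma_k+\eta} 2\alpha_s^2 \, \nabla_\theta v^2 \, (f(\bar X_s)-\beta)\tilde X_s^\top \nabla f(X_s)\, ds \\
&- \int_{\tau_k}^{\sigma_k+\eta} \alpha_s \, \nabla_x v^2 \, dW_s - \int_{\tau_k}^{\sigma_k+\eta} \alpha_s \, \nabla_{\bar x} v^2 \, d\bar W_s.
\eae
The only structural difference from \eqref{representation 1} is that $\bar X_t$ is driven by the independent Brownian motion $\bar W_t$, so It\^o's formula produces an \emph{additional} martingale term $\int \alpha_s \nabla_{\bar x} v^2 \, d\bar W_s$; as before, the ODE \eqref{tilde theta} for $\tilde X_t$ contributes no stochastic integral and $\nabla_\theta v^2$ enters only through the $\alpha_s^2$ term coming from $d\theta_s$.

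Each term is then controlled as in Lemma \ref{fluctuation 1}. For the boundary contribution I set $J^{2,1}_t = \alpha_t \sup_{s\in[0,t]}\left|v^2(X_s,\tilde X_s,\bar X_s,\theta_s)\right|$; by \eqref{control v2} together with \eqref{uniform moment bound} — noting $\bar X_t$ satisfies the same moment bounds as $X_t$, both being OU processes — one obtains $\e\left|J^{2,1}_t\right|^2 \le C\alpha_t^2\sqrt t$, and a Borel--Cantelli argument along $t=2^n$ using Assumption \ref{condition} gives $J^{2,1}_t \to 0$ almost surely. The finite-variation term $\int_0^t\left|\alpha'_s v^2 - 2\alpha_s^2\nabla_\theta v^2 (f(\bar X_s)-\beta)\tilde X_s^\top\nabla f(X_s)\right|ds$ is bounded in $L^1$ via \eqref{control v2}, \eqref{moment bound}, and $\int_0^\infty(\left|\alpha'_s\right|+\alpha_s^2)\,ds<\infty$, hence converges a.s.\ to a finite limit. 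For each of the two martingale terms, Doob's inequality with \eqref{control v2} and \eqref{moment bound} gives a uniform $L^2$ bound controlled by $\int_0^\infty\alpha_s^2\,ds<\infty$, so both converge a.s.\ by the martingale convergence theorem. Writing $\left|\Delta^2_{\tau_k,\sigma_k+\eta}\right| \le J^{2,1}_{\sigma_k+\eta} + J^{2,1}_{\tau_k} + \left|J^{2,2}_{\sigma_k+\eta,\tau_k}\right| + \left|J^{2,3}_{\sigma_k+\eta,\tau_k}\right| + \left|J^{2,4}_{\sigma_k+\eta,\tau_k}\right|$ and letting $k\to\infty$ then proves the claim.

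I do not expect a genuine obstacle here, since the conceptual work is already done in Lemma \ref{poisson eq 2} (which supplies the polynomial control of $\nabla_{\bar x} v^2$) and in Lemma \ref{fluctuation 1}. The only new ingredient is the extra $d\bar W_s$ martingale; the mild point requiring care is that the two martingales are driven by \emph{independent} Brownian motions, so they should be treated separately (or assembled into a single $\mathbb{R}^{2d}$-driven martingale) when applying the Doob $L^2$ estimate. Because no cross terms arise and $\nabla_{\bar x} v^2$ obeys the same polynomial growth as $\nabla_x v^2$, the estimates of Lemma \ref{fluctuation 1} transfer essentially verbatim.
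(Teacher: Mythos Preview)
Your proposal is correct and follows essentially the same approach as the paper's own proof: apply It\^o's formula to $\alpha_t v^2$ using Lemma \ref{poisson eq 2}, then control the boundary, finite-variation, and martingale pieces exactly as in Lemma \ref{fluctuation 1}, invoking the moment bounds for $\bar X_t$ (which the paper records as an analogue of Lemma \ref{moment}). The only cosmetic difference is that the paper bundles the $dW_s$ and $d\bar W_s$ stochastic integrals into a single martingale term $J^{2,3}_{t,0}$ rather than keeping them as separate $J^{2,3}$ and $J^{2,4}$---a choice you already anticipated.
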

\begin{proof}
Consider the function 
\beq
G^2(x,\tilde x, \bar x, \theta) = \left( f(\bar x) -  \e_{Y \sim \pi_{\theta}}f(Y)\right) \left(\nabla f(x) \tilde x\right)^\top.
\eeq
Let $v^2$ be the solution of \eqref{PDE 2} in Lemma \ref{poisson eq 2}. We apply Itô formula to the function $u^2(t, x, \tilde x, \bar x, \theta)=\alpha_{t} v^2(x, \tilde x, \bar x, \theta)$
evaluated on the stochastic process $(X_t, \tilde X_t, \bar X_t, \theta_t)$ and get for any $i \in \{1,2,\cdots, \ell\}$ 
\bae
&u^2_i\left(\sigma, X_{\sigma}, \tilde X_{\sigma}, \bar X_\sigma,\theta_{\sigma}\right)-u^2_i\left(\tau, X_{\tau}, \tilde X_\tau, \bar X_\tau, \theta_{\tau}\right) \\
=& \int_{\tau}^{\sigma} \partial_{s} u^2_i\left(s, X_{s}, \tilde X_s, \bar X_s, \theta_{s}\right) ds + \int_{\tau}^{\sigma} \mathcal{L}^{\theta_s}_{x,\tilde x} u^2_i\left(s, X_{s}, \tilde X_s, \bar X_s, \theta_{s}\right) ds + \int_{\tau}^{\sigma} \mathcal{L}^{\theta_s}_{\bar x} u^2_i\left(s, X_{s}, \tilde X_s, \bar X_s, \theta_{s}\right) ds\\
+& \int_{\tau}^{\sigma} \nabla_{\theta} u^2_i\left(s, X_{s}, \tilde X_s, \bar X_s, \theta_{s}\right) d\theta_s + \int_{\tau}^{\sigma} \nabla_{x} u_i^2\left(s, X_{s}, \tilde X_s, \bar X_s, \theta_{s}\right) dW_{s} + \int_{\tau}^{\sigma} \nabla_{\bar x} u^2_i\left(s, X_{s}, \tilde X_s, \bar X_s, \theta_{s}\right) d\bar W_{s}.
\eae
Rearranging the previous equation, we obtain the representation
\bae
\label{representation 2}
&\Delta^2_{\tau_k,\sigma_k + \eta} = \int_{\tau_{k}}^{\sigma_{k}+ \eta} \alpha_{s} G^2( X_s, \tilde X_s, \bar X_s, \theta_s) ds = \int_{\tau_{k}}^{\sigma_{k}+\eta} \mathcal{L}^{\theta_s}_{x, \tilde x, \bar x} u^2\left(s, X_{s}, \tilde X_s, \bar X_s, \theta_{s}\right) ds \\
=& \alpha_{\sigma_{k}+ \eta} v^2\left( X_{\sigma_{k}+\eta}, \tilde X_{\sigma_k+ \eta}, \bar X_{\sigma_k+ \eta}, \theta_{\sigma_{k}+\eta}\right)-\alpha_{\tau_{k}} v^2\left(X_{\tau_{k}}, \tilde X_{\tau_{k}}, \bar X_{\tau_k}, \theta_{\tau_{k}}\right)-\int_{\tau_{k}}^{\sigma_{k}+\eta} \alpha'_{s} v^2\left(X_{s}, \tilde X_s, \bar X_s, \theta_{s}\right) ds \\
-& \int_{\tau_{k}}^{\sigma_{k}+\eta} \alpha_s \nabla_{\bar x} v^2\left(X_{s}, \tilde X_s, \bar X_s, \theta_{s}\right) d\bar W_{s} + \int_{\tau_{k}}^{\sigma_{k}+\eta} 2\alpha^2_{s} \nabla_\theta v^2\left(X_{s}, \tilde X_s, \bar X_s, \theta_{s}\right) \left(f(\bar X_s) - \beta\right)  \left( \nabla f(X_s) \tilde X_s\right)^\top ds \\
-& \int_{\tau_{k}}^{\sigma_{k}+\eta} \alpha_s \nabla_{x} v^2\left(X_{s}, \tilde X_s, \bar X_s, \theta_{s}\right) dW_s.
\eae

The next step is to treat each term on the right hand side of \eqref{representation 2} separately. 
For this purpose, let us first set
\beq
J_{t}^{2,1}=\alpha_{t} \sup _{s \in[0, t]}\left|v^2\left(X_{s}, \tilde X_s, \bar X_s, \theta_{s}\right)\right|.
\eeq
Using the same approach as for $X_t$ in Lemma \ref{moment}, we can show that for any $p>0$ there exists a constant $C_p$ that only depends on $p$ such that
\beq
\label{moment bar}
\e_{\bar x} |\bar X_t|^p \le C_p\left( 1 + |\bar x|^p \right), \quad \e_{\bar x} \left( \sup\limits_{0 \le t' \le t} |\bar X_{t'}|^p \right) = O(\sqrt t) \quad \text{as}\  t \to \infty.
\eeq
Combining Lemma \ref{moment}, \eqref{control v2}, and \eqref{moment bar}, we know that there exists a constant $C$ such that 
\bae
\e \left|J_{t}^{2,1}\right|^{2} &\leq C \alpha_{t}^{2} \e\left[1 + \sup _{s \in[0, t]}\left|X_{s}\right|^{m'} + \sup _{s \in[0, t]}\left|\tilde X_{s}\right|^{m'} + \sup _{s \in[0, t]}\left|\bar X_{s}\right|^{m'} \right]\\
&= C \alpha_{t}^{2}\left[1+\sqrt{t} \frac{\e \sup _{s \in[0, t]}\left|X_{s}\right|^{m'} + \e \sup _{s \in[0, t]}\left|\tilde X_{s}\right|^{m'} + \e \sup _{s \in[0, t]}\left|\bar X_{s}\right|^{m'}}{\sqrt{t}}  \right] \\
&\leq C \alpha_{t}^{2} \sqrt{t}.
\eae
Let $p>0$ be the constant in Assumption \ref{condition} such that $\displaystyle \lim _{t \rightarrow \infty} \alpha_{t}^{2} t^{1 / 2+2 p}=0$ and for any $\delta \in(0, p)$ define the event $A_{t, \delta}=\left\{J_{t}^{2,1} \geq t^{\delta-p}\right\} .$ Then we have for $t$ large enough such that $\alpha_{t}^{2} t^{1 / 2+2 p} \leq 1$ and 
$$
\prob \left(A_{t, \delta}\right) \leq \frac{\e\left|J_{t}^{2,1}\right|^{2}}{t^{2(\delta-p)}} \leq C \frac{\alpha_{t}^{2} t^{1 / 2+2 p}}{t^{2 \delta}} \leq C \frac{1}{t^{2 \delta}}.
$$
The latter implies that
$$
\sum_{n \in \mathbb{N}} \prob\left(A_{2^{n}, \delta}\right)<\infty.
$$
Therefore, by the Borel-Cantelli lemma we have that for every $\delta \in(0, p)$ there is a finite positive random variable $d(\omega)$ and some $n_{0}<\infty$ such that for every $n \geq n_{0}$ one has
$$
J_{2^{n}}^{2,1} \leq \frac{d(\omega)}{2^{n(p-\delta)}}.
$$
Thus for $t \in\left[2^{n}, 2^{n+1}\right)$ and $n \geq n_{0}$ one has for some finite constant $C<\infty$
$$
J_{t}^{2,1} \leq C \alpha_{2^{n+1}} \sup _{s \in\left(0,2^{n+1}\right]}\left| v^2\left(X_{s}, \tilde X_s, \bar X_s, \theta_{s}\right) \right| \leq C \frac{d(\omega)}{2^{(n+1)(p-\delta)}} \leq C \frac{d(\omega)}{t^{p-\delta}},
$$
which derives that for $t \geq 2^{n_{0}}$ we have with probability one
\beq
\label{conv 1 2}
J_{t}^{2,1} \leq C \frac{d(\omega)}{t^{p-\delta}} \rightarrow 0, \text { as } t \rightarrow \infty.
\eeq

Next we consider the term
$$
J_{t, 0}^{2,2}=\int_{0}^{t}\left|\alpha_{s}^{\prime} v^2\left(X_{s}, \tilde X_s, \bar X_s, \theta_{s}\right) - 2\alpha^2_{s} \nabla_\theta v^2\left(X_{s}, \tilde X_s, \bar X_s, \theta_{s}\right) \left(f(\bar X_s) - \beta\right) \left( \nabla f(X_s) \tilde X_s \right)^\top \right| ds
$$
and thus we see that there exists a constant $0<C<\infty$ such that 
$$
\begin{aligned}
\sup_{t>0} \e \left|J_{t, 0}^{2,2}\right| & \overset{(a)}{\le} C \int_{0}^{\infty}\left(\left|\alpha_{s}^{\prime}\right|+\alpha_{s}^{2}\right)\left(1+\e\left|X_{s}\right|^{m'}+ \e\left|X_{s}\right|^{m'} + \e\left|\bar X_{s}\right|^{m'}\right) ds \\
& \overset{(b)}{\le} C \int_{0}^{\infty}\left(\left|\alpha_{s}^{\prime}\right|+\alpha_{s}^{2}\right) d s \\
& \leq C,
\end{aligned}
$$
where in step $(a)$ we use \eqref{control v2} and in step $(b)$ we use Lemma \ref{moment} and \eqref{moment bar}. Thus we know there is a finite random variable $J_{\infty, 0}^{2,2}$ such that
\beq
\label{conv 2 2}
J_{t, 0}^{2,2} \rightarrow J_{\infty, 0}^{2,2}, \ \text{as} \ t \rightarrow \infty \ \text{with probability one}.
\eeq 

The last term we need to consider is the martingale term
$$
J_{t, 0}^{2,3}=\int_{0}^{t} \alpha_s  \nabla_{x} v^2\left(X_{s}, \tilde X_s, \bar X_s, \theta_{s}\right) dW_{s} + \int_{0}^{t} \alpha_s \nabla_{\bar x} v^2\left(X_{s}, \tilde X_s, \bar X_s, \theta_{s}\right) d\bar W_{s}.
$$
Notice that Doob's inequality and the bounds of \eqref{control v2} (using calculations similar to the ones for the term $J_{t, 0}^{2,2}$ ) give us that for some finite constant $K<\infty$, we have
$$
\sup _{t>0} \e\left|J_{t, 0}^{2,3}\right|^{2} \leq K \int_{0}^{\infty} \alpha_{s}^{2} d s<\infty.
$$
Thus, by Doob's martingale convergence theorem there is a square integrable random variable $J_{\infty, 0}^{(3)}$ such that
\beq
\label{conv 3 2}
J_{t, 0}^{2,3} \to J_{\infty, 0}^{2,3}, \quad
\text{as}\  t \to \infty \ \text{both almost surely and in $L^{2}$}.
\eeq	
Let us now go back to \eqref{representation 2}. Using the terms $J_{t}^{2,1}, J_{t, 0}^{2,2}$ and $J_{t, 0}^{2,3}$ we can write
$$
\left|\Delta^2_{\tau_k, \sigma_k + \eta}\right| \leq J_{\sigma_{k}+\eta}^{2,1}+J_{\tau_{k}}^{2,1}+J_{\sigma_{k}+\eta, \tau_{k}}^{2,2}+\left|J_{\sigma_{k}+\eta, \tau_{k}}^{2,3}\right|,
$$
which together with \eqref{conv 1 2}, \eqref{conv 2 2} and \eqref{conv 3 2} prove the statement of the Lemma.
\end{proof}

Using \eqref{theta bound} and the dominated convergence theorem, we can establish a bound for the objective function $J(\theta)$ from \eqref{object}:
\beq
\left| \nabla^2_\theta J(\theta) \right| \le C \left( \left| \e_{Y \sim \pi_\theta} f(Y) - \beta \right|^2 + \left| \nabla_\theta \e_{Y \sim \pi_\theta} f(Y) \right|^2 \right) \le C,
\eeq
and therefore the gradient $\nabla_\theta J(\theta)$ is Lipschitz continuous with respect to $\theta$.
\begin{lemma}
\label{estimation f}
Under Assumption \ref{condition}, choose $\mu>0$ in \eqref{cycle of time} such that for the given $\kappa>0$, one has $3 \mu + \frac{\mu}{8 \kappa}=\frac{1}{2L_{\nabla J}}$, where $L_{\nabla J}$ is the Lipschitz constant of $\nabla_\theta J(\theta)$ in \eqref{object}. Then, for $k$ large enough (where $k$ can be random) and $\eta>0$ small enough (potentially random depending on $k$), $\int_{\tau_{k}}^{\sigma_k + \eta} \alpha_{s} d s>\mu$ with probability one. In addition, we also have $\frac{\mu}{2} \leq \int_{\tau_{k}}^{\sigma_{k}} \alpha_{s} d s \leq \mu$ with probability one.
\end{lemma}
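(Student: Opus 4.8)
The plan is to show that, for all large $k$, the band condition in \eqref{cycle of time} is never the constraint that terminates a cycle, so that $\sigma_k$ is pinned down by the integrated learning rate alone, $\int_{\tau_k}^{\sigma_k}\alpha_s\,ds=\mu$; both assertions then follow. The starting point is to integrate the decomposition \eqref{gradient with error}: for $t\in[\tau_k,\sigma_k+\eta]$,
\[
\theta_t-\theta_{\tau_k}=-\int_{\tau_k}^{t}\alpha_s\nabla_\theta J(\theta_s)\,ds-2\Delta^1_{\tau_k,t}-2\Delta^2_{\tau_k,t}.
\]
On the cycle $I_k=[\tau_k,\sigma_k)$ the band gives $|\nabla_\theta J(\theta_s)|\le 2|\nabla_\theta J(\theta_{\tau_k})|$, and since $\tau_k$ is a hitting time of the closed set $\{|\nabla_\theta J|\ge\kappa\}$ we have $|\nabla_\theta J(\theta_{\tau_k})|\ge\kappa$. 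Hence the drift integral is at most $2|\nabla_\theta J(\theta_{\tau_k})|\int_{\tau_k}^{t}\alpha_s\,ds$.

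The first technical step is to upgrade Lemmas \ref{fluctuation 1} and \ref{fluctuation 2} from convergence of the single endpoint $\Delta^i_{\tau_k,\sigma_k+\eta}$ to convergence of the running supremum
\[
R_k:=\sup_{t\in[\tau_k,\sigma_k+\eta]}\Big(\big|\Delta^1_{\tau_k,t}\big|+\big|\Delta^2_{\tau_k,t}\big|\Big)\longrightarrow 0\quad\text{almost surely as }k\to\infty.
\]
This should follow directly from the same Itô representations used there: the boundary $\sup$-terms tend to zero, while the increments of the almost surely convergent finite-variation and martingale terms over $[\tau_k,t]\subseteq[\tau_k,\sigma_k+\eta]$ vanish because $\tau_k\to\infty$. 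Combining with the drift bound, for every $t\in[\tau_k,\sigma_k]$,
\[
|\theta_t-\theta_{\tau_k}|\le 2\,|\nabla_\theta J(\theta_{\tau_k})|\int_{\tau_k}^{t}\alpha_s\,ds+2R_k.
\]

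Because $\nabla_\theta J$ has a bounded Hessian (the second-order bound \eqref{theta bound} controls $\nabla^2_\theta\e_{\pi_\theta}f(Y)$), it is Lipschitz with constant $L_{\nabla J}$, so $\big|\nabla_\theta J(\theta_t)-\nabla_\theta J(\theta_{\tau_k})\big|\le L_{\nabla J}\big(2|\nabla_\theta J(\theta_{\tau_k})|\int_{\tau_k}^{t}\alpha_s\,ds+2R_k\big)$. Suppose the band were violated at $\sigma_k$; then the gradient magnitude must have moved by at least $\tfrac12|\nabla_\theta J(\theta_{\tau_k})|$, so $|\nabla_\theta J(\theta_{\sigma_k})-\nabla_\theta J(\theta_{\tau_k})|\ge\tfrac12|\nabla_\theta J(\theta_{\tau_k})|$. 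Dividing the resulting inequality by $|\nabla_\theta J(\theta_{\tau_k})|\ge\kappa$ and using $\int_{\tau_k}^{\sigma_k}\alpha_s\,ds\le\mu$ yields
\[
\tfrac12\le 2L_{\nabla J}\mu+\frac{2L_{\nabla J}R_k}{\kappa}.
\]
The calibration $3\mu+\frac{\mu}{8\kappa}=\frac{1}{2L_{\nabla J}}$ forces $2L_{\nabla J}\mu\le\tfrac13$, leaving a fixed positive slack; hence the displayed inequality would force $R_k\ge\frac{\kappa}{12L_{\nabla J}}>0$, contradicting $R_k\to0$. Therefore, for all $k$ beyond a (random) index, the band never binds, and since $t\mapsto\int_{\tau_k}^{t}\alpha_s\,ds$ is continuous and strictly increasing to $\infty$, the cycle ends exactly when this integral reaches $\mu$. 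This gives $\int_{\tau_k}^{\sigma_k}\alpha_s\,ds=\mu$ (in particular $\ge\frac\mu2$), the upper bound $\le\mu$ being immediate from the definition of $\sigma_k$, and $\int_{\tau_k}^{\sigma_k+\eta}\alpha_s\,ds>\mu$ for every $\eta>0$.

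The main obstacle is the uniform-in-time control of the fluctuations over the \emph{random} interval $[\tau_k,\sigma_k+\eta]$: Lemmas \ref{fluctuation 1}--\ref{fluctuation 2} are phrased at a fixed right endpoint, and one must verify that their Poisson-equation representations bound $\sup_t|\Delta^i_{\tau_k,t}|$ and that this supremum still vanishes almost surely. The delicate quantitative point is that this error enters only after division by $|\nabla_\theta J(\theta_{\tau_k})|\ge\kappa$, so it must be small relative to the fixed slack $\tfrac12-2L_{\nabla J}\mu$ left by the constant choice; this is precisely why the band is measured multiplicatively (factor $2$) and why the calibration couples $\mu$ to $\kappa$. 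A minor but essential point is that one genuinely needs Lipschitzness of $\nabla_\theta J$, not merely of $J$, which is why the second-derivative estimate \eqref{theta bound} is invoked.
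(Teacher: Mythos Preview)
Your proposal is correct and follows essentially the same contradiction strategy as the paper: bound the displacement $|\theta_t-\theta_{\tau_k}|$ via the gradient band plus the fluctuation integrals, apply Lipschitzness of $\nabla_\theta J$, and contradict the definition of $\sigma_k$. The organizational differences are minor but worth noting: the paper assumes $\int_{\tau_k}^{\sigma_k+\eta}\alpha_s\,ds\le\mu$, extends the band to $[\tau_k,\sigma_k+\eta]$ with a factor $3$ (by continuity), and uses only the endpoint fluctuation bound $|\Delta^i_{\tau_k,\sigma_k+\eta}|$ from Lemmas \ref{fluctuation 1}--\ref{fluctuation 2}; you instead assume the band binds at $\sigma_k$, work on $[\tau_k,\sigma_k]$ with the factor $2$, and invoke a running-supremum upgrade of those lemmas. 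Your route yields the sharper conclusion $\int_{\tau_k}^{\sigma_k}\alpha_s\,ds=\mu$ (versus the paper's $\ge\mu/2$), at the cost of needing the $\sup_t|\Delta^i_{\tau_k,t}|$ control---which, as you correctly observe, falls out of the same It\^o/Poisson representation since the boundary, finite-variation, and martingale pieces all have vanishing oscillation once $\tau_k\to\infty$. Your remark that one needs Lipschitzness of $\nabla_\theta J$ (not of $J$) via the second-derivative bound \eqref{theta bound} is also correct; the paper's phrasing ``Lipschitz constant of $J$'' is a slip.
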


\begin{proof}
We use a ``proof by contradiction". Assume that $\int_{\tau_k}^{\sigma_k+\eta} \alpha_s ds \le \mu $ and let $\delta>0$ be such that $\delta<\mu/8$. Without loss of generality, we assume that for any $k$, $\eta$ is small enough such that for any $s \in [ \tau_k, \sigma_k + \eta ]$ one has $| \nabla_\theta J(\theta_s) | \le 3 | \nabla_\theta J(\theta_{\tau_k})|$.

Combining \eqref{gradient with error} and \eqref{error} yields
\beq
\frac{d\theta_t}{dt} = -\alpha_t \nabla_\theta J(\theta_t) - 2\alpha_t Z_t^1 - 2\alpha_t Z_t^2
\eeq
and thus 
\bae
\label{control}
\left| \theta_{\sigma_k + \eta} - \theta_{\tau_k} \right| &\le \int_{\tau_k}^{\sigma_k+\eta} \alpha_t \left| \nabla_\theta J(\theta_t) \right| dt + 2\left|\int_{\tau_k}^{\sigma_k+\eta} \alpha_t Z_t^1dt \right| + 2 \left| \int_{\tau_k}^{\sigma_k+\eta} \alpha_t Z_t^2 dt \right| \\ 
&\le 3 \left| \nabla_\theta J(\theta_{\tau_k})\right| \mu + I_1 + I_2.
\eae
By Lemmas \ref{fluctuation 1} and \ref{fluctuation 2}, we have that for $k$ large enough, 
\bae
\label{control online}
I_1 &\le 2| \Delta^1_{\tau_k, \sigma_k+\eta} | \le \delta < \mu/16\\
I_2 &\le 2| \Delta^2_{\tau_k, \sigma_k+\eta} | \le \delta < \mu/16.
\eae
In addition, we also have by definition that
$\frac{\kappa}{\left|\nabla_\theta J\left(\theta\left(\tau_{k}\right)\right)\right|} \le 1$. Combining \eqref{control} and \eqref{control online} yields
$$
\left|\theta_{\sigma_{k}+\eta} -\theta_{\tau_{k}} \right| \le \left|\nabla_\theta J\left(\theta_{\tau_{k}}\right)\right| \left(3 \mu+\frac{\mu}{8 \kappa}\right) = \frac{1}{2L_{\nabla J}} \left|\nabla_\theta J\left(\theta_{\tau_{k}}\right)\right|.
$$
This means that
$$
\left|\nabla_\theta J\left(\theta_{\sigma_{k}+\eta} \right)-\nabla_\theta J\left(\theta_{\tau_{k}}\right)\right| \le L_{\nabla J} \left|\theta_{\sigma_{k}+\eta} - \theta_{\tau_{k}} \right| \le \frac{1}{2}\left|\nabla_\theta J\left(\theta_{\tau_{k}} \right)\right|,
$$
and thus 
$$
\frac{1}{2}\left|\nabla_\theta J\left(\theta_{\tau_{k}} \right)\right| \le \left|\nabla_\theta J\left(\theta_{\sigma_{k}+\eta} \right)\right|\le 2\left|\nabla_\theta J\left(\theta_{\tau_{k}}\right)\right|.
$$
However, this produces a contradiction since it implies $\int_{\tau_{k}}^{\sigma_{k}+\eta} \alpha_s ds > \mu$; otherwise, from the definition of $\sigma_k$ in \eqref{cycle of time}, we will have $\sigma_{k}+\eta \in$ $\left[\tau_{k}, \sigma_{k}\right]$. This concludes the proof of the first part of the lemma. 

The proof of the second part of the lemma is straightforward. By its definition in \eqref{cycle of time}, we have that $\int_{\tau_{k}}^{\sigma_{k}} \alpha_s ds \le \mu$. It remains to show that $\int_{\tau_{k}}^{\sigma_{k}} \alpha_s ds \ge \frac{\mu}{2}$. We have shown that $\int_{\tau_{k}}^{\sigma_{k}+\eta} \alpha_s ds>\mu$. For $k$ large enough and $\eta$ small enough we can choose that $\int_{\sigma_{k}}^{\sigma_{k}+\eta} \alpha_s ds \le \frac{\mu}{2}$. The conclusion then follows.
\end{proof}

\begin{lemma}
\label{decreasing f}
Under Assumption \ref{condition}, suppose that there exists an infinite number of intervals $I_k = [\tau_k, \sigma_k)$. Then there is a fixed constant $ \gamma_1 = \gamma_1(\kappa) > 0$ such that for k large enough (where $k$ can be random),
\beq
J(\theta_{\sigma_k}) - J( \theta_{\tau_k}) \le -\gamma_1.
\eeq
\end{lemma}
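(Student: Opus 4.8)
\emph{The plan} is to begin from the exact (pathwise) chain rule and peel off a strictly negative contribution coming from $|\nabla_\theta J|^2$, leaving online fluctuations that I will show are asymptotically negligible. Since $t\mapsto\theta_t$ is continuously differentiable (it solves the ODE in \eqref{update}) and $J\in C^1$, I would first write
\[
J(\theta_{\sigma_k})-J(\theta_{\tau_k})=\int_{\tau_k}^{\sigma_k}\left\langle \nabla_\theta J(\theta_t),\tfrac{d\theta_t}{dt}\right\rangle dt,
\]
and then insert the decomposition \eqref{gradient with error}--\eqref{error}, i.e. $\tfrac{d\theta_t}{dt}=-\alpha_t\nabla_\theta J(\theta_t)-2\alpha_t Z_t^1-2\alpha_t Z_t^2$, to obtain
\[
J(\theta_{\sigma_k})-J(\theta_{\tau_k})=-\int_{\tau_k}^{\sigma_k}\alpha_t\,|\nabla_\theta J(\theta_t)|^2\,dt-2\sum_{i=1}^2\int_{\tau_k}^{\sigma_k}\alpha_t\left\langle\nabla_\theta J(\theta_t),Z_t^i\right\rangle dt.
\]

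For the first (descent) term I would use the definitions in \eqref{cycle of time}: on $I_k=[\tau_k,\sigma_k)$ one has $|\nabla_\theta J(\theta_s)|\ge\tfrac12|\nabla_\theta J(\theta_{\tau_k})|\ge\tfrac{\kappa}{2}$, while Lemma \ref{estimation f} gives $\int_{\tau_k}^{\sigma_k}\alpha_s\,ds\ge\tfrac{\mu}{2}$. Hence this term is at most $-\tfrac{\kappa^2}{4}\cdot\tfrac{\mu}{2}=-\tfrac{\kappa^2\mu}{8}$, strictly negative and depending only on $\kappa$ (recall $\mu=\mu(\kappa)$). I deliberately use this exact identity rather than an $L$-smoothness (Taylor) expansion of $J$: the resulting quadratic remainder $\tfrac{L_{\nabla J}}{2}|\theta_{\sigma_k}-\theta_{\tau_k}|^2$, which Lemma \ref{estimation f} bounds by $\tfrac{1}{8L_{\nabla J}}|\nabla_\theta J(\theta_{\tau_k})|^2$, is not dominated by---indeed exceeds---the linear descent for the admissible range $\mu<\tfrac{1}{6L_{\nabla J}}$, so the standard descent-lemma route fails here.

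The substantive step is to show each fluctuation integral vanishes, that is $\int_{\tau_k}^{\sigma_k}\alpha_t\left\langle\nabla_\theta J(\theta_t),Z_t^i\right\rangle dt\to0$ a.s. as $k\to\infty$. The obstacle is that the time-varying weight $\nabla_\theta J(\theta_t)$ multiplies $Z_t^i$ pointwise, so I cannot simply quote $\Delta^i_{\tau_k,\sigma_k}\to0$: Lemmas \ref{fluctuation 1}--\ref{fluctuation 2} only control the \emph{unweighted} integral $\int\alpha_t Z_t^i\,dt$ through cancellation, whereas $\int\alpha_t|Z_t^i|\,dt$ is not small, so splitting off $\nabla_\theta J(\theta_t)-\nabla_\theta J(\theta_{\tau_k})$ does not suffice. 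My resolution is to fold the weight into the Poisson equation itself. Writing $Z_t^i=G^i(X_t,\tilde X_t,(\bar X_t),\theta_t)$, define the scalar error $\bar G^i(x,\tilde x,(\bar x),\theta):=\langle\nabla_\theta J(\theta),G^i(x,\tilde x,(\bar x),\theta)\rangle$. Because the generator $\mathcal L^\theta$ differentiates only in the spatial variables and treats $\theta$ as a frozen parameter, the function $\bar v^i:=\langle\nabla_\theta J(\theta),v^i\rangle$ solves $\mathcal L^\theta\bar v^i=\bar G^i$, where $v^i$ are the Poisson solutions of Lemmas \ref{poisson eq}--\ref{poisson eq 2}. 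By \eqref{theta bound}, $\nabla_\theta J$ and $\nabla_\theta^2 J$ are uniformly bounded, so $\bar v^i$ and its derivatives in $x,\tilde x,\bar x,\theta$ inherit the polynomial bounds \eqref{control v1}, \eqref{control v2}. I then re-run the proofs of Lemmas \ref{fluctuation 1}--\ref{fluctuation 2} verbatim with $\bar v^i$ in place of $v^i$: Itô's formula applied to $\alpha_t\bar v^i(X_t,\tilde X_t,(\bar X_t),\theta_t)$ produces a boundary term handled by Borel--Cantelli (as for $J_{t}^{1,1}$), an $\alpha'_s$- and $\alpha_s^2$-weighted drift term converging almost surely (as for $J_{t,0}^{1,2}$), and $dW_s,d\bar W_s$ martingale terms converging by Doob's theorem (as for $J_{t,0}^{1,3}$), giving $\int_{\tau_k}^{\sigma_k}\alpha_t\bar G^i\,dt\to0$, which is exactly the weighted integral. (The convergence is needed at the endpoints $\tau_k,\sigma_k$ rather than $\sigma_k+\eta$, but the same argument applies since the constituent terms converge at every large time.)

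Combining the three pieces, for $k$ large enough the total fluctuation is at most $\tfrac{\kappa^2\mu}{16}$ in absolute value, whence
\[
J(\theta_{\sigma_k})-J(\theta_{\tau_k})\le -\tfrac{\kappa^2\mu}{8}+\tfrac{\kappa^2\mu}{16}=-\tfrac{\kappa^2\mu}{16}=:-\gamma_1(\kappa)<0,
\]
which is the claim. \emph{The main obstacle}, as indicated, is precisely the weighted fluctuation integral: the factor $\nabla_\theta J(\theta_t)$ prevents a direct appeal to $\Delta^i_{\tau_k,\sigma_k}\to0$, and the fix is to absorb $\nabla_\theta J(\theta)$ into a scalar Poisson equation whose solution still obeys the polynomial growth bounds so that the fluctuation machinery applies unchanged.
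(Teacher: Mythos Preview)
Your proposal is correct and follows essentially the same route as the paper: the chain-rule decomposition into a descent term bounded via $|\nabla_\theta J(\theta_s)|\ge\kappa/2$ and Lemma \ref{estimation f}, plus weighted fluctuation terms handled by folding $\nabla_\theta J(\theta)$ into new scalar Poisson equations and re-running the argument of Lemmas \ref{fluctuation 1}--\ref{fluctuation 2}, yielding the same constant $\gamma_1=\kappa^2\mu/16$. Your observation that $\bar v^i=\langle\nabla_\theta J(\theta),v^i\rangle$ automatically solves the weighted Poisson equation (since $\mathcal L^\theta$ acts only on the spatial variables) is a slightly cleaner way to state what the paper does more tersely.
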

\begin{proof}
By chain rule, we have that
\bae
J(\theta_{\sigma_k}) - J(\theta_{\tau_k}) &= - \int_{\tau_k}^{\sigma_k} \alpha_\rho \left|\nabla J(\theta_\rho)\right|^2 d\rho - 2\int_{\tau_k}^{\sigma_k} \alpha_\rho \langle \nabla J(\theta_\rho),\ Z^1_\rho \rangle d\rho - 2\int_{\tau_k}^{\sigma_k} \alpha_\rho \langle \nabla J(\theta_\rho),\  Z^2_\rho \rangle d\rho \\
&=: M_{1,k} + M_{2,k} + M_{3,k}.
\eae
For $M_{1, k}$, note that for $\rho \in\left[\tau_{k}, \sigma_{k}\right]$ we have $\frac{\left|\nabla_\theta J\left(\theta_{\tau_{k}} \right)\right|}{2} \le
|\nabla_\theta J(\theta_{\rho})| \le 2\left|\nabla_\theta J\left(\theta_{\tau_{k}}\right)\right|$. Thus for sufficiently large $k$, we have by Lemma \ref{estimation f}
$$
M_{1, k} \le -\frac{\left|\nabla_\theta J\left(\theta_{\tau_{k}} \right)\right|^{2}}{4} \int_{\tau_{k}}^{\sigma_{k}} \alpha_\rho d\rho \le -\frac{\left|\nabla_\theta J\left(\theta_{\tau_{k}} \right)\right|^{2}}{8} \mu.
$$
For $M_{2, k}$ and $M_{3,k}$, we can use the same method of Poisson equations as in Lemmas \ref{fluctuation 1} and \ref{fluctuation 2}. Define  
\bae
\label{object error}
G^1(x,\tilde x, \theta) &= \langle \nabla_\theta J(\theta),\ (\e_{Y \sim \pi_{\theta}}f(Y)-\beta) \left( \nabla f(x) \tilde x -  \nabla_\theta \e_{Y \sim \pi_{\theta}}f(Y) \right)^\top \rangle \\
G^2(x,\tilde x, \bar x, \theta) &= \langle \nabla_\theta J(\theta),\ \left( f(\bar x) - \e_{Y \sim \pi_{\theta}}f(Y) \right) \left(\nabla f(x) \tilde x\right)^\top \rangle,
\eae 
and use the solution of the corresponding Poisson equations 
\begin{eqnarray}
\mathcal{L}^\theta_{x, \tilde x} v^1(x, \tilde x, \theta) &=& G^1(x, \tilde x, \theta), \notag \\
\mathcal{L}^\theta_{x, \tilde x, \bar x} v^2(x, \tilde x, \bar x, \theta) &=& G^2(x, \tilde x, \bar x, \theta), 
\end{eqnarray}
as in Lemmas \ref{fluctuation 1} and \ref{fluctuation 2} to prove $M_{2,k}, M_{3,k}\to 0$ as $k \to \infty$ almost surely.

Combining the above results, we obtain that for $k$ large enough such that $\left|M_{2, k}\right| + \left|M_{3, k}\right| \le \delta<\frac{\mu}{16} \kappa^{2}$
\begin{eqnarray}
J\left(\theta_{\sigma_{k}}\right)-J\left(\theta_{\tau_{k}} \right)  &\le& -\frac{\left|\nabla J\left(\theta_{\tau_{k}} \right)\right|^{2}}{8} \mu+\delta \notag \\
&\le& -\frac{\mu}{8} \kappa^{2}+\frac{\mu}{16} \kappa^{2} \notag \\
&=& -\frac{\mu}{16} \kappa^{2}.
\end{eqnarray}
Let $\gamma_1=\frac{\mu}{16} \kappa^{2}$, which concludes the proof of the lemma.
\end{proof}

\begin{lemma}
\label{increasing f}
Under Assumption \ref{condition}, suppose that there exists an infinite number of intervals $ I_k = [\tau_k, \sigma_k)$. Then, there is a fixed constant $\gamma_2 < \gamma_1$ such that for $k$ large enough (where $k$ can be random),
\beq
J(\theta_{\tau_k}) - J(\theta_{\sigma_{k-1}}) \le \gamma_2.
\eeq
\end{lemma}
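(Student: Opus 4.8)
The plan is to mirror the chain-rule decomposition that opens Lemma \ref{decreasing f}, but to apply it over the ``small gradient'' interval $J_k = [\sigma_{k-1}, \tau_k)$ rather than the ``large gradient'' interval $I_k = [\tau_k, \sigma_k)$. Using \eqref{gradient with error}, \eqref{error}, and the chain rule exactly as there, I would first write
\bae
J(\theta_{\tau_k}) - J(\theta_{\sigma_{k-1}}) &= -\int_{\sigma_{k-1}}^{\tau_k} \alpha_\rho \left| \nabla J(\theta_\rho) \right|^2 d\rho + \int_{\sigma_{k-1}}^{\tau_k} \alpha_\rho \langle \nabla J(\theta_\rho),\ Z_\rho^1 \rangle d\rho + \int_{\sigma_{k-1}}^{\tau_k} \alpha_\rho \langle \nabla J(\theta_\rho),\ Z_\rho^2 \rangle d\rho \\
&=: \tilde M_{1,k} + \tilde M_{2,k} + \tilde M_{3,k}.
\eae
The leading term $\tilde M_{1,k}$ is manifestly nonpositive, so it only helps and may be dropped when seeking an upper bound: $J(\theta_{\tau_k}) - J(\theta_{\sigma_{k-1}}) \le \tilde M_{2,k} + \tilde M_{3,k}$. (In particular, unlike the descent estimate in Lemma \ref{decreasing f}, I do not need any lower bound on $\tilde M_{1,k}$ here.)

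It then remains to show $\tilde M_{2,k} \to 0$ and $\tilde M_{3,k} \to 0$ almost surely as $k \to \infty$. For this I would reuse the Poisson-equation machinery of Lemmas \ref{fluctuation 1} and \ref{fluctuation 2} with the scalar inner-product error functions $G^1, G^2$ introduced in \eqref{object error}. Since $\nabla_\theta J(\theta)$ is bounded (as established immediately before this lemma), these $G^i$ still satisfy the hypotheses of Lemmas \ref{poisson eq} and \ref{poisson eq 2}, so the associated Poisson solutions $v^1, v^2$ enjoy the same polynomial bounds. Applying It\^o's formula to $\alpha_t v^i$ on $[\sigma_{k-1}, \tau_k]$ yields the representations of \eqref{representation 1} and \eqref{representation 2} with the limits $\tau_k$ and $\sigma_{k-1}$ in place of $\sigma_k + \eta$ and $\tau_k$.

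The key observation, and what distinguishes $J_k$ from $I_k$, is that the interval $J_k = [\sigma_{k-1}, \tau_k)$ carries no a priori length bound (whereas $I_k$ obeys $\int_{\tau_k}^{\sigma_k}\alpha_s\,ds \le \mu$ by Lemma \ref{estimation f}), so one might fear the fluctuations accumulate over a long interval. The resolution is that every term in the It\^o representation has a vanishing increment regardless of the interval's length, provided both endpoints tend to infinity: the two boundary terms are controlled by the quantities $J_t^{1,1}, J_t^{2,1} \to 0$; the finite-variation terms are increments of the almost surely convergent processes of \eqref{conv 2} and \eqref{conv 2 2}; and the stochastic integrals are increments of the almost surely convergent martingales of \eqref{conv 3} and \eqref{conv 3 2}. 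Because the hypothesis of infinitely many intervals guarantees $\sigma_{k-1}, \tau_k \to \infty$, all of these increments vanish, whence $\tilde M_{2,k}, \tilde M_{3,k} \to 0$.

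Finally, combining the two steps, for $k$ large enough $\tilde M_{2,k} + \tilde M_{3,k}$ is smaller than any prescribed positive number, so fixing any constant $\gamma_2$ with $0 < \gamma_2 < \gamma_1$ (for instance $\gamma_2 = \gamma_1/2$) gives $J(\theta_{\tau_k}) - J(\theta_{\sigma_{k-1}}) \le \gamma_2$ for all large $k$. The main point requiring care is precisely the absence of a length bound on $J_k$: the whole argument hinges on the fact that the Poisson-equation representation reduces the fluctuation contributions to \emph{increments of convergent quantities} rather than to cumulative integrals of $|Z^i|$, which is what allows the bound to hold uniformly over arbitrarily long gap intervals.
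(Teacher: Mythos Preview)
Your proposal is correct and follows essentially the same approach as the paper: the chain-rule decomposition, dropping the nonpositive $\tilde M_{1,k}$, and invoking the Poisson-equation machinery with the inner-product error functions \eqref{object error} to show the remaining fluctuation terms vanish. The paper's own proof is in fact terser than yours---it simply says ``as in the proof of Lemma \ref{decreasing f}''---so your explicit discussion of why the absence of a length bound on $J_k$ causes no trouble (increments of convergent quantities rather than cumulative integrals) is a helpful elaboration of what the paper leaves implicit.
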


\begin{proof}
By chain rule, we have 
\bae
J(\theta_{\tau_k}) - J(\theta_{\sigma_{k-1}}) &= - \int_{\sigma_{k-1}}^{\tau_k} \alpha_\rho \left|\nabla_\theta J(\theta_\rho)\right|^2 d\rho + \int_{\sigma_{k-1}}^{\tau_k} \alpha_\rho \langle \nabla_\theta J(\theta_\rho),\ Z^1_\rho \rangle d\rho +  \int_{\sigma_{k-1}}^{\tau_k} \alpha_\rho \langle \nabla_\theta J(\theta_\rho),\ Z^2_\rho \rangle d\rho \\
&\le \int_{\sigma_{k-1}}^{\tau_k} \alpha_\rho \langle \nabla_\theta J(\theta_\rho),\ Z^1_\rho \rangle d\rho +  \int_{\sigma_{k-1}}^{\tau_k} \alpha_\rho \langle \nabla_\theta J(\theta_\rho),\ Z^2_\rho \rangle d\rho.
\eae
As in the proof of Lemma \ref{decreasing f} we get that for $k$ large enough, the right hand side of the last display can be arbitrarily small, which concludes the proof of the lemma.
\end{proof}

\begin{proof}[Proof of Theorem \ref{conv f}:]
Recalling \eqref{cycle of time}, we know $\tau_k$ is the first time $|\nabla_\theta J(\theta_t)|> \kappa$ when $t > \sigma_{k-1}$. Thus, if for any fixed $\kappa>0$, there only exists a finite number of times $\tau_{k}$, then there is a finite $T^{*}$ such that $\left|\nabla_\theta J(\theta_t)\right| \le \kappa$ for $t \ge T^{*}$ and the proof of \eqref{conv f} is complete. We now use a ``proof by contradiction". Suppose there are an infinite number of times $\tau_{k}$, then by Lemma \ref{decreasing f} and \ref{increasing f}, we have for sufficiently large $k$ (integer k can be random) that
$$
\begin{aligned}
&J\left(\theta_{\sigma_{k}}\right)-J\left(\theta_{\tau_{k}}\right) \le-\gamma_{1} \\
&J\left(\theta_{\tau_{k}}\right)-J\left(\theta_{\sigma_{k-1}}\right) \le \gamma_{2}
\end{aligned}
$$
with $0<\gamma_{2}<\gamma_{1}$. Choose $N$ large enough so that the above relations hold simultaneously for $k \geq N$. Then for all $n \ge N$
\begin{eqnarray}
J\left(\theta_{\tau_{n+1}}\right) - J\left(\theta_{\tau_{N}}\right) &=& \sum_{k=N}^{n}\left[J\left(\theta_{\sigma_{k}}\right) - J\left(\theta_{\tau_{k}}\right) + J\left(\theta_{\tau_{k+1}}\right) - J\left(\theta_{\sigma_{k}}\right)\right] \notag \\
&\le& \sum_{k=N}^{n}\left(-\gamma_{1}+\gamma_{2}\right) \notag \\
&<& (n - N) \times \left(-\gamma_{1}+\gamma_{2}\right).
\end{eqnarray}
Letting $n \rightarrow \infty$, we observe that $J\left(\theta_{\tau_{n}}\right) \rightarrow -\infty$, which is a contradiction, since by definition $J(\theta_t) \ge 0$. Thus, there can be at most finitely many $\tau_{k}$. Thus, there exists a finite random time $T$ such that almost surely $|\nabla_\theta J(\theta_t)| < \kappa$ for $t \ge T$. Since $\kappa$ is arbitrarily chosen, we have proven that $|\nabla_\theta J(\theta_t)| \to 0$ as $t \to \infty$ almost surely.
\end{proof}

\section{Numerical Performance of the Online Algorithm}{\label{numerical experiment}}

\hspace{1.4em} In this section, we will implement the continuous-time stochastic gradient descent algorithm \eqref{nonlinear update} and evaluate its numerical performance. The algorithm is implemented for a variety of linear and nonlinear models. The algorithm is also implemented for the simultaneous optimization of both the drift and volatility functions, optimizing over a path-dependent SDE, and optimizing over the auto-covariance of an SDE. In our numerical experiments, we found that the performance of the algorithm can depend upon carefully selecting hyperparameters such as the learning rate and mini-batch size. The algorithm with mini-batch size $N$ is
\bae
\label{nonlinear update mini batch}
\frac{d\theta_t}{dt} &= -2\alpha_t \left( \frac1N \sum_{i=1}^N \left(f(\bar X^{(i)}_t) - \beta \right) \right) \cdot \left( \frac1N \sum_{i=1}^N \left( \nabla f\left(X^{(i)}_t\right) \tilde X^{(i)}_t \right)^\top \right), \\
d \tilde X^{(i)}_t &= \left( \nabla_x \mu\left( X^{(i)}_t, \theta_t \right)\tilde X^{(i)}_t + \nabla_\theta \mu\left(X^{(i)}_t, \theta_t\right) \right) dt + \left( \nabla_x \sigma\left(X^{(i)}_t,\theta_t\right)\tilde X^{(i)}_t + \nabla_\theta \sigma\left(X^{(i)}_t, \theta_t\right) \right) dW^{(i)}_t, \\
dX^{(i)}_t &= \mu\left(X^{(i)}_t, \theta_t\right) dt + \sigma\left(X^{(i)}_t, \theta_t\right) dW^{(i)}_t, \\
d\bar X^{(i)}_t &= \mu\left( \bar X^{(i)}_t, \theta_t \right) dt + \sigma\left(\bar X^{(i)}_t, \theta_t\right) d \bar W^{(i)}_t,
\eae
for $i = 1, 2, \cdots, N$. The notation $(i)$ indicates the $i$-th sample in the mini-batch. $\frac1N \sum_{i=1}^N \left( f\left(\bar X^{(i)}_t\right) - \beta \right)$ and $\frac1N  \sum_{i=1}^N \left(\nabla f\left(X^{(i)}_t\right) \tilde X^{(i)}_t \right)$ are stochastic estimates of $\e_{Y \sim \pi_{\theta_t}}\left[ f(Y) - \beta \right]$ and $\nabla_\theta \left( \e_{Y \sim \pi_{\theta_t}}\left[ f(X) - \beta \right] \right)$. A larger mini-batch size reduces the noise in the estimation of the gradient descent direction. The learning rate must decay as $t \rightarrow \infty$, but it should not be decreased too rapidly and the initial magnitude should be large enough so that the algorithm converges quickly. In our examples where there is a unique global minimizer, our algorithm will always converges to the optimum if we choose the correct learning rate. For the examples with multiple global minimizers, the algorithm will converge to one of the global minimizers.

\begin{remark}
We discuss below some important aspects of the numerical implementation:
\begin{itemize}
\item[(a)] Disretization of SDEs: To implement the algorithm \eqref{nonlinear update mini batch}, we use an Euler scheme with step size $\Delta = 10^{-3} - 10^{-2}$. For example, $X_t^{(i)}$ is simulated as:
\bae
X^{(i)}_{(n+1)\Delta} &= X^{(i)}_{n \Delta} +  \left( \nabla_x \mu( X^{(i)}_{n \Delta}, \theta_{n \Delta} )\tilde X^{(i)}_{n \Delta} + \nabla_\theta \mu(X^{(i)}_{n \Delta}, \theta_{n \Delta}) \right) * \Delta \\ 
&+ \left( \nabla_x \sigma(X^{(i)}_{n \Delta}, \theta_{n \Delta})\tilde X^{(i)}_{n \Delta} + \nabla_\theta \sigma(X^{(i)}_{n \Delta}, \theta_{n \Delta}) \right) * N(0, 1) * \sqrt{\Delta}, \\
\eae
\item[(b)] Learning Rate and mini-batch size: The learning rate can be chosen to be piecewise constant or gradually decreasing with learning rate schedule
$$
\alpha_t = \frac{C}{1 + t},
$$
where $C$ is also a hyper-parameter needs to be selected. The mini-batch size $N$ that we use is of the order $10^2 - 10^4$.
\item[(c)] Initial Values for SDE simulations: In \eqref{nonlinear mini batch}, the initial value of the gradient process $\tilde X_t$ must be zero. The choice of initial points for $X_t, \bar X_t$ is flexible. In our experiments, we usually choose $X_0 = \bar X_0 = 1$. $\theta_t$ can be randomly initialized or initialized at a deterministic point such as zero. 
\item[(d)] Objective Function: For some simple examples, we can directly calculate the objective function in closed form. For those examples, we directly use that formula to compute the objective function during training. For the more complex examples (with no closed-form formula), we always approximate the objective function $J(\theta)$ using a time-average since, due to the ergodic theorem,
\beq
\label{ergodic thm}
\lim\limits_{t\to \infty} \frac1t \int_0^t f(X_s^\theta) ds = \e_{Y \sim \pi_\theta} f(Y) \quad \text{a.s.}
\eeq
\end{itemize}   
\end{remark}

\subsection{One-Dimensional Ornstein–Uhlenbeck Process}

\hspace{1.4em} We start with a simple case of a one-dimensional Ornstein–Uhlenbeck process $X_t^\theta \in \mathbb{R}$:
\beq
dX_t^\theta = (\theta - X_t^\theta) dt + dW_t.
\eeq
We will use the algorithm \eqref{nonlinear update} to learn the minimizer for 
\beq
\label{linear object}
J(\theta) = (\e_{Y \sim \pi_\theta} Y - 2)^2.
\eeq
Note that in this case we have the closed-form solution $\pi_\theta \sim N\left(\theta, \frac12\right)$ and thus the global minimizer is $\theta^* = 2$. In Figure \ref{ou mean}, several different sample paths generated by the online algorithm are plotted where all trained parameters converges to the global minimizer ($\theta^* = 2$).

Similarly, we use the algorithm \eqref{nonlinear update} to learn the minimizer for 
\beq
\label{linear object 2}
J(\theta) = (\e_{Y \sim \pi_\theta} Y^2 - 2)^2.
\eeq
In this case, the two global minimizers are $\theta^* = \pm \sqrt{1.5}$. In Figure \ref{ou mean 2}, the parameter trained by the online algorithm converges to a global minimizer. The global minimizer which the algorithm converges to depends on the initial value of $\theta_0$.

\begin{figure}[htbp]
\centering
\begin{minipage}[t]{0.48\textwidth}
\centering
\includegraphics[width=6cm]{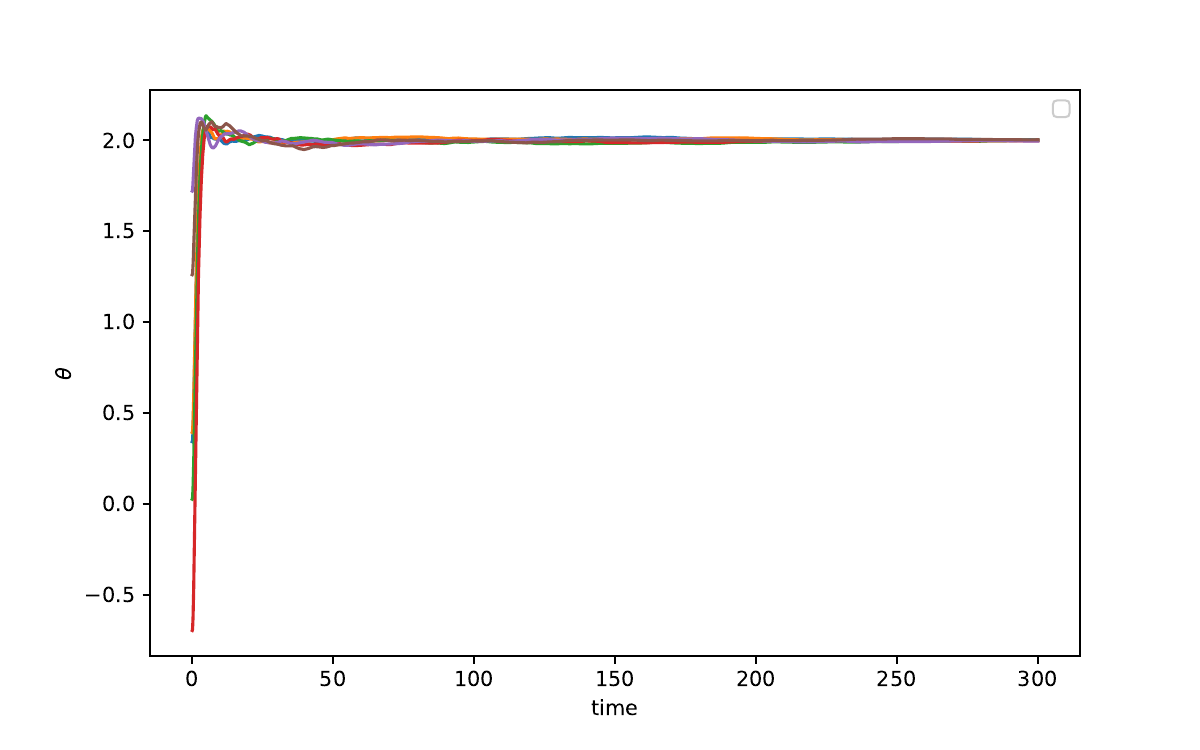}
\caption{ Online Algorithm for the objective function \eqref{linear object}. }
\label{ou mean}
\end{minipage}
\begin{minipage}[t]{0.48\textwidth}
\centering
\includegraphics[width=6cm]{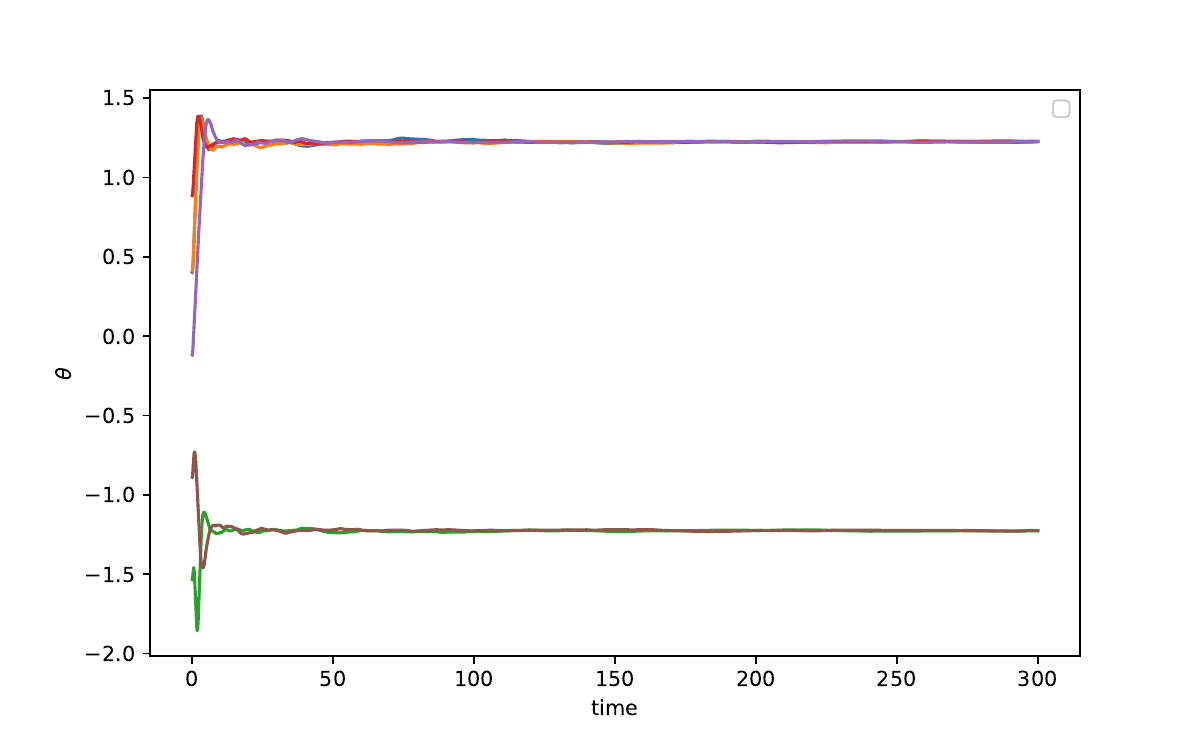}
\caption{ Online Algorithm for the objective function \eqref{linear object 2}. }
\label{ou mean 2}
\end{minipage}
\end{figure}

We now consider a more general Ornstein–Uhlenbeck process with parameters $\theta = \left(\theta^1, \theta^2\right)$:
\beq
\label{1 dim ou}
dX_t^\theta = \left(\theta^1 - \theta^2X_t^\theta \right) dt + dW_t,
\eeq
The online algorithm \eqref{nonlinear update} is used to learn the minimizer for the objective function $ J(\theta) = (\e_{Y \sim \pi_\theta} Y^2 - 2)^2 $. Algorithm \eqref{nonlinear update mini batch} will be used:
\bae
\label{ou}
d\theta^1_t &= -4 \alpha_t \left(\frac1N \sum\limits_{i=1}^{N} \left(\bar X^{(i)}_t\right)^2 - 2\right) \cdot \left( \frac1N \sum\limits_{i=1}^{N} X^{(i)}_t \tilde X^{1,(i)}_t \right)dt \\
d\theta^2_t &= -4 \alpha_t \left(\frac1N \sum\limits_{i=1}^{N} \left(\bar X^{(i)}_t\right)^2 - 2\right) \cdot \left( \frac1N \sum\limits_{i=1}^{N} X^{(i)}_t \tilde X^{2,(i)}_t \right)dt \\
dX^{(i)}_t  &= \left( \theta^1_t - \theta_t^2 X^{(i)}_t \right)dt + dW^i_t \\
d\tilde X^{1,(i)}_t &= \left(1- \theta^2_t \tilde X^{1,(i)}_t \right) dt \\
d\tilde X^{2,(i)}_t &= \left(- X^{(i)}_t - \theta^2_t \tilde X^{2,(i)}_t \right) dt \\
d\bar X^{(i)}_t  &= \left( \theta^1_t - \theta_t^2 \bar X^{(i)}_t \right)dt + d\bar W^i_t
\eae
for $i = 1,2,\cdots, N$. To make the training more stable and accelerate the convergence rate, we choose the batch size $N = 10000$. Figure \ref{ou 2p} and \ref{ou 2p object} show the dynamic of the parameters and objective function during training. 

\begin{figure}[htbp]
\centering
\begin{minipage}[t]{0.48\textwidth}
\centering
\includegraphics[width=6cm]{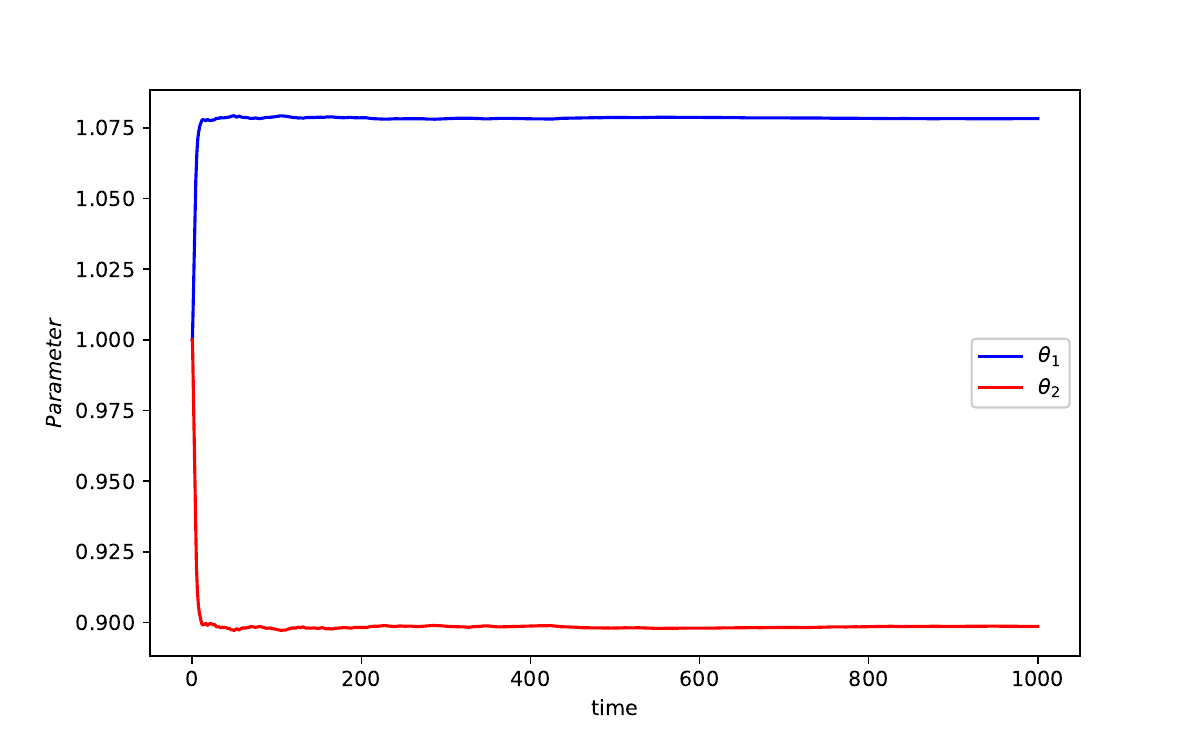}
\caption{ Parameters for algorithm \eqref{ou}.}
\label{ou 2p}
\end{minipage}
\begin{minipage}[t]{0.48\textwidth}
\centering
\includegraphics[width=6cm]{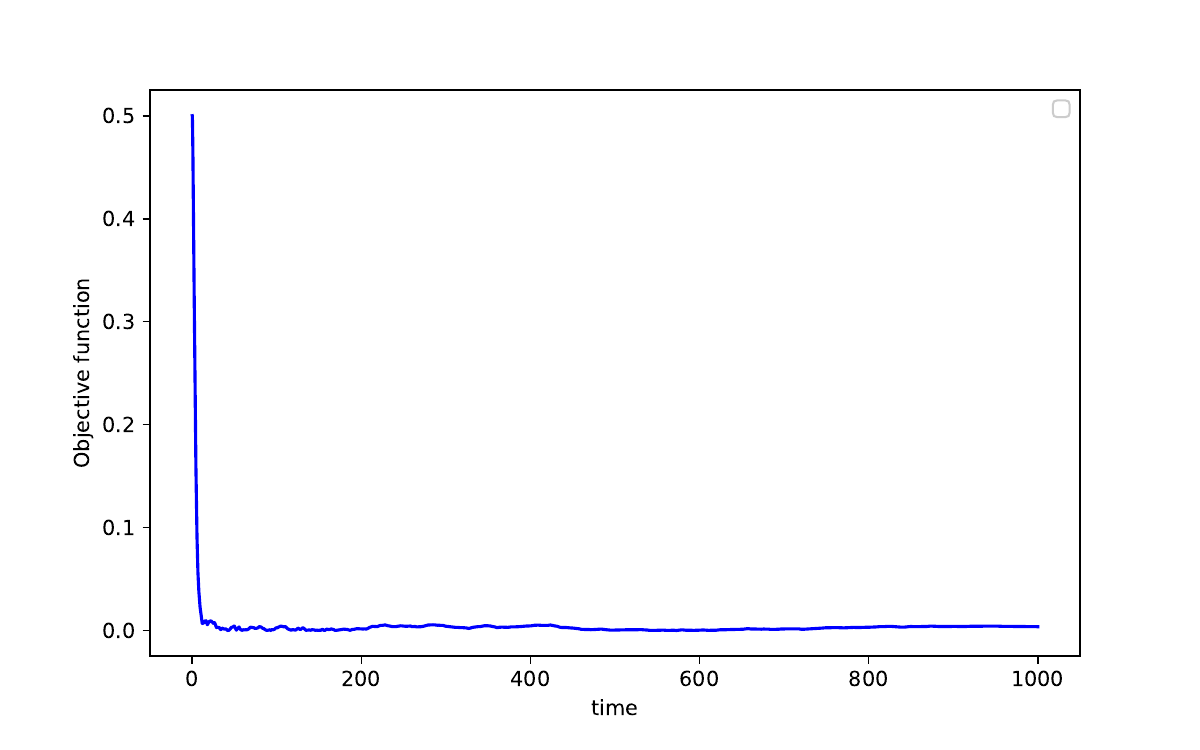}
\caption{Objective function for algorithm \eqref{ou}.}
\label{ou 2p object}
\end{minipage}
\end{figure}

\subsection{One-Dimensional Nonlinear Process}

\hspace{1.4em} We now use the online algorithm to optimize over the stationary distribution of a one-dimensional nonlinear process
\beq
\label{non process}
dX_t^\theta = \left(\theta - X_t^\theta - \left(X_t^\theta\right)^3 \right) dt + dW_t.
\eeq
We use the algorithm \eqref{nonlinear update} to learn the minimizer of $J(\theta) = \left(\e_{Y \sim \pi_\theta} Y^2 - 2\right)^2$. The mini-batch algorithm \eqref{nonlinear mini batch} is used:
\bae
\label{nonlinear mini batch}
d\theta_t &= -4\alpha_t \left( \frac1N \sum\limits_{i=1}^N \left(\bar X_t^{(i)}\right)^2 - 2 \right) \cdot \left( \frac1N \sum\limits_{i=1}^N X^{(i)}_t \tilde X^{(i)}_t \right)dt \\
dX^{(i)}_t  &= \left( \theta_t - X_t^{(i)} - \left(X^{(i)}_t\right)^3 \right)dt + dW^{(i)}_t \\
d\tilde X_t^{(i)} &= \left( 1- \tilde X_t^{(i)} -3 \left(X^{(i)}_t\right)^2 \tilde X_t^{(i)} \right) dt \\
d\bar X_t^{(i)}  &= \left( \theta_t - \bar X_t^{(i)} - \left( \bar X^{(i)}_t\right)^3 \right)dt + d\bar W^{(i)}_t
\eae
for $i = 1,2, \cdots, N$. Figure \ref{nonlinear} shows the convergence of the parameter $\theta_t$. In Figure \ref{nonlinear object}, the objective function decays to zero (the global minimum) very quickly.

\begin{figure}[htbp]
\centering
\begin{minipage}[t]{0.48\textwidth}
\centering
\includegraphics[width=6cm]{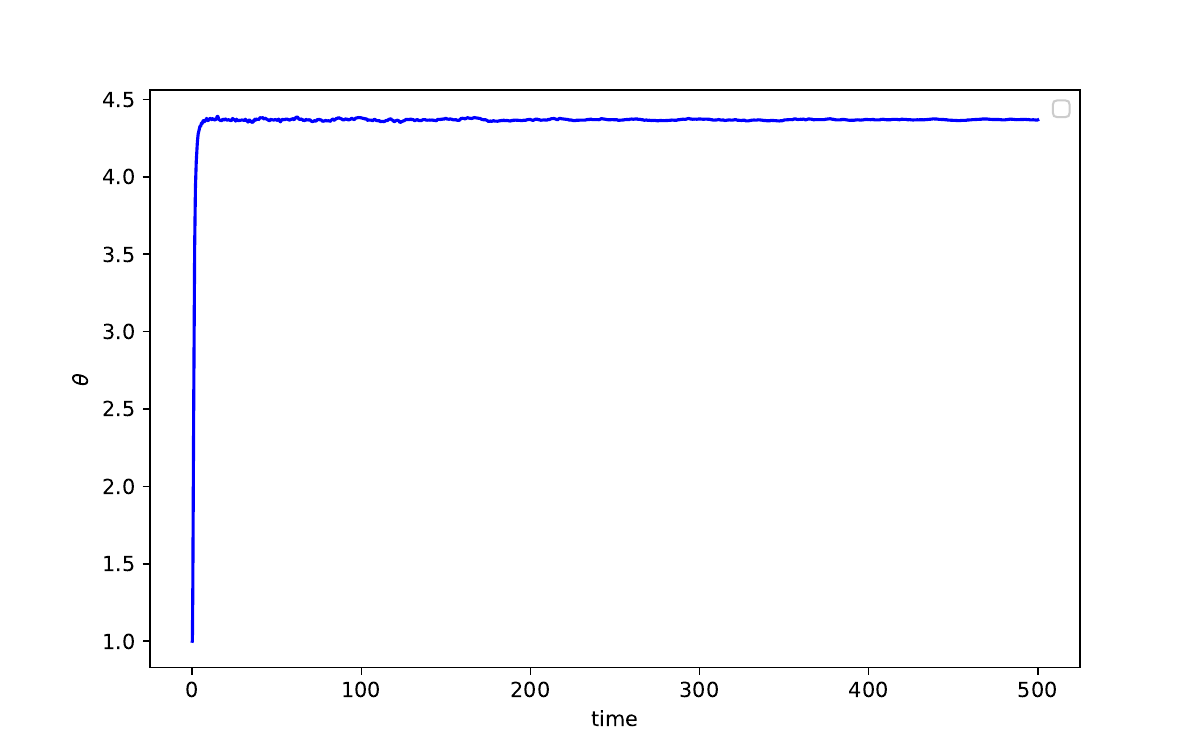}
\caption{ Parameter for algorithm \eqref{nonlinear mini batch}.}
\label{nonlinear}
\end{minipage}
\begin{minipage}[t]{0.48\textwidth}
\centering
\includegraphics[width=6cm]{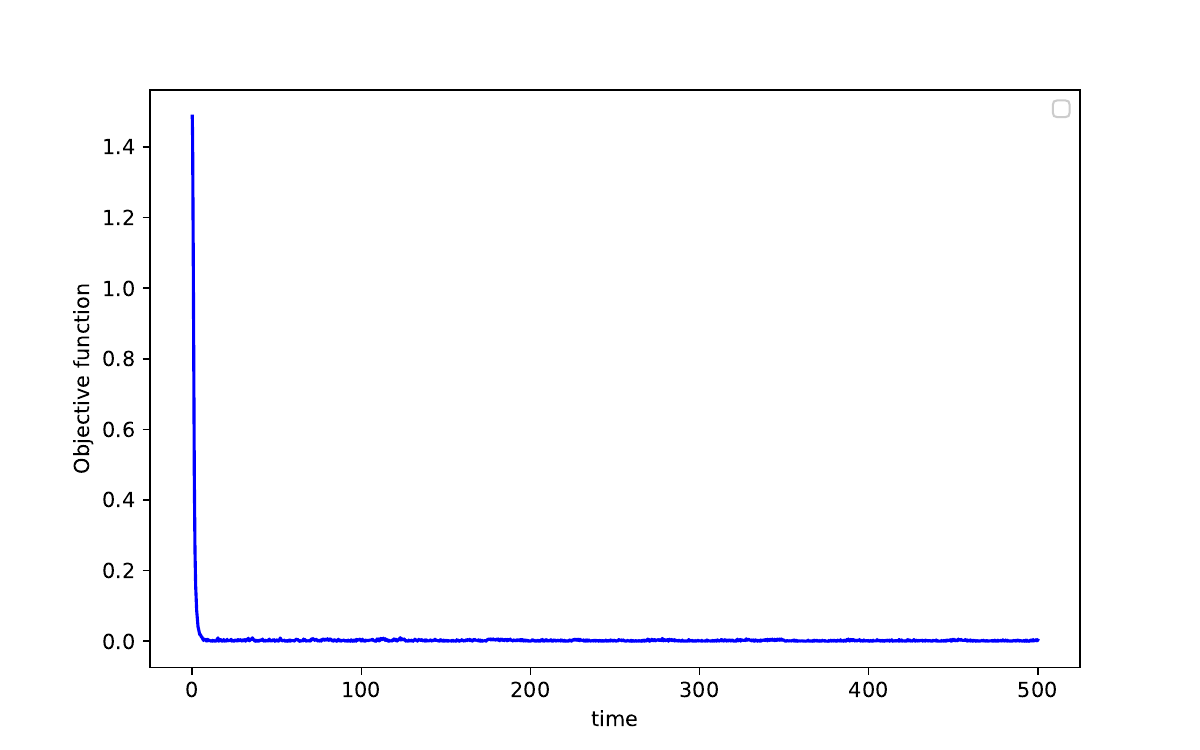}
\caption{ Objective function for algorithm \eqref{nonlinear mini batch}.}
\label{nonlinear object}
\end{minipage}
\end{figure}

\subsection{Optimizing over the Drift and Volatility Coefficients}

\hspace{1.4em} We now optimize over the drift and volatility functions of the process
\beq
dX_t^\theta = ( \mu - X_t^\theta) dt + \sigma dW_t
\eeq
with parameters $\theta = (\mu, \sigma)$. The online algorithm \eqref{nonlinear update} is used to learn the minimizer of $J(\theta) = (\e_{Y \sim \pi_\theta} Y^2 - 2)^2$. The mini-batch algorithm \eqref{nonlinear update mini batch} is used: 
\bae
\label{linear vol batch}
d\mu_t &= -4\alpha_t \left( \frac1N \sum_{i=1}^N \left(\bar X^{(i)}_t\right)^2 - 2 \right) \cdot \left( \frac1N \sum_{i=1}^N X^{(i)}_t \tilde X^{1,(i)}_t \right) dt \\
d\sigma_t &= -4\alpha_t \left( \frac1N \sum_{i=1}^N \left(\bar X^{(i)}_t\right)^2 - 2 \right) \cdot \left( \frac1N \sum_{i=1}^N X^{(i)}_t \tilde X^{2,(i)}_t \right) dt \\
dX^i_t  &= \left( \mu_t - X^{(i)}_t \right)dt + \sigma_t dW^{(i)}_t \\
d\tilde X^{1,(i)}_t &= \left(1- \tilde X^{1,(i)}_t\right) dt \\
d\tilde X^{2,(i)}_t &= - \tilde X^{2,(i)}_t dt + dW^{(i)}_t \\
d\bar X^{(i)}_t  &= \left( \mu_t - \bar X^{(i)}_t\right)dt + \sigma_t d\bar W^{(i)}_t
\eae
for $i = 1,2, \cdots, N$. In Figure \ref{linear vol}, the trained parameters $\mu_t, \sigma_t$ converge and in Figure \ref{linear vol object} the objective function $J(\theta_t) \to 0 $ very quickly.

\begin{figure}[htbp]
\centering
\begin{minipage}[t]{0.48\textwidth}
\centering
\includegraphics[width=6cm]{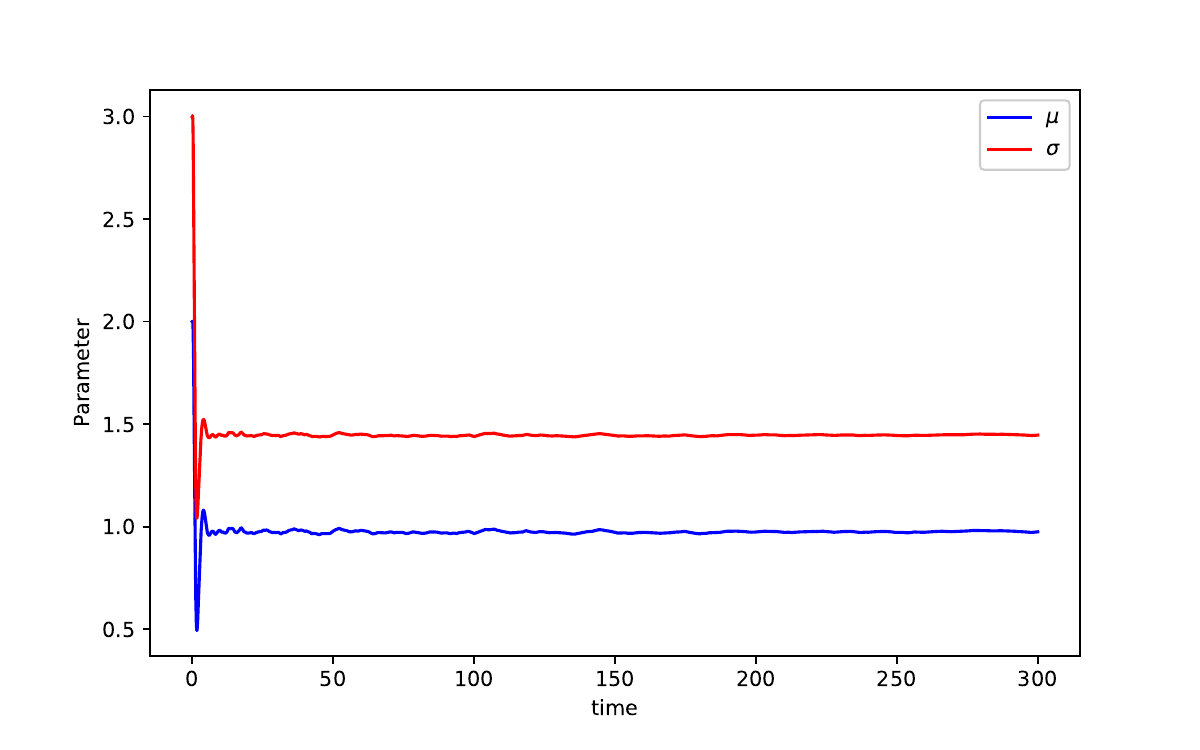}
\caption{ Parameters for algorithm \eqref{linear vol batch}.}
\label{linear vol}
\end{minipage}
\begin{minipage}[t]{0.48\textwidth}
\centering
\includegraphics[width=6cm]{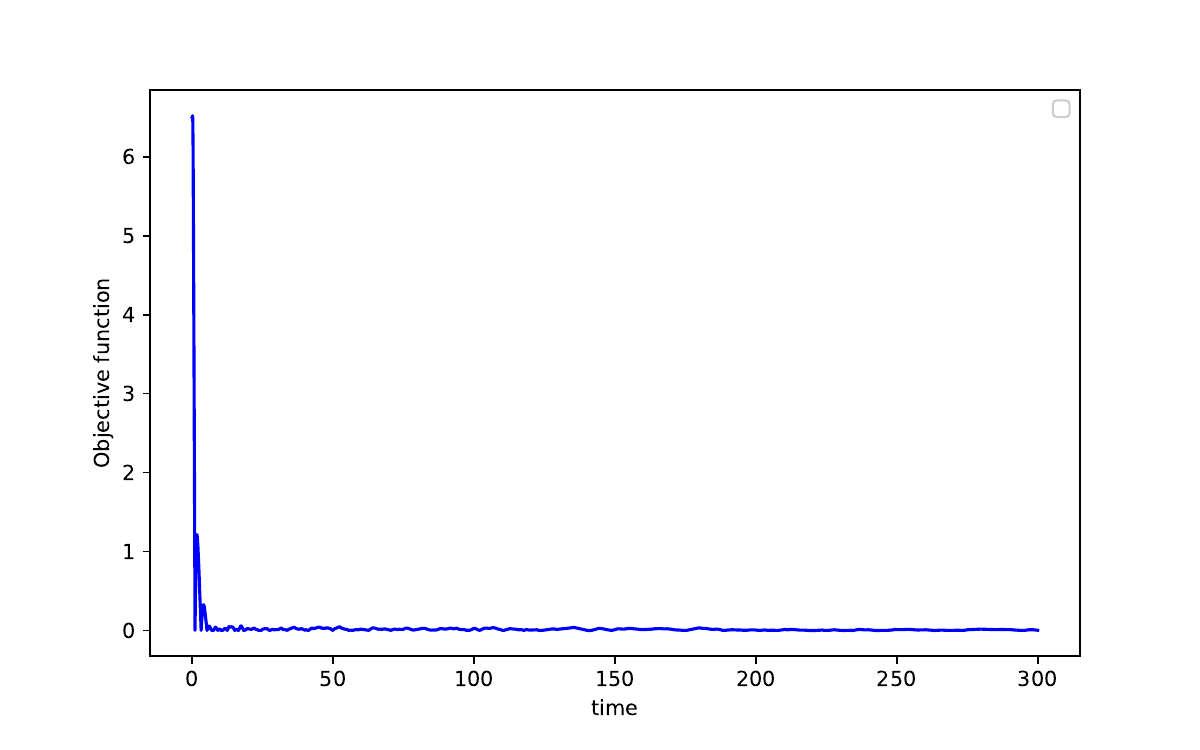}
\caption{Objective function for algorithm \eqref{linear vol batch}.}
\label{linear vol object}
\end{minipage}
\end{figure}

We also implement the online algorithm for the nonlinear process
\beq
dX_t^\theta = \left(\mu - \left(X_t^\theta\right)^3 \right) dt + \sigma X_t^\theta dW_t,
\eeq
where $\theta= (\mu, \sigma)$ are the parameters and the objective function is $J(\theta) = \left(\e_{Y \sim \pi_\theta} Y^2 - 10\right)^2$. The mini-batch algorithm \eqref{nonlinear update mini batch} now becomes:
\bae
\label{nonlinear vol batch}
d\mu_t &= -4\alpha_t \left( \frac1N \sum_{i=1}^N \left(\bar X^{(i)}_t\right)^2 - 2 \right) \cdot \left( \frac1N \sum_{i=1}^N X^{(i)}_t \tilde X^{1,(i)}_t \right) dt \\
d\sigma_t &= -4\alpha_t \left( \frac1N \sum_{i=1}^N (\bar X^{(i)}_t)^2 - 2 \right) \cdot \left( \frac1N \sum_{i=1}^N X^{(i)}_t \tilde X^{2,(i)}_t \right) dt  \\
dX^i_t  &= ( \mu_t - \left(X^{(i)}_t\right)^3 ) dt + \sigma_t X_t^{(i)} dW^{(i)}_t \\
d\tilde X^{1,(i)}_t &= \left( 1 - 3 \left(X_t^{(i)}\right)^2 \tilde X^{1,(i)}_t \right) dt + \sigma_t \tilde X_t^{1,(i)} dW_t^{(i)} \\
d\tilde X^{2,(i)}_t &= - 3 \left(X_t^{(i)}\right)^2 \tilde X^{2,(i)}_t dt + \left( X_t^{(i)} + \sigma_t \tilde X_t^{2,(i)} \right) dW^{(i)}_t \\
d\bar X^{(i)}_t  &= \left( \mu_t - \left(\bar X^{(i)}_t\right)^3 \right)dt + \sigma_t \bar X_t^{(i)} d\bar W^{(i)}_t
\eae
for $i = 1,2,\cdots, N$. In Figure \ref{nonlinear vol}, the trained parameters $\mu_t, \sigma_t$ converge and in Figure \ref{nonlinear vol object} the objective function $J(\theta_t) \to 0 $ very quickly.
\begin{figure}[htbp]
\centering
\begin{minipage}[t]{0.48\textwidth}
\centering
\includegraphics[width=6cm]{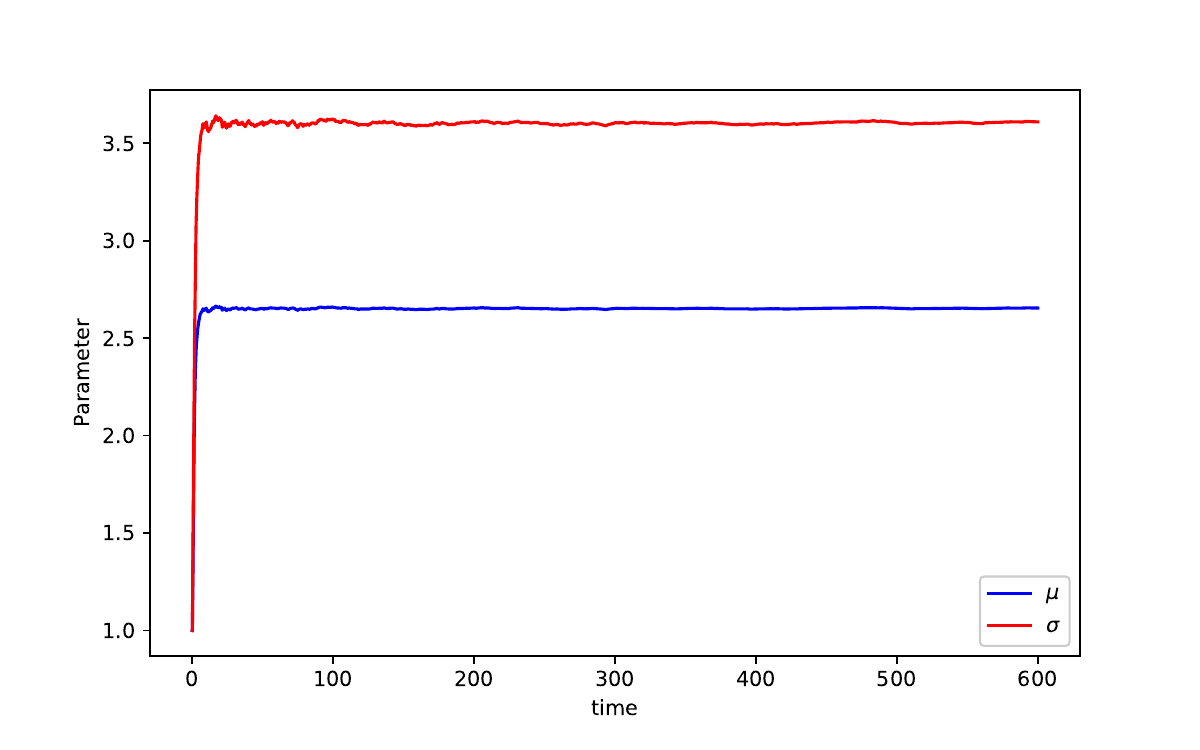}
\caption{ Parameters evolution for algorithm \eqref{nonlinear vol batch}.}
\label{nonlinear vol}
\end{minipage}
\begin{minipage}[t]{0.48\textwidth}
\centering
\includegraphics[width=6cm]{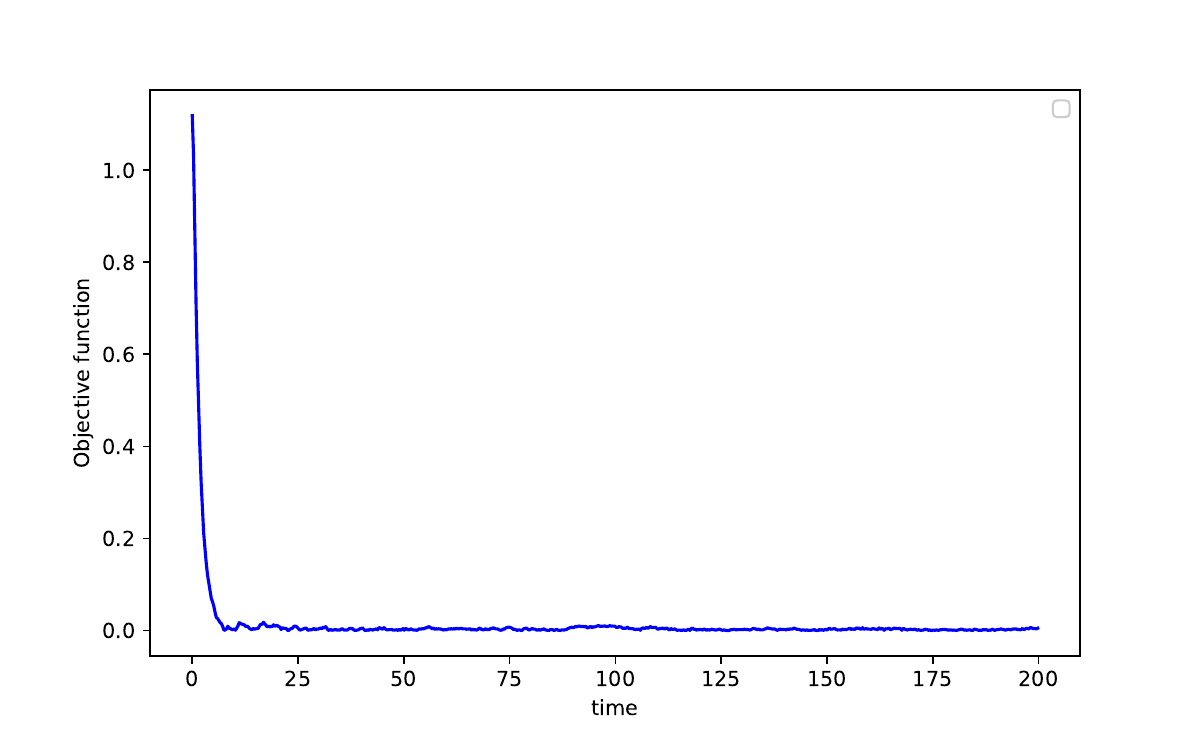}
\caption{ Objective function for  algorithm \eqref{nonlinear vol batch}.}
\label{nonlinear vol object}
\end{minipage}
\end{figure}

\subsection{Multi-Dimensional Independent Ornstein–Uhlenbeck Process}

\hspace{1.4em} We next consider a simple multi-dimensional Ornstein–Uhlenbeck process which consists of $m$ independent copies of \eqref{1 dim ou}. For the parameter $\theta = (\theta^1, \theta^2) \in R^{2m}$, let the m-dimensional Ornstein–Uhlenbeck process be 
\begin{eqnarray}
dX_t^\theta = \left(\theta^1 - \theta^2 \odot X_t^\theta\right) dt + dW_t,
\end{eqnarray}
where $X_t^\theta \in R^m$, $W_t \in R^m$, and $\odot$ is an element-wise product. The objective function is
\beq
\label{multi ou object}
J(\theta) := \left(\sum_{k=1}^m \e_{Y \sim\pi_\theta} |Y_k|^2 - 2m\right)^2.
\eeq
The online algorithm \eqref{nonlinear update} is
\bae
\label{multi ou update}
d\theta^{1}_t &= -4\alpha_t \left(|\bar X_t|^2 - 2\right) X_t \odot \tilde X^{1}_t dt, \\
d\theta^{2}_t &= -4\alpha_t \left(|\bar X_t|^2 - 2\right) X_t \odot \tilde X^{2}_t dt, \\
dX_t  &= \left( \theta^{1}_t - \theta_t^{2} \odot X_t \right)dt + dW^i_t, \\
d\tilde X^{1}_t &= \left(1- \theta^{2}_t \odot \tilde X^{1}_t \right) dt, \\
d\tilde X^{2}_t &= \left(- X_t - \theta^{2}_t \odot \tilde X^{2}_t \right) dt, \\
d\bar X_t  &= \left( \theta^{1}_t - \theta^{2}_t \odot \bar X_t \right)dt + d\bar W^i_t.
\eae
We implement the algorithm for $m =3$ and $m =10$. In Figures \ref{multi_ou_object1} and \ref{multi_ou_object2}, the objective functions $J(\theta_t) \to 0$ as $t$ becomes large. 
\begin{figure}[htbp]
\centering
\begin{minipage}[t]{0.48\textwidth}
\centering
\includegraphics[width=6cm]{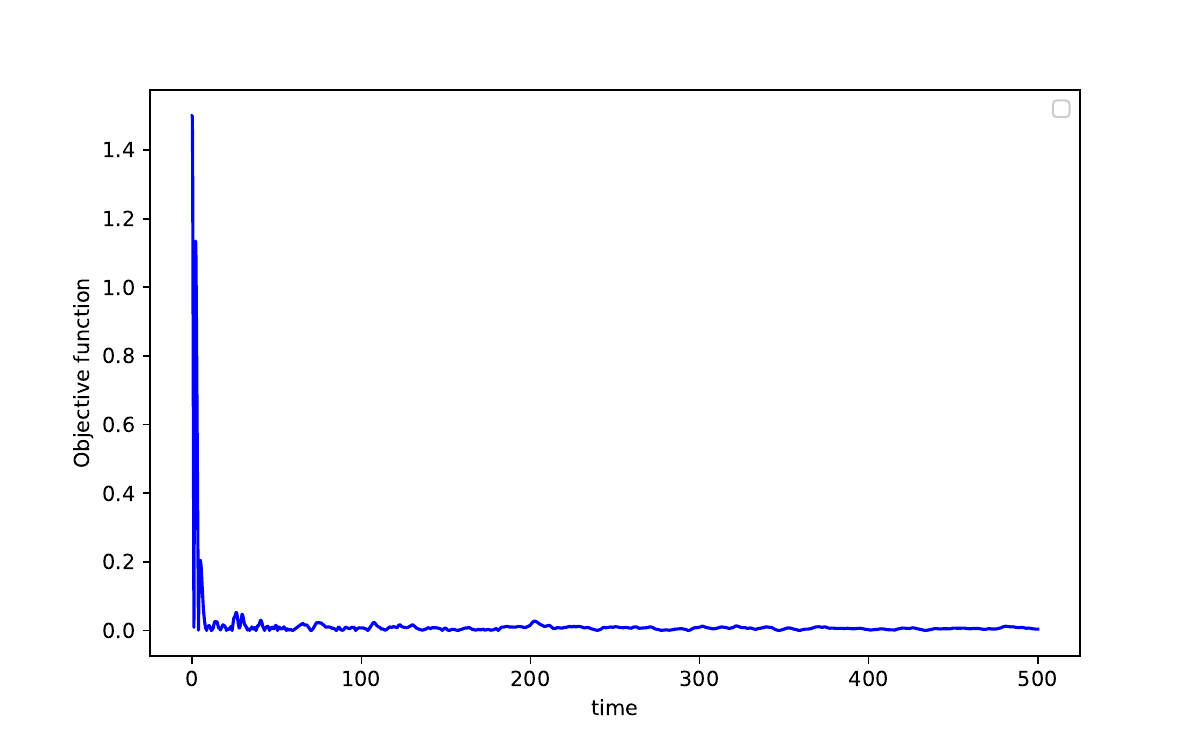}
\caption{ Objective function for \eqref{multi ou update} with $m=3$.}
\label{multi_ou_object1}
\end{minipage}
\begin{minipage}[t]{0.48\textwidth}
\centering
\includegraphics[width=6cm]{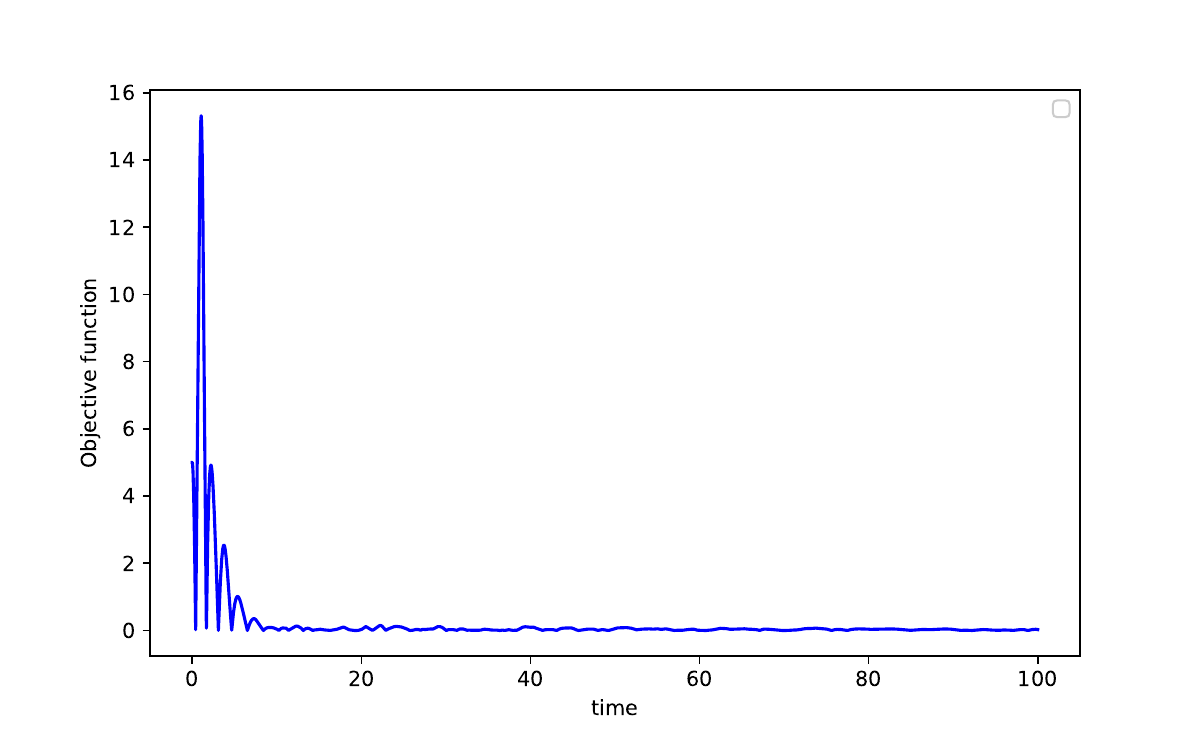}
\caption{Objective function for \eqref{multi ou update} with $m=10$.}
\label{multi_ou_object2}
\end{minipage}
\end{figure}

\subsection{Multi-Dimensional Correlated Ornstein–Uhlenbeck Process}

\hspace{1.4em} For the parameters $ \theta = (\mu, \sigma)$ with $\mu \in \mathbb{R}^{m}, \ \sigma \in \mathbb{R}^{m \times m}$, let the $m$-dimensional process $X_t^{\theta}$ satisfy
\beq
\label{multi correlated process}
dX_t^\theta = \left( \mu - X_t^\theta \right) dt + \sigma dW_t,
\eeq
where $W_t \in \mathbb{R}^m$. Let $X_t^{\theta, i}$ denote the $i$-th element of $X_t^\theta$ and define $\tilde X_t^\mu $ and $\tilde X_t^\sigma$ as the Jacobian matrices of $X_t^\theta$ with respect to $\mu$ and $\sigma$:
\bae
\tilde X_t^\mu &= \nabla_\mu X_t^\theta  \in \mathbb{R}^{m\times m}, \quad \tilde X_t^{\mu, i} = \nabla_\mu X_t^{\theta, i} \in \mathbb{R}^{m}, \\
\tilde X_t^\sigma &= \nabla_\sigma X_t^\theta  \in \mathbb{R}^{m\times m \times m}, \quad \tilde X_t^{\sigma, i} = \nabla_\sigma X_t^{\theta, i} \in \mathbb{R}^{m\times m}.
\eae
Noting that for $i \in \{1,2, \cdots, m\}$ 
$$
dX_t^{\theta, i} = \left( \mu_i - X_t^{\theta, i}\right) dt + \sum_j \sigma_{i, j} dW^j_t,
$$
now the algorithm \eqref{nonlinear update} becomes 
\bae
\label{multi correlated}
d\mu_t &= -4\alpha_t \left(|\bar X_t|^2 - 2m\right) \left(\sum\limits_{k=1}^m X^k_t \tilde X_t^{\mu, k} \right) dt \\
d\lambda_t &= -4\alpha_t \left(|\bar X_t|^2 - 2m \right) \left(\sum\limits_{k=1}^m X^k_t \tilde X_t^{\lambda, k} \right) dt \\
dX_t &= \left( \mu_t - X_t \right) dt + \sigma_t dW_t \\
d\bar X_t &= \left( \mu_t - \bar X_t \right) dt + \sigma_t d\bar W_t \\
d \tilde X_t^{\mu} &= \left( I_m - \tilde X_t^{\mu} \right) dt \\
d \tilde X_t^{\sigma, i} &= -\tilde X_t^{\sigma, i} dt + D_i\left(dW_t\right), \quad i \in \{1, \cdots, m\}
\eae
where $I_m$ is the $m \times m$ identity matrix and where $D_{i}(dW_t)$ is a $m \times m$ matrix with all elements equal to $0$ except $i$-th column being $dW_t$. We examine the algorithm's performance for dimensions $m =3, 10$. In Figures \ref{multi_correlated_mean_object1} and \ref{multi_correlated_mean_object2}, the objective function $J(\theta_t) \to 0$. 
\begin{figure}[htbp]
\centering
\begin{minipage}[t]{0.48\textwidth}
\centering
\includegraphics[width=6cm]{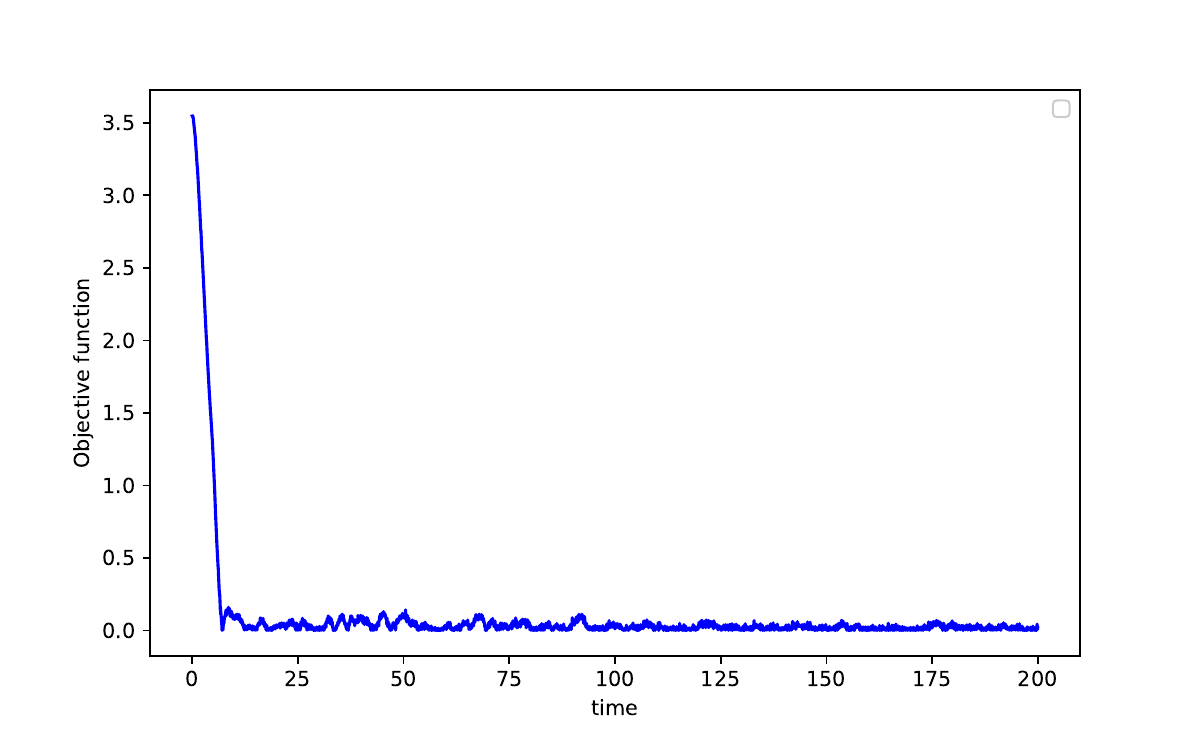}
\caption{Object function for \eqref{multi correlated} with $m =3$. }
\label{multi_correlated_mean_object1}
\end{minipage}
\begin{minipage}[t]{0.48\textwidth}
\centering
\includegraphics[width=6cm]{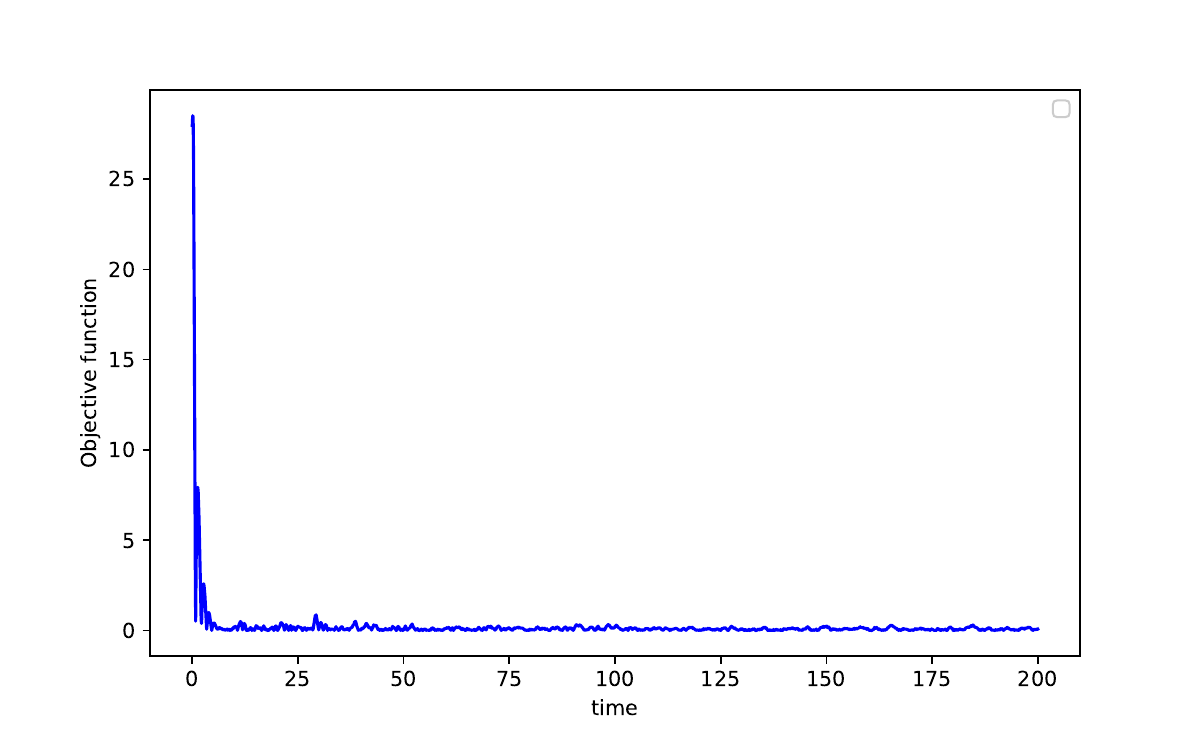}
\caption{ Objective function for \eqref{multi correlated} with $m =10$.}
\label{multi_correlated_mean_object2}
\end{minipage}
\end{figure}

\subsection{Multi-dimensional Nonlinear SDE}

\hspace{1.4em} In our next example, we optimize over the stationary distribution of a multi-dimensional nonlinear SDE:

\beq
\label{MV process}
dX_t^{\theta, i} = \left( \theta - \frac1N \sum_{j=1}^N X_t^{\theta, j} - (X_t^{\theta, i})^3 \right) dt + dW_t^i, \quad i = 1,2,\cdots, N,
\eeq
and now $N$ is the number of agents in the system \eqref{MV process} instead of mini-batch size as before. The objective function is 
\begin{eqnarray}
J(\theta) = \left( \frac1N \sum_{i=1}^N \e_{Y \sim \pi_\theta} Y_i^2 - 2 \right)^2.
\end{eqnarray}
The nonlinear SDE (\ref{MV process}) has a mean-field limit as $N \rightarrow \infty$. Thus, for large $N$, our algorithm could also be used to optimize over the mean-field limit equation (\cite{sznitman1991topics}) for (\ref{MV process}). The online algorithm for (\ref{MV process}) is
\bae
\label{MV}
d\theta_t &= -4\alpha_t \left(\frac1N \sum_{i=1}^N \left(\bar X^i_t\right)^2 - 2\right) \times \left( \frac1N \sum_{i=1}^N X^i_t \tilde X^i_t \right) dt \\
dX^i_t  &= \left( \theta_t - \frac1N \sum_{j=1}^N X^j_t - \left(X_t^i\right)^3 \right)dt + dW^i_t \\
d\tilde X^{i}_t &= \left(1 - \frac1N \sum_{j=1}^N \tilde X^{1,j}_t - 3\left(X_t^i\right)^2 \tilde X_t^i \right) dt \\
d\bar X^i_t  &= \left( \theta_t - \frac1N \sum_{j=1}^N \bar X^j_t - \left(\bar X_t^i\right)^3 \right)dt + d\bar W^i_t
\eae
for $i = 1, 2, \cdots, N$. We will select $N = 1,000$ for our numerical experiment. Therefore, this is an example of high-dimensional SDE model calibration where the dimension of the SDE is $N = 1,000$. Figure \ref{MV parameter} and \ref{MV object} shows the convergence of parameter and objective function.
\begin{figure}[htbp]
\centering
\begin{minipage}[t]{0.48\textwidth}
\centering
\includegraphics[width=6cm]{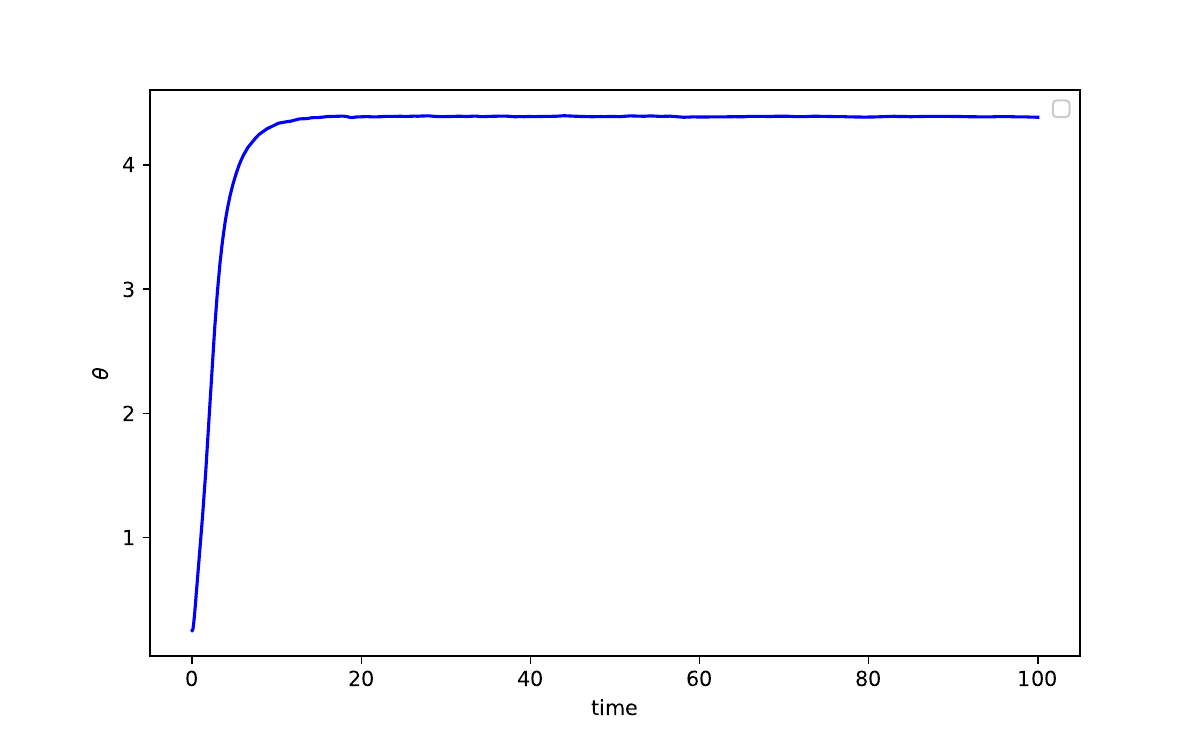}
\caption{ Parameter for algorithm \eqref{MV}.}
\label{MV parameter}
\end{minipage}
\begin{minipage}[t]{0.48\textwidth}
\centering
\includegraphics[width=6cm]{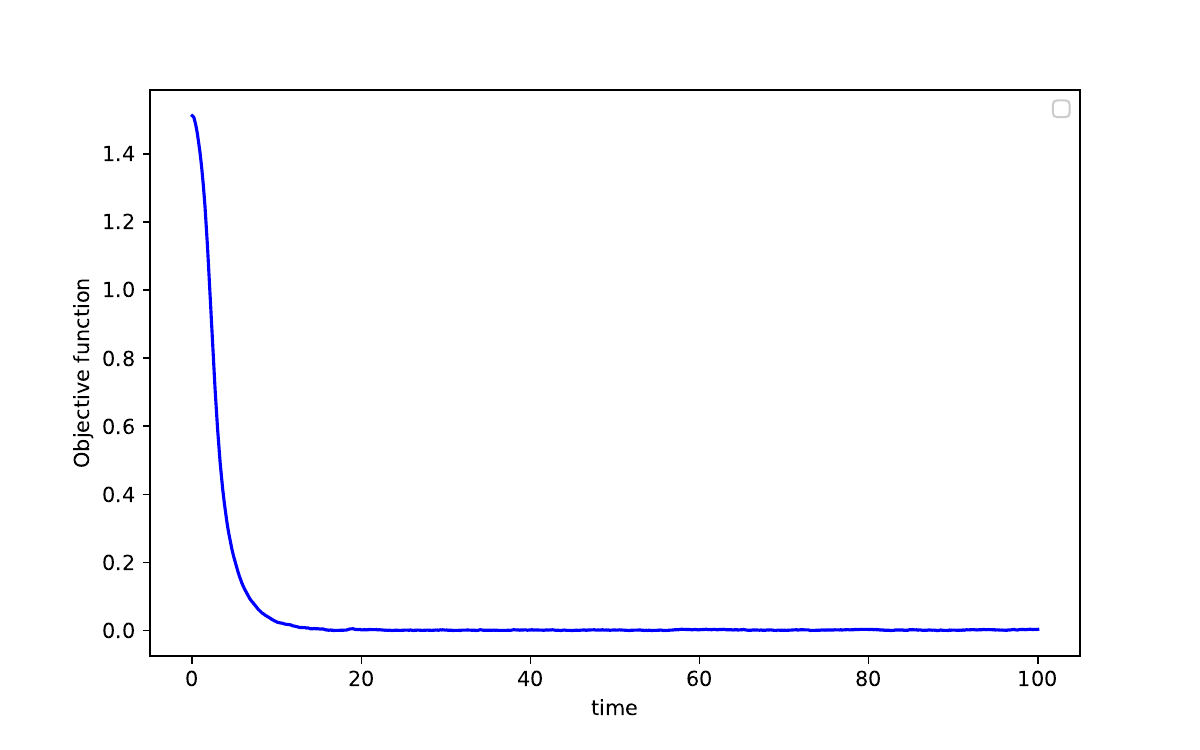}
\caption{Objective function for algorithm \eqref{MV}.}
\label{MV object}
\end{minipage}
\end{figure}

\subsection{Path-dependent SDE}

\hspace{1.4em} We consider the path-dependent SDE
\beq
\label{path}
dX_t^\theta = \left(\theta - X_t^\theta - \frac1t \int_0^t X_s^\theta ds \right) dt + dW_t,
\eeq
where $X_t^\theta, W_t \in \mathbb{R}$. Although path-dependent SDEs are not directly addressed by this article's convergence theory, this numerical example suggests that the online forward propagation
algorithm can also be applied to path-dependent stochastic processes.

For this numerical example, the objective function is 
\beq
J(\theta) = ( \e_{Y \sim \pi_\theta} Y - 2 )^2.
\eeq
The SDE \eqref{path} does not fit the problem described in 
\eqref{ergodic process} and \eqref{objective function}. However, our algorithm still can find the global optimum.

Now the online algorithm \eqref{nonlinear update} is:
\bae
\label{path dependent}
d\theta_t &= -4\alpha_t (\bar X_t - 2) \tilde X_t dt \\
dX_t  &= \left(\theta_t - X_t - \frac1t \int_0^t X_s ds \right) dt + dW_t \\
d\tilde X_t &= \left(1 - \tilde X_t - \frac1t \int_0^t \tilde X_s ds \right) dt \\
d\bar X_t  &= \left(\theta_t - \bar X_t - \frac1t \int_0^t \bar X_s ds \right)dt + d\bar W_t.
\eae

In Figure \ref{path dependent parameter}, the trained parameter converges. The objective function $J(\theta_t)$ is approximated using a time-average. In Figure \ref{path dependent object}, the objective function $J(\theta_t)$ converges to $0$ very quickly.

\begin{figure}[htbp]
\centering
\begin{minipage}[t]{0.48\textwidth}
\centering
\includegraphics[width=6cm]{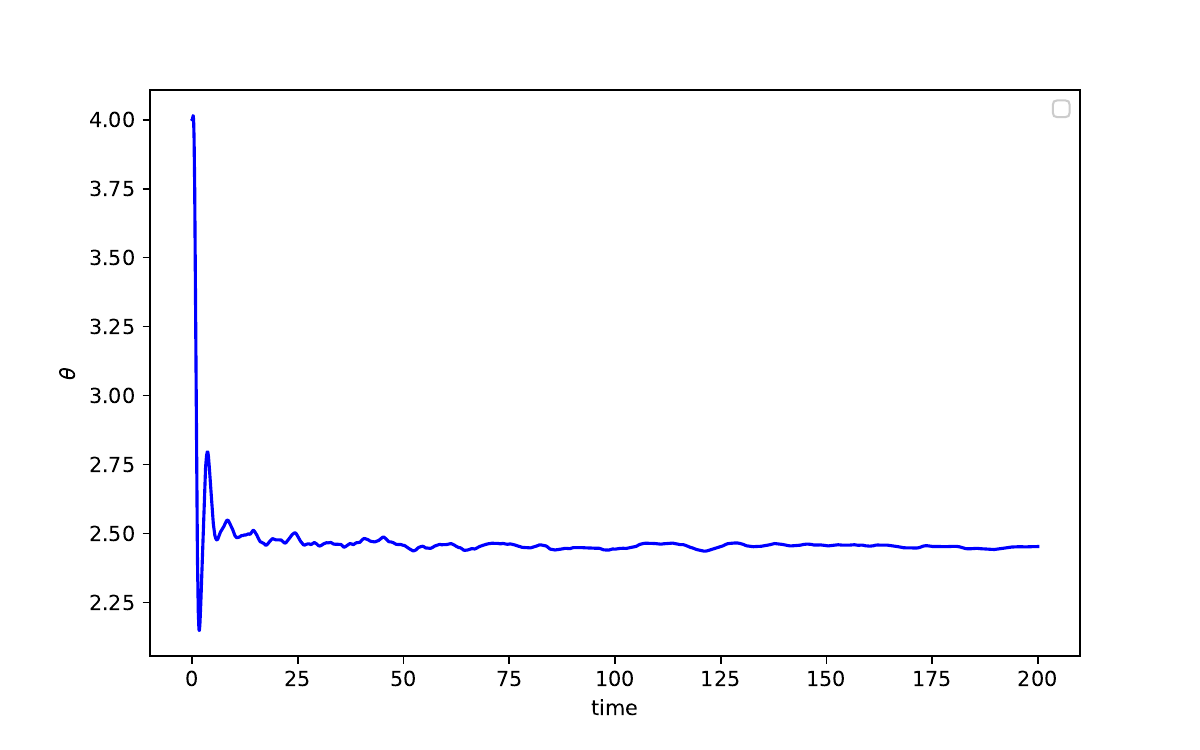}
\caption{Parameter for algorithm \eqref{path dependent}.}
\label{path dependent parameter}
\end{minipage}
\begin{minipage}[t]{0.48\textwidth}
\centering
\includegraphics[width=6cm]{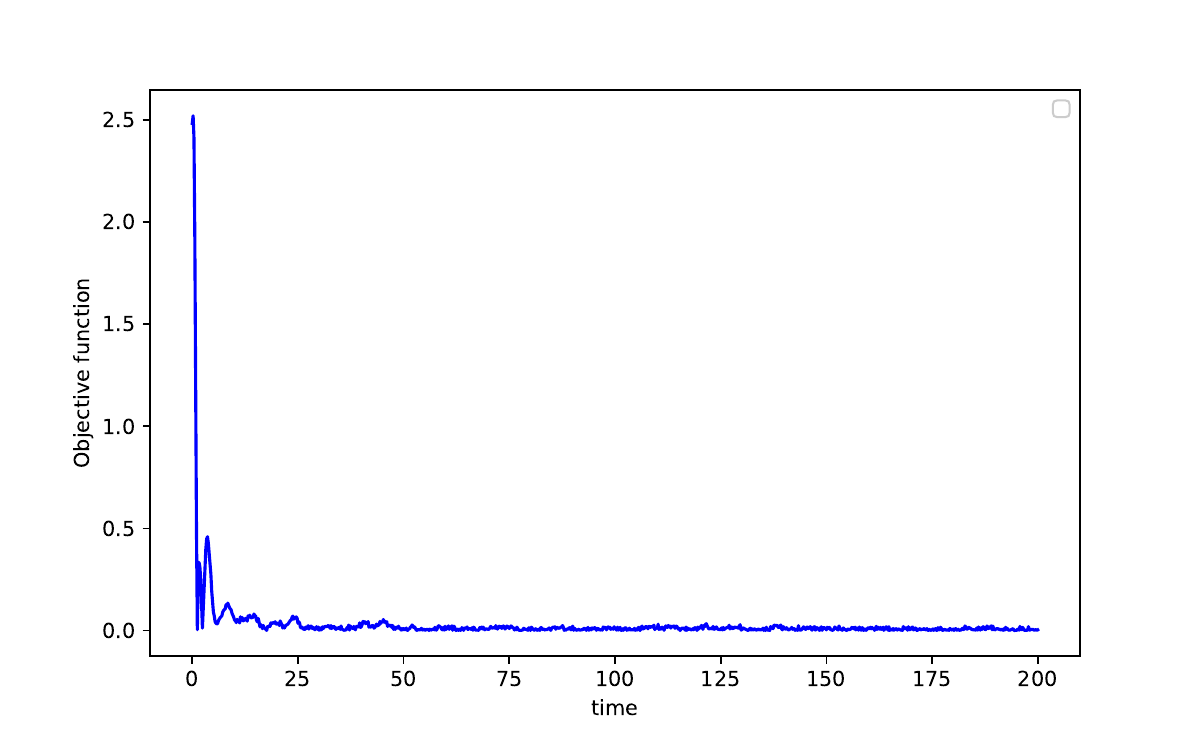}
\caption{Objective function for algorithm \eqref{path dependent}. }
\label{path dependent object}
\end{minipage}
\end{figure}

\subsection{Optimizing over the Auto-Covariance of the Ornstein-Uhlenbeck Process} \label{AutoCov}

\begin{comment}
My observation is that training lambda is very unstable and thus we turn to use the two parameters model.  For the parameter $\theta = (\mu, \sigma)$, let the Ornstein-Uhlenbeck process to be 
$$
dX_t^\theta = (\mu - X_t^\theta) dt + \sigma dW_t
$$ 
Now the object function is 
$$
\begin{aligned}
J(\theta) &= \left(E_{\pi_\theta} Y - 1\right)^2 +  \left(E_{\pi_{\theta, \tau}} YY^1 - 1.2\right)^2 \\
&= \left(\mu - 1\right)^2 + \left(\frac{ e^{-\tau}\sigma^2}{2} + \mu^2 - 1.2\right)^2
\end{aligned}
$$
$$
\begin{aligned}
d\mu_t &= -\frac{2}{1+t} \left[ (\bar X_t - 1) \tilde X^1_t + (\bar X_{t-\tau} \bar X_t - 2) \left( \tilde X^1_{t-\tau} X_t + X_{t-\tau} \tilde X^1_t \right) \right] dt \\
d\sigma_t &= -\frac{2}{1+t} \left[ (\bar X_t - 1) \tilde X^3_t + (\bar X_{t-\tau} \bar X_t - 2) \left( \tilde X^3_{t-\tau} X_t + X_{t-\tau} \tilde X^3_t \right) \right] dt \\
dX_t  &= (\mu_t - X_t) dt + \sigma_t dW_t\\
d\tilde X^1_t &= ( 1 - \lambda_t \tilde X^1_t ) dt \\
d\tilde X^2_t &= -\tilde X_t^2 dt + dW_t \\
d\bar X_t  &= (\mu_t - \bar X_t) dt + \sigma_t d\bar W_t.
\end{aligned}
$$
\end{comment}

\hspace{1.4em} As our final numerical example, consider the Ornstein-Uhlenbeck process
\beq
d X_t^{\theta} = (\mu - \lambda X_t^{\theta} ) dt + \sigma dW_t,
\eeq
where $\theta = (\mu, \lambda, \sigma)$. Define $\pi_{\theta}$ as the stationary distribution of $X_t^{\theta}$ and $\pi_{\theta, \tau}(dx, dx')$ as the stationary distribution of $(X_{t- \tau}^{\theta}, X_t^{\theta})$. The objective function is
\beq
J(\theta) = \left(\e_{Y \sim \pi_{\theta}} Y - 1\right)^2 + \left(\e_{Y \sim \pi_{\theta}} Y^2 - 2\right)^2 + \left(\e_{Y, Y' \sim \pi_{\theta, \tau}} YY' - 1.6\right)^2,
\eeq
where we will select $\tau = 0.1$ for our numerical experiment. 

The online algorithm is 
\bae
\label{auto}
d\mu_t &= -2\alpha_t \left[ \left(\bar X_t - 1 \right)\tilde X_t^1 + 2(\bar X^2_t - 2 )X_t \tilde X_t^1  + ( \bar X_{t-\tau} \bar X_t - 1.6) \left( \tilde X^1_{t-\tau} X_t + X_{t-\tau} \tilde X^1_t \right) \right]dt \\
d\lambda_t &= -2\alpha_t \left[ (\bar X_t - 1 )\tilde X_t^2 + 2(\bar X^2_t - 2 )X_t \tilde X_t^2  + ( \bar X_{t-\tau} \bar X_t - 1.6) \left( \tilde X^2_{t-\tau} X_t + X_{t-\tau} \tilde X^2_t \right) \right]dt \\
d\sigma_t &= -2\alpha_t \left[ (\bar X_t - 1 )\tilde X_t^3 + 2(\bar X^2_t - 2 )X_t \tilde X_t^3  + ( \bar X_{t-\tau} \bar X_t - 1.6) \left( \tilde X^3_{t-\tau} X_t + X_{t-\tau} \tilde X^3_t \right) \right]dt \\
dX_t  &= (\mu_t - \lambda_t X_t) dt + \sigma_t dW_t \\
d\tilde X^1_t &= ( 1 - \lambda_t \tilde X^1_t ) dt \\
d\tilde X^2_t &= (-X_t - \lambda_t \tilde X^2_t ) dt \\
d\tilde X^3_t &= - \lambda_t \tilde X^3_t dt + dW_t \\
d\bar X_t  &= (\mu_t - \lambda_t \bar X_t) dt + d\bar W_t.
\eae
Figures \ref{auto mu} - \ref{auto object} display the trained parameters and the objective function. The trained parameters have $\sim 0.1-0.3\%$ relative error compared to the global minimizers. The objective function $J(\theta_t)$ is computed from the exact formula
\beq
\label{auto objective}
J(\theta) =\left(\frac{\mu}{\lambda} -1 \right)^2 + \left( \left(\frac{\mu}{\lambda}\right)^2 + \frac{\sigma^2}{2\lambda} - 2 \right)^2 + \left( \left(\frac{\mu}{\lambda}\right)^2 + \frac{\sigma^2 e^{-\lambda\tau}}{2\lambda} - 1.6 \right)^2.
\eeq

\begin{figure}[htbp]
\centering
\begin{minipage}[t]{0.48\textwidth}
\centering
\includegraphics[width=6cm]{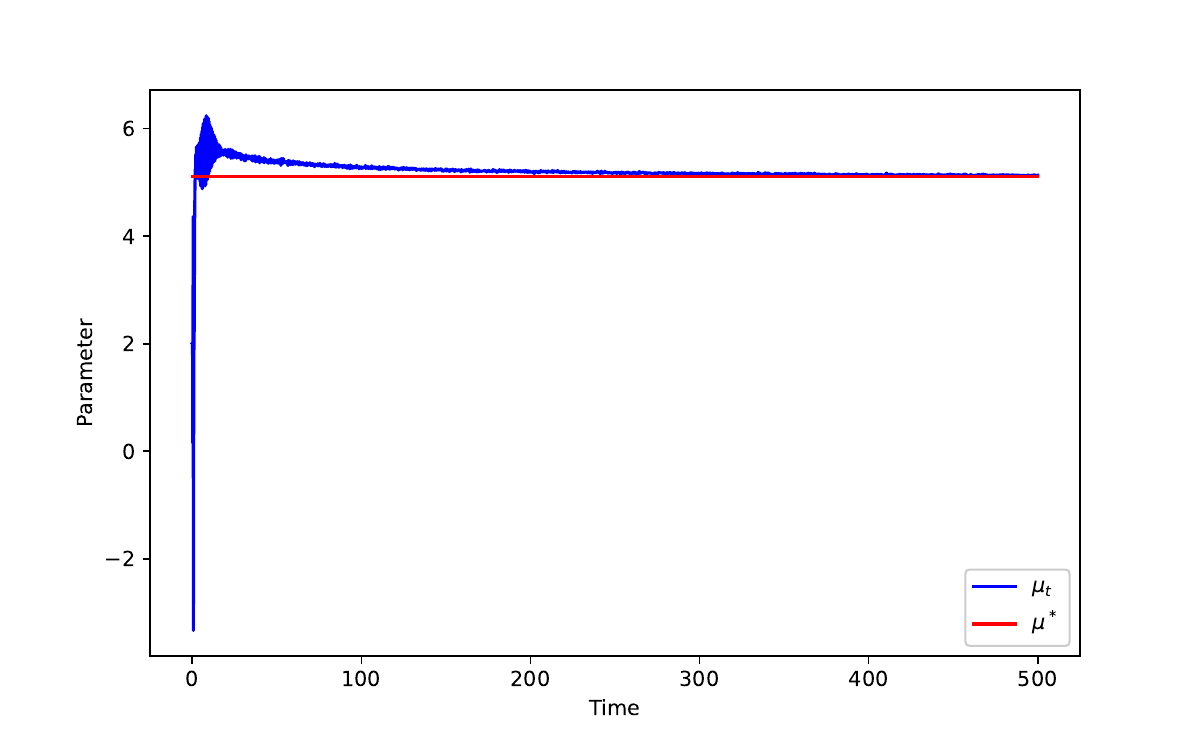}
\caption{$\mu_t$ evolution in \eqref{auto}.}
\label{auto mu}
\end{minipage}
\begin{minipage}[t]{0.48\textwidth}
\centering
\includegraphics[width=6cm]{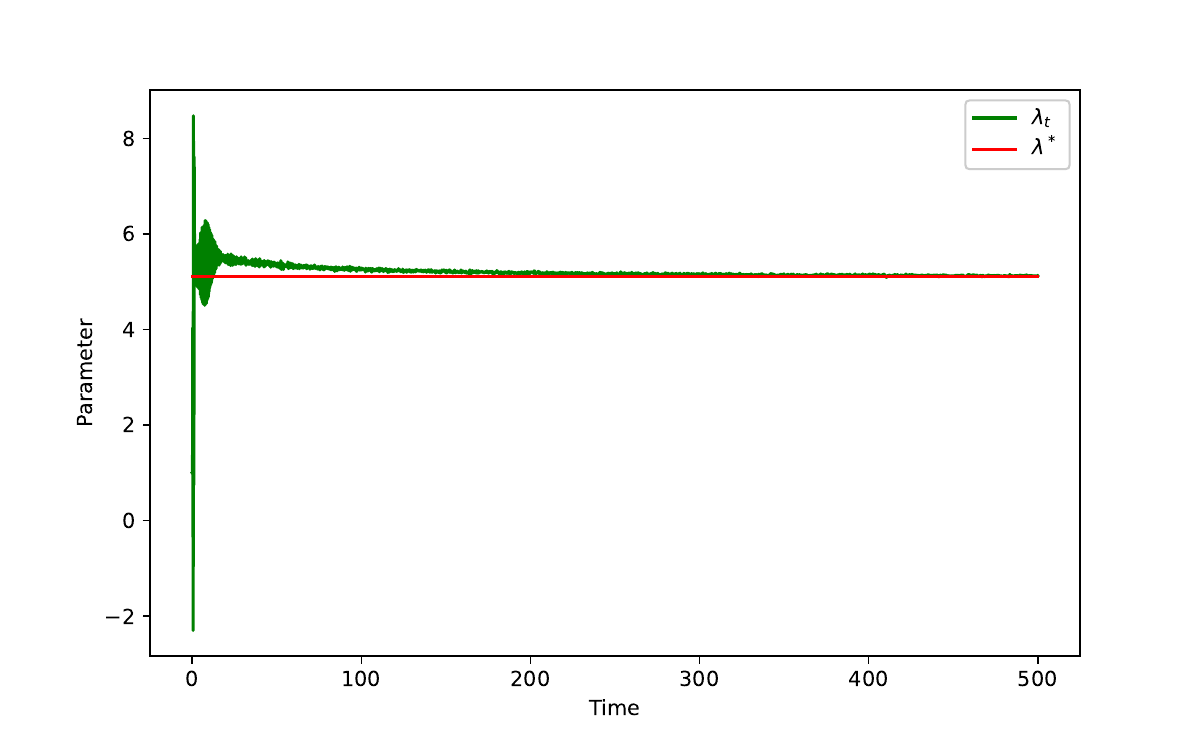}
\caption{$\lambda_t$ evolution in \eqref{auto}.}
\label{auto lambda}
\end{minipage}
\begin{minipage}[t]{0.48\textwidth}
\centering
\includegraphics[width=6cm]{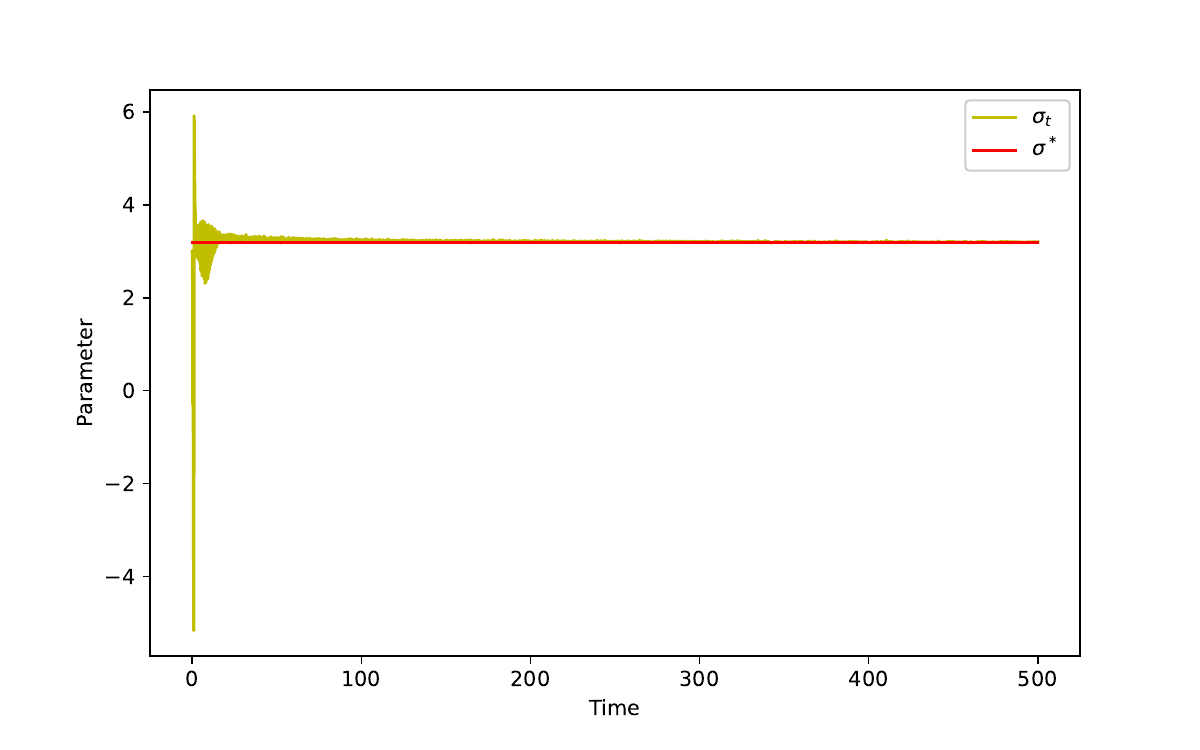}
\caption{$\sigma_t$ evolution in \eqref{auto}.}
\label{auto sigma}
\end{minipage}
\begin{minipage}[t]{0.48\textwidth}
\centering
\includegraphics[width=6cm]{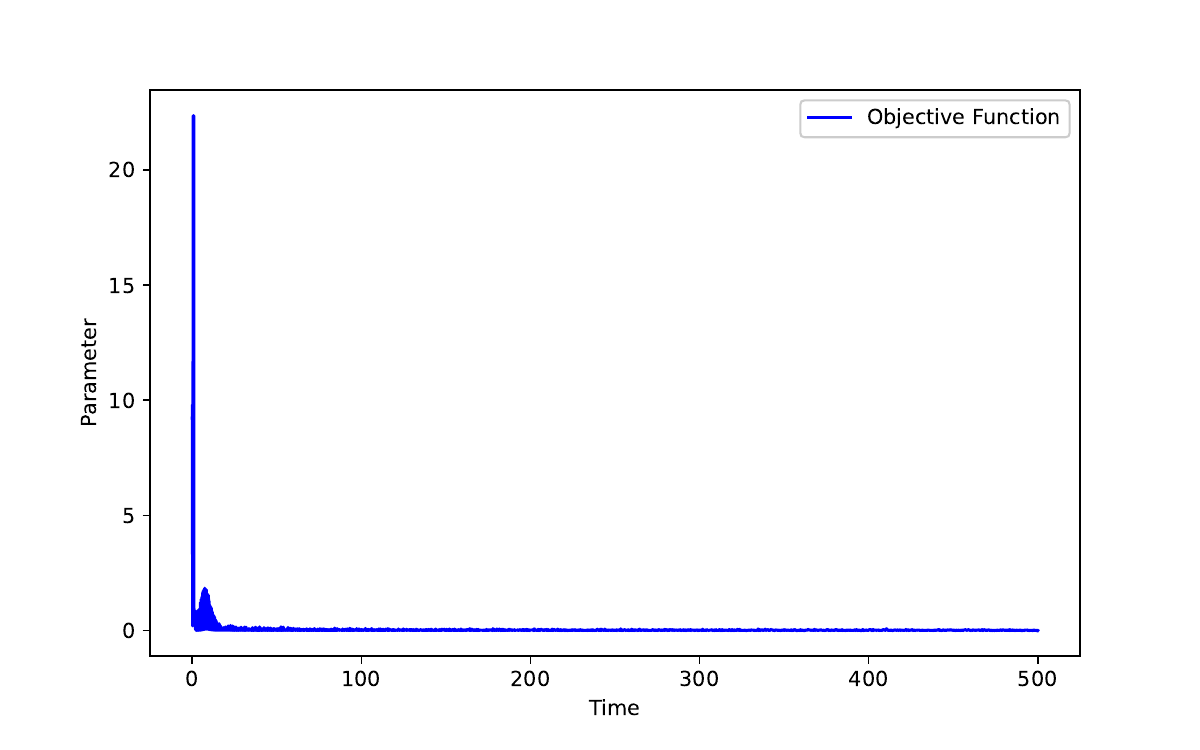}
\caption{Objective function for \eqref{auto}.}
\label{auto object}
\end{minipage}
\end{figure}

\subsection{Applications to Mathematical Finance} \label{finance application}

\hspace{1.4em} In this section, we discuss several potential applications of the forward propagation algorithm \eqref{nonlinear update} in mathematical finance. Our algorithm provides a new approach to estimate the parameters in SDE models in mathematical finance and financial econometrics \cite{ait2020maximum, cartea2016incorporating,  kitapbayev2018mean, lehalle2019incorporating, leung2016speculative, leung2015optimal, leung2015optimalbook, zhang2018mean}, including when the SDE is partially observed. Our algorithm is applicable for the calibration/estimation of SDE model parameters for long time series where ergodicity in the data is expected. In section \ref{partially observed model}, we discuss parameter estimation in partially-observed SDE models \cite{ait2020maximum, sharrock2022joint, surace2018online}, which are commonly used in financial econometrics \cite{bates1996jumps, christoffersen2009shape,  collin2002bonds, collin2004can, creal2015estimation, duffie2000transform, heston1993closed}.

\begin{comment}
we provide two new approach to estimate the parameters by optimizing the different objective function:
\begin{itemize}
\item Optimizing the target statistics, which can be estimated from the observed process, of the stationary distribution.
\item Optimize the asymptotic likelihood function from the data of the observed state process. 
\end{itemize}
\end{comment}

In section \ref{LQR model}, we discuss the application of our algorithm to solving stochastic optimal control problems for long time horizons where the ergodic framework is suitable; stochastic optimal control is important in many areas of mathematical finance such as optimal order execution and portfolio optimization \cite{cartea2015algorithmic, hambly2021policy, pham2009continuous, yong1999stochastic, arapostathis2012ergodic, bardi2014linear}. High-dimensional stochastic optimal control problems are computationally intractable for traditional numerical methods. Although the optimal control satisfies a Hamilton-Jacobi-Bellman (HJB) equation, finite difference methods cannot solve high-dimensional PDEs. We demonstrate that our online optimization algorithm can efficiently solve high-dimensional stochastic optimal control problems (in the ergodic setting). In order to evaluate the accuracy of our algorithm for solving stochastic optimal control problems, we implement it for several high-dimensional stochastic linear quadratic regulator (LQR) problems \cite{fazel2018global, hambly2021policy, bertsekas2012dynamic, duncan1999adaptive, yong1999stochastic}. The LQR problem is selected since a closed-form solution is available (even in high dimensions) to evaluate the accuracy of our algorithm. (However, it should be highlighted that our online optimization algorithm can be used for the stochastic optimal control of any ergodic SDE, including nonlinear SDEs.) The online optimization algorithm learns a parametric control, either a linear function or a neural network (NN), to minimize the objective function. In both the linear and neural network cases, the algorithm can learn the optimal control. The optimal control functions appears in the drift of the SDE. In the case of the neural network optimal control, the SDE is therefore a ``neural network-SDE". Neural network-SDEs -- sometimes referred to as neural-SDEs -- are SDEs where the drift and/or volatility of the SDE is a neural network. Neural-SDEs have recently become of great interest in mathematical finance  \cite{Szpruch2, cohen2021arbitrage, cohen2022estimating, cohen2022hedging, Szpruch1, Szpruch3}.

The online optimization algorithm can also be used to solve multi-agent stochastic control problems -- e.g., mean-field games -- which is a widely-researched topic in mathematical finance  \cite{bardi2014linear, cao2022stationary, cardaliaguet2021ergodic, carmona2013mean, carmona2021convergence, carmona2021deep} in the ergodic setting. The finite multi-agent stochastic optimal control problem is typically computationally intractable since the corresponding HJB equation is very high-dimensional. It will be an $N\times d$ dimensions PDE, where $N$ is the number of agents and $d$ is the dimension of each agent's state (i.e., SDE) process. The limit mean-field game, which approximates the finite case, may be computationally tractable to solve. However, if the state space of each agent is high-dimensional (e.g., dimension $d > 4$), the limit mean-field game will also be computationally intractable since it will be a PDE in $d$ dimensions. In addition, the mean-field game limit may not be accurate for the finite-$N$ case if $N$ is not sufficiently large. Therefore, it is of interest to develop new methods for the computational solution of high-dimensional multi-agent stochastic optimal control problems in mathematical finance. As an example, we numerically implement the online optimization model for a simplified version of the multi-agent systemic risk model (\cite{carmona2013mean}) in Section \ref{ergodic MFC model}. There are $N$ agents where each agent is modeled by an SDE. As $N \rightarrow \infty$, the system converges to a mean-field game limit. In the numerical example, we use the online optimization algorithm to solve the the high-dimensional stochastic optimal control problem corresponding to a large number of $N$ SDEs ($N = 5,000$).

Finally, the online optimization algorithm can be used to train SDE models (including point process models) of limit order books \cite{Pakkanen} \cite{Pakkanen2} \cite{Cartlidge} \cite{Abergel} \cite{Kumar}. Order books involve large numbers of high-frequency events ($\sim 10^5 - 10^6$  events per day per stock) and high-dimensional dynamics (many price levels, each with limit order submissions and cancellations, as well as market orders, hidden orders, and transactions). The large amounts of high-frequency high-dimensional data for limit order books makes this a very promising application area for the online forward propagation algorithm, which is able to asymptotically optimize general classes of models over the \emph{entire history} of the order flow dataset (in contrast to standard methods can typically only optimize over much smaller sub-sequences).

\subsection{Optimizing parameters in partially-observed SDE models}\label{partially observed model}

\subsubsection{Two-dimensional Ornstein–Uhlenbeck Model}
\hspace{1.4em} In this section, we focus on the following partially observed two-dimensional Ornstein–Uhlenbeck process \cite{ait2020maximum} with parameters $\theta = \left(\alpha, \sigma_1, \sigma_2 \right)$:
\bae
dX_t &= \kappa^1\left( Y_t - X_t \right) dt + \sigma^1 dW_t^1 \\
dY_t &= \kappa^2\left( \alpha - Y_t \right) dt + \sigma^2 dW_t^2,
\eae
where the state process $X_t$ is observable and $Y_t$ is the latent (unobserved) process. As in Section \ref{numerical experiment}, we can estimate the parameters by calibrating the model to the moments of the stationary distribution. In our numerical example, the objective function is
\beq
J(\theta) = \left(\e_{Y \sim \pi_{\theta}} Y - 1\right)^2 + \left(\e_{Y \sim \pi_{\theta}} Y^2 - 2\right)^2 + \left(\e_{Y \sim \pi_\theta} Y^3 - 4 \right)^2.
\eeq
The algorithm \eqref{nonlinear update} becomes
\bae
\label{partially observed moment algo}
d \alpha_t &= -\alpha_t \left[ \left(\bar X_t - 1\right) \tilde X^1_t + 2\left(\bar X_t^2 - 2\right) X_t \tilde X_t^1 + 3\left(\bar X^3_t - 4\right) X^2_t \tilde X^1_t \right] dt \\
d \sigma^1_t &= -\alpha_t \left[ \left(\bar X_t - 1\right) \tilde X^2_t + 2\left(\bar X_t^2 - 2\right) X_t \tilde X_t^2 + 3\left(\bar X^3_t - 4\right) X^2_t \tilde X_t^2 \right] dt \\
d \sigma^2_t &= -\alpha_t \left[ \left(\bar X_t - 1\right) \tilde X^3_t + 2\left(\bar X_t^2 - 2\right) X_t \tilde X_t^3 + 3\left(\bar X^3_t - 4\right) X_t^2 \tilde X^3_t \right] dt \\
dX_t &= \kappa^1\left( Y_t - X_t \right) dt + \sigma^1_t dW_t^1 \\
dY_t &= \kappa^2\left( \alpha_t - Y_t \right) dt + \sigma^2_t dW_t^2 \\
d\tilde X^1_t &= \kappa^1\left( \tilde Y^1_t - \tilde X^1_t \right) dt \\
d\tilde Y^1_t &= \kappa^2\left( 1 - \tilde Y^1_t \right) dt \\
d\tilde X^2_t &= - \kappa^1 \tilde X^2_t dt + dW_t^1 \\
d\tilde X^3_t &= \kappa^1\left( \tilde Y^3_t - \tilde X^3_t \right) dt \\
d\tilde Y^3_t &= - \kappa^2 \tilde Y^3_t dt + dW_t^2 \\
d\bar X_t &= \kappa^1\left( \bar Y_t - \bar X_t \right) dt + \sigma^1_t d \bar W_t^1 \\
d\bar Y_t &= \kappa^2\left( \alpha_t - \bar Y_t \right) dt + \sigma^2_t d \bar W_t^2.
\eae
Figures \ref{partially observed moment parameter} and \ref{partially observed moment object} display the parameter convergence and the objective function.
\begin{figure}[htbp]
\centering
\begin{minipage}[t]{0.48\textwidth}
\centering
\includegraphics[width=6cm]{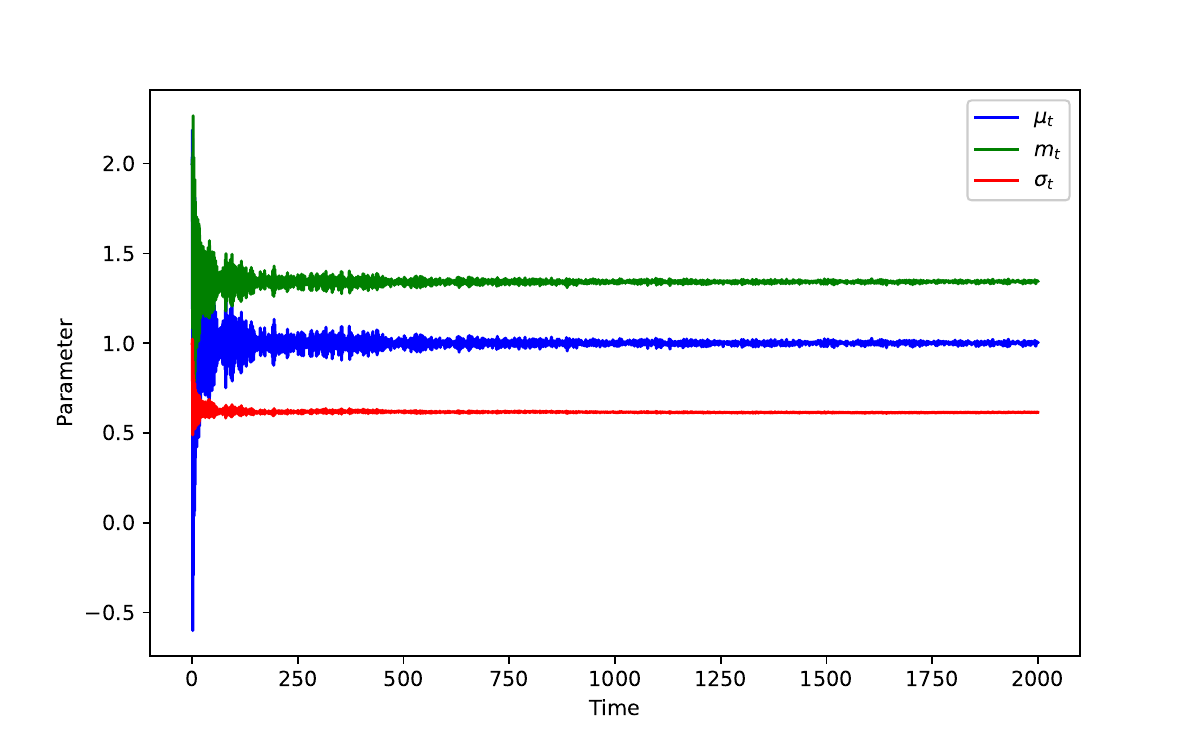}
\caption{Parameters for algorithm \eqref{partially observed moment algo}.}
\label{partially observed moment parameter}
\end{minipage}
\begin{minipage}[t]{0.48\textwidth}
\centering
\includegraphics[width=6cm]{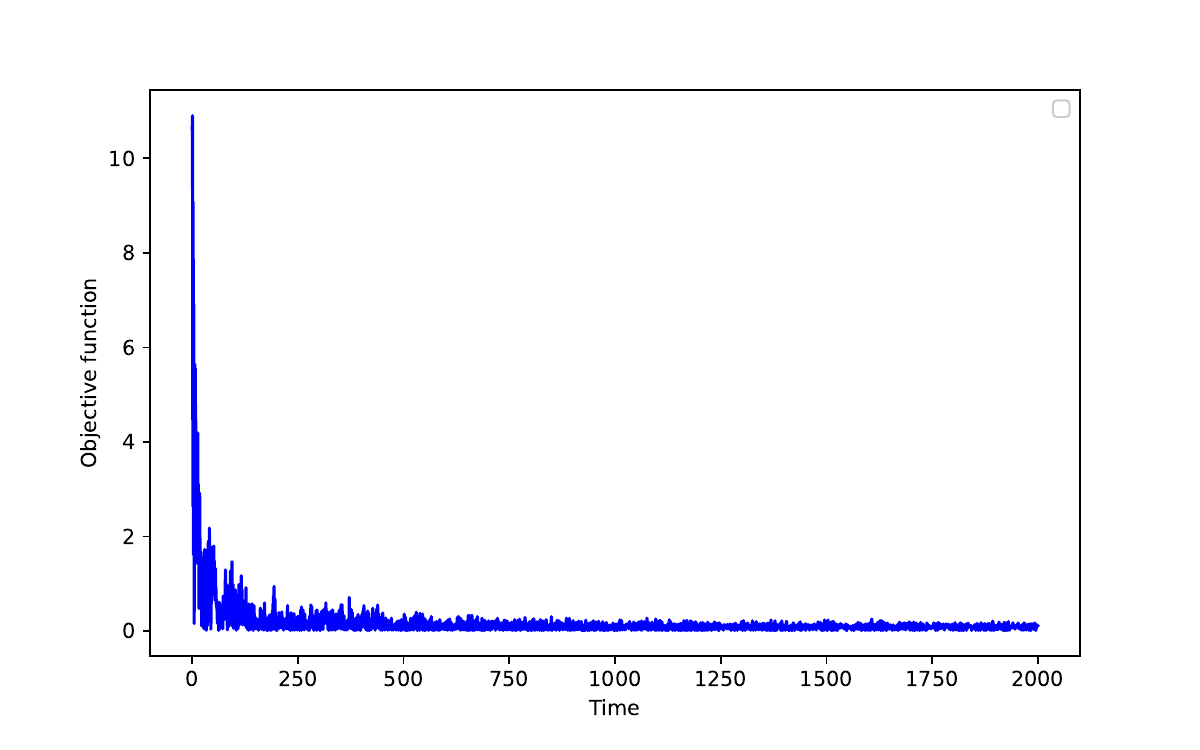}
\caption{Objective function for algorithm \eqref{partially observed moment algo}. }
\label{partially observed moment object}
\end{minipage}
\end{figure}

\subsection{Stochastic Optimal Control}\label{LQR model}

\hspace{1.4em} The online optimization algorithm can be used to solve stochastic optimal control problems, including high-dimensional problems for which traditional numerical methods (e.g., solving the HJB equation with finite difference methods) are computationally expensive or intractable. As a numerical example we consider the classic LQR problem \cite{anderson2007optimal, bertsekas2012dynamic, yong1999stochastic}, which itself has many financial applications such as optimal execution \cite{almgren2001optimal, cartea2015algorithmic, cartea2016incorporating, hambly2021policy}. Let $\{X_t\}_{t \geq 0}$ be the state process that satisfies the SDE
\beq
\label{LQR state}
dX_t = \left(A X_t + B U_t \right) dt + \sigma dW_t,
\eeq
where $X_0 = x_0, X_t \in \mathbb{R}^n$, matrix $A, \sigma \in \mathbb{R}^{n \times n},\ B \in \mathbb{R}^{n \times m}$, $\{W_t\}_{t \ge 0}$ is an $\mathbb{R}^n$-valued standard Wiener process, and $\{U_t\}_{t\ge 0} \in \mathbb{R}^m$ denotes the control. The objective is to learn a control process $u_\cdot$ to minimize the following ergodic cost functional for system \eqref{LQR state}:
\beq
\label{LQR object}
J(U_\cdot)=\lim\limits_{T \rightarrow \infty} \frac{1}{T} \int_0^T \left(X^{\mathrm{T}}_t Q X_t + U_t R U_t \right) dt,
\eeq
where $Q$ and $R$ are positive definite matrices. It is well-known that the optimal control is given by \cite{duncan1999adaptive}:
\beq
U = -R^{-1} B^\top K X,
\eeq
where $K$ is the unique solution of the following algebraic Riccati equation (ARE)
\beq
\label{ARE}
A^\top K + KA - KBR^{-1}B^{\top}K + Q = 0.
\eeq

In order to evaluate the accuracy of our algorithm for solving stochastic optimal control problems, we numerically implement it for several high-dimensional stochastic (LQR) problems. The LQR problem is selected since a closed-form solution is available (even in high dimensions) to evaluate the accuracy of our algorithm. We present a series of numerical examples where the online optimization algorithm learns parametric controls for various LQR problems. The parametric control is either a linear function or a neural network. 

\subsubsection{One-dimensional Linear Control}
\hspace{1.4em} As a first step, we implement the online optimization algorithm for the one-dimensional case with a linear control function. For simplicity, we assume that $A = -1, \ B = \sigma = Q = R = 1$ for \eqref{LQR state}:
\bae
dX_t^\theta &= \left(-X_t^\theta + \theta X_t^\theta \right) dt + dW_t, \\ 
J(\theta) &= \lim\limits_{T \to \infty} \frac1T \int_0^T \left(1+\theta^2\right) \left(X^\theta_t\right)^2 dt.
\eae
The coupled system \eqref{nonlinear update mini batch} becomes 
\bae
\label{1 dim lqr linear}
d\theta_t &= -\alpha_t \left[  \frac1N \sum_{i=1}^N \left( 2\theta_t \left(X^{(i)}_t\right)^2 + 2\left(1+\theta_t^2\right) X^{(i)}_t \tilde X^{(i)}_t \right) \right] dt, \\
dX^{(i)}_t &= (\theta_t - 1) X^{(i)}_t dt + dW^{(i)}_t, \\
d\tilde X^{(i)}_t &= ( X^{(i)}_t + (\theta_t - 1) \tilde X^{(i)}_t ) dt,
\eae
with $i = 1,2, \cdots, N$. Solving the ARE \eqref{ARE} yields the optimal control $\theta^* = -0.41421$. Figure \ref{linear_1_dim} shows that the parameter $\theta_t$ trained with the online optimization algorithm converges to $\theta^{\ast}$.
\begin{figure}[htbp]
\centering
\includegraphics[width=6cm]{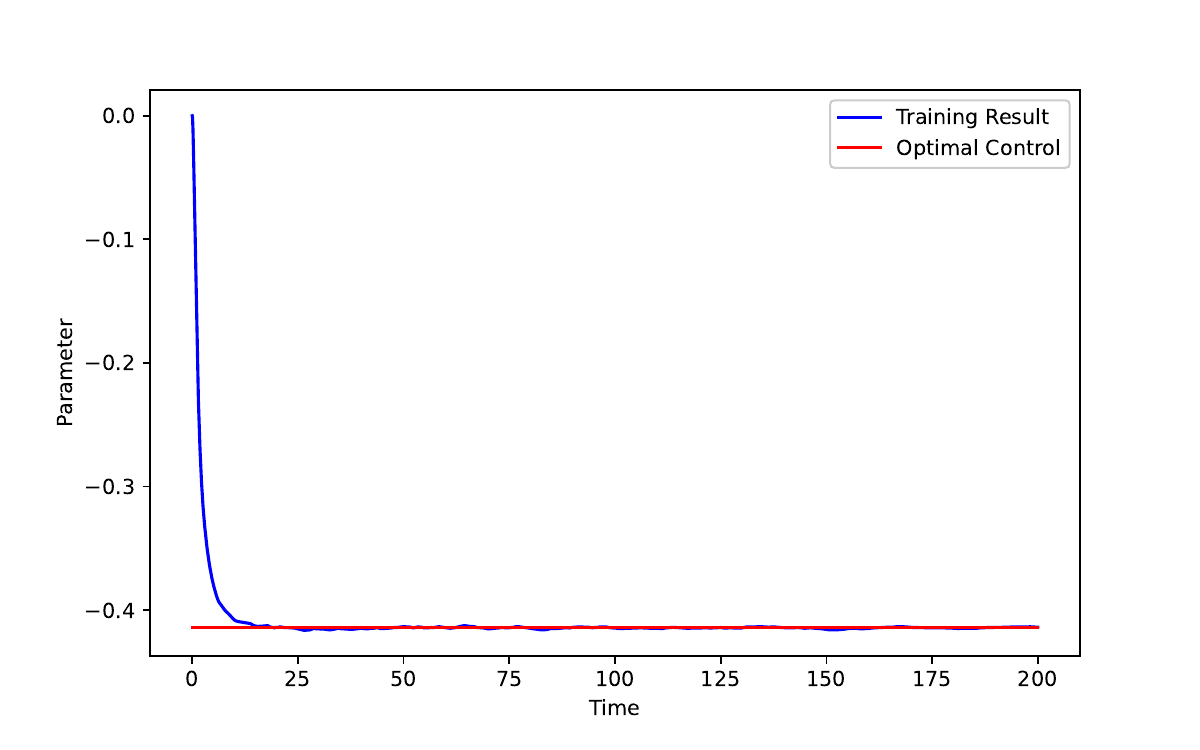}
\caption{ Parameter $\theta_t$ for algorithm \eqref{1 dim lqr linear}}
\label{linear_1_dim}
\end{figure}

\subsubsection{Multi-dimensional Linear Control}

\hspace{1.4em} We next solve a multi-dimensional LQR problem with a linear control function. For simplicity, we assume that $m = n, \ A = -I_n,\  B = \sigma = I_n$ in \eqref{LQR state} where $I_n$ is $n$ dimensional identity matrix. That is,
\bae
\label{multi-LQR}
dX_t^\theta &= \left(-X_t^\theta + \theta X_t^\theta \right) dt + dW_t, \\ 
J(\theta) &= \lim\limits_{T \to \infty} \frac1T \int_0^T \left( X^\theta_t \right)^\top \left(Q + \theta^\top R \theta \right) X^\theta_t dt,
\eae
where $\theta \in \mathbb{R}^{n \times n}$. Let $X_t^{\theta, i}$ denote the $i$-th elment of $X_t^\theta$ and define 
\beq
\tilde X_t^\theta = \nabla_\theta X_t^\theta, \quad \tilde X_t^{\theta, i} = \nabla_\theta X_t^{\theta, i} , \ \forall i \in \{1,2, \cdots, n\}.
\eeq
$\tilde X_t^\theta$ has dimensions $n \times n \times n$ and $\tilde X_t^{\theta, i}$ has dimensions $n \times n$. Note that when we are training over a mini-batch of size $N$, $\tilde X_t^\theta$ has dimensions $N \times n \times n \times n$. 

We first discuss the methods necessary for the computationally efficient simulation of the gradient $\nabla_\theta X_t^\theta$. The state process from \eqref{multi-LQR} satisfies
\bae
d X_t^{\theta, i} = \left( -X_t^{\theta, i} + \sum_{j=1}^n \theta_{i, j} X_t^{\theta, j} \right) dt + dW^i_t,
\eae
and therefore 
\bae
d \tilde X_t^{\theta, i} = \left( - \tilde X_t^{\theta, i} + \sum_{j=1}^n \theta_{i, j} \tilde X_t^{\theta, j} + D_i(X_t^\theta) \right) dt, \label{lqrXtilde}
\eae
where $D_i(X^\theta_t)$ is an $n \times n$ matrix whose elements are all zeros except for the $i$-th row, which has values $X^\theta_t$. The gradient of the objective function in \eqref{multi-LQR} is:
\bae
\nabla_\theta \left[ \left( X^\theta_t \right)^\top \left(Q + \theta^\top R \theta \right) X^\theta_t \right] &= \sum_{i, j} \nabla_\theta \left(\delta_{i, j} + \sum_{k=1}^n \theta_{k, i} \theta_{k, j} \right) X_t^{\theta, i} X_t^{\theta, j} + 2 \sum_{i, j} \nabla_\theta X_t^{\theta, i}\left(q_{i, j} + \theta_{:, i}^\top R \theta_{:, j} \right) X_t^{\theta, j} \\ 
&= \sum_{i, j} \left( (R\theta)_{:, j} \mathbbm{1}_{\{i = n\}} + (R\theta)_{:, i} \mathbbm{1}_{\{j = n\}} \right) X_t^{\theta, i} X_t^{\theta, j} + 2 \sum_{i, j} \tilde X_t^{\theta, i}\left(q_{i, j} + \theta_{:, i}^\top R \theta_{:, j} \right) X_t^{\theta, j}.
\eae

We now present the method for computationally efficient evaluation of the gradient process $\tilde X_t^{\theta}$. For notational simplicity, we only discuss below the case without using a mini-batch. The method can be easily extended to the mini-batch case though. Let $\odot$ indicate element-wise multiplication \emph{with broadcasting} \cite{mckinney2012python}. The RHS of (\ref{lqrXtilde}) can be evaluated using the following operations:
\begin{itemize}
\item To vectorize the term $\sum_{j=1}^n \theta_{i, j} \tilde X_t^{\theta, j}$ for $i \in \{1,2, \cdots, n\}$, we need to perform an inner-product of the \emph{second dimension} of the $n \times n \times 1 \times 1$ matrix $\theta$ with the $1 \times n \times n \times n$ matrix $\tilde X_t^\theta$.
\item Note that the final output $w$ is a tensor with dimensions $n \times n \times n$. 
\item To vectorize the term $D_i(X^\theta_t)$, consider the $n \times n \times n$ tensor $E$ where $E_{i,j,:} = \delta_{ij}$. Then $p = E \odot X_t^\theta$. 
\item Add $w$ and $p$. 
\end{itemize}
The objective function can be evaluated using a similar method: 
\begin{itemize}
\item First vectorize the $(R\theta)_{:, j} \mathbbm{1}_{\{i = n\}} + (R\theta)_{:, i} \mathbbm{1}_{\{j = n\}}$ to be an $n \times n \times n \times n$ matrix, which can be achieved by broadcasting, and denote the output as $D$. Similarly, the matrix multiplication of $X_t^{\theta, i} X_t^{\theta, j}$ produces a $n \times n \times 1 \times 1$ which we denote $X$. 
\item Perform an inner-product of the \emph{first and second dimension} of the $1 \times 1 \times n \times n$ matrix $D$ with the $n \times n \times 1 \times 1$ matrix $X$. Call this output $z$, which will be a tensor with dimensions $n \times n$. 
\item Perform the inner-product of the first dimension of the $n\times n \times n$ matrix $\tilde X_t^\theta$ and $n \times 1 \times 1$ matrix $F$, where $F_{i, :, :} = \sum_{j} \left(q_{i, j} + \theta_{:, i}^\top R \theta_{:, j} \right) X_t^{\theta, j}$. The output $q$ is a tensor with dimensions $n \times n$.
\item Add $z$ and $q$. 
\end{itemize}

Table \ref{linear res} presents the numerical results for the online optimization algorithm for learning the optimal control to the LQR problem. The online optimization algorithm performs well even in high dimensions.  Figure \ref{5dim} and Figure \ref{20dim} display the maximum and average errors for dimension $5$ and $20$ during training.  
\begin{table}[h]
\centering
\caption{Training Result for Linear Control}
\begin{tabular}{*{4}{|l|}}
\hline
\textbf{Dimension} &\textbf{Ave Error} &\textbf{Max Error} &\textbf{Cost Error} \\
\hline
1  & $0.1\%$ & $0.1\%$ & $0.01\%$ \\  \hline
5  & $0.2\%$ & $0.5\%$ & $0.05\%$\\  \hline
20 & $0.5\%$ & $1\%$ & $0.05\%$\\  \hline
\end{tabular}
\label{linear res}
\end{table}

\begin{figure}[htbp]
\centering
\begin{minipage}[t]{0.48\textwidth}
\centering
\includegraphics[width=6cm]{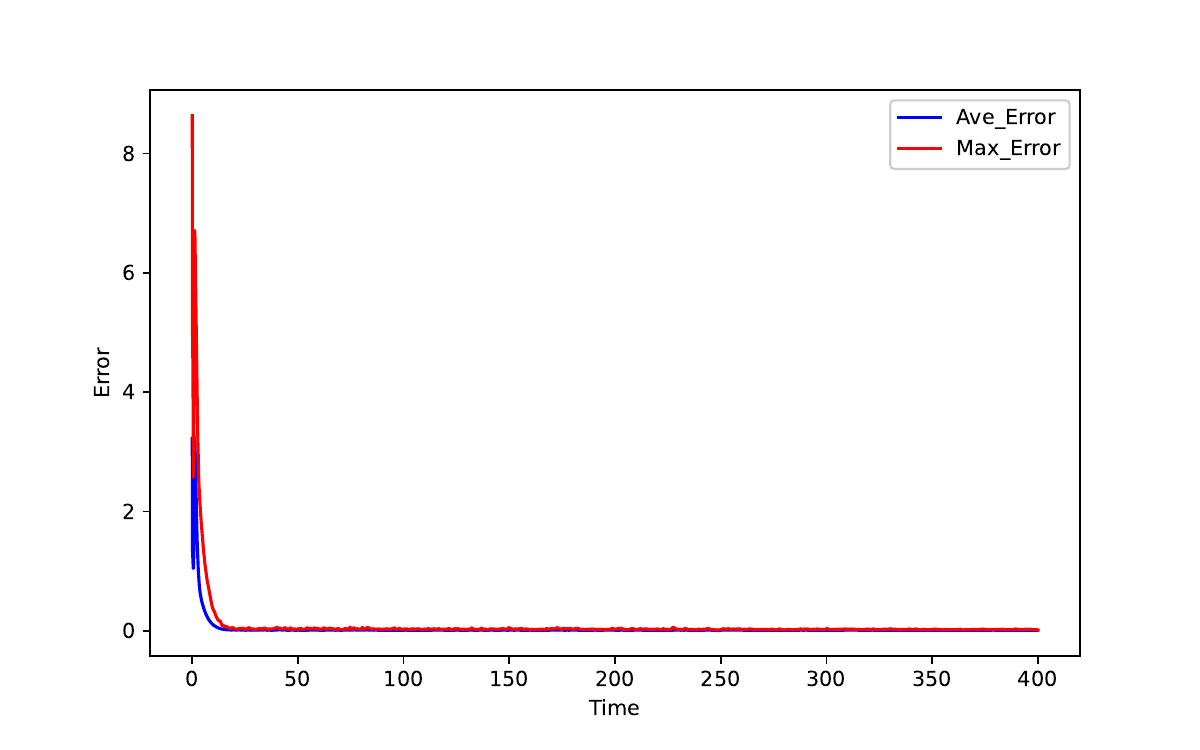}
\caption{Training result for dim = 5}
\label{5dim}
\end{minipage}
\begin{minipage}[t]{0.48\textwidth}
\centering
\includegraphics[width=6cm]{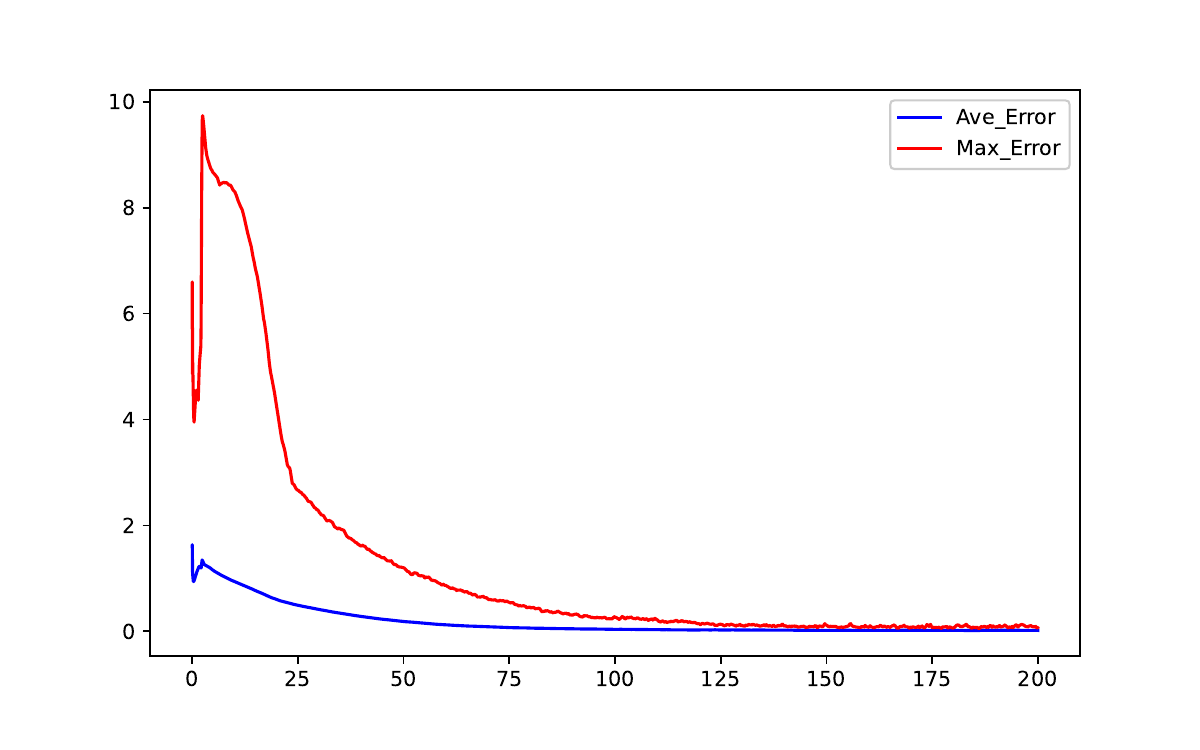}
\caption{Training result for dim = 20}
\label{20dim}
\end{minipage}
\end{figure}

The error metrics in Table \ref{linear res} are defined as:
\bae
\text{Ave Error} &= \frac{ \sum\limits_{i, j = 1}^n \left| \theta_{t, i, j} - \theta^*_{i, j} \right| }{ \sum\limits_{i, j = 1}^n \left| \theta^*_{i, j} \right|} \\ 
\text{Max Error} &= \frac{ \max\limits_{i, j \in \{1, 2, \cdots, n\}} \left| \theta_{t, i, j} - \theta^*_{i, j} \right|}{ \frac{1}{n^2} \sum\limits_{i, j = 1}^n \left| \theta^*_{i, j} \right|} \\ 
\text{Cost Error} &= \frac{\left| J(\theta_T) - J(\theta^*) \right|}{ \left| J(\theta^*) \right|},
\eae
where $\theta^*$ is the optimal control and $\theta_t$ is the parameter during training. $J(\theta_T)$ and $J(\theta^*)$ denote the objective function $J(\theta)$ in \eqref{multi-LQR} with the parameters $\theta_T$ and $\theta^*$, respectively.

\subsubsection{One-dimensional Neural Network Control}

\hspace{1.4em} We will now train a single-layer neural network control using the online optimization algorithm. The state process is:
\beq
dX^\theta_t = \left( -X^\theta_t + f_\theta\left(X_t^\theta\right) \right) dt + dW_t,
\eeq
where the control $f_\theta(\cdot)$ is a single-layer neural network
\beq
f_\theta(x) = \sum_{i=1}^m c^i \sigma\left(w^ix + b^i\right),
\eeq 
with parameters $\theta = (c^i, w^i, b_i)_{i=1}^m$. The objective function is
\beq
\label{NN LQR object}
J(\theta) = \lim\limits_{T \to \infty} \frac1T \int_0^T \left(X_t^\theta\right)^2 + \left(f_{\theta}(X_t^\theta)\right)^2 dt.
\eeq

Define the gradient of $X_t$ with respect to the parameters as:
\beq
\label{NN derivatives 1 dim}
\tilde X_t^w = \nabla_w X_t^\theta \in \mathbb{R}^m, \quad
\tilde X_t^b = \nabla_b X_t^\theta \in \mathbb{R}^m, \quad
\tilde X_t^c = \nabla_c X_t^\theta \in \mathbb{R}^m.
\eeq
The coupled system \eqref{nonlinear update} becomes 
\bae
dw_t &= -\alpha_t \left( 2 X_t \tilde X^{w}_t + 2f_{\theta_t}(X_t) \left( c_t \odot \sigma'(w_t X_t + b_t) X_t + f'_{\theta_t}(X_t) \tilde X^{w}_t \right) \right) dt, \\
db_t &= -\alpha_t \left( 2 X_t \tilde X^{b}_t + 2f_{\theta_t}(X_t) \left( c_t \odot \sigma'(W_t X_t + B_t) + f'_{\theta_t}(X_t) \tilde X^{b}_t \right) \right) dt, \\
dc_t &= -\alpha_t \left( 2 X_t \tilde X^{c}_t + 2f_{\theta_t}(X_t)
\left(\sigma(w_t X_t + b_t) + f'_{\theta_t}(X_t) \tilde X^{c}_t \right) \right) dt, \\
dX_t &= ( -X_t + f_{\theta_t}(X_t)) dt + dW_t, \\
d\tilde X^{w}_t &= ( - \tilde X^{w}_t + c_t \odot \sigma'(w_t X^i_t + b_t) X_t + f'_{\theta_t}(X_t) \tilde X^{w}_t ) dt, \\
d\tilde X^{b}_t &= ( - \tilde X^{b}_t + c_t \odot \sigma'(w_t X^i_t + b_t) + f'_{\theta_t}(X_t) \tilde X^{b}_t ) dt, \\
d\tilde X^{c}_t &= ( - \tilde X^{c}_t + \sigma(w_t X_t + b_t) + f'_{\theta_t}(X_t) \tilde X^{c}_t ) dt, \\
d\bar X_t &= ( - \bar X_t + f_{\theta_t}(\bar X_t)) dt + d\bar W_t.
\eae

\begin{figure}[htbp]
\centering

\begin{minipage}[t]{0.48\textwidth}
\centering
\includegraphics[width=6cm]{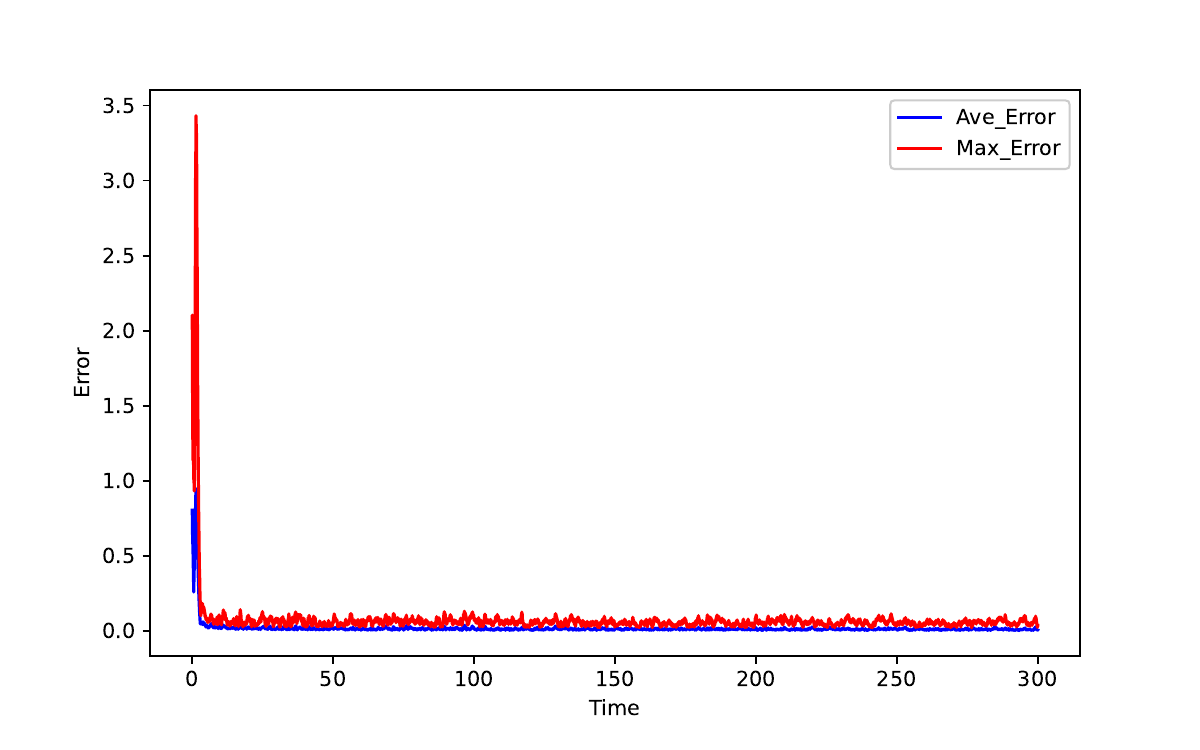}
\caption{Training result for dim = 1}
\label{Before NN}
\end{minipage}
\begin{minipage}[t]{0.48\textwidth}
\centering
\includegraphics[width=6cm]{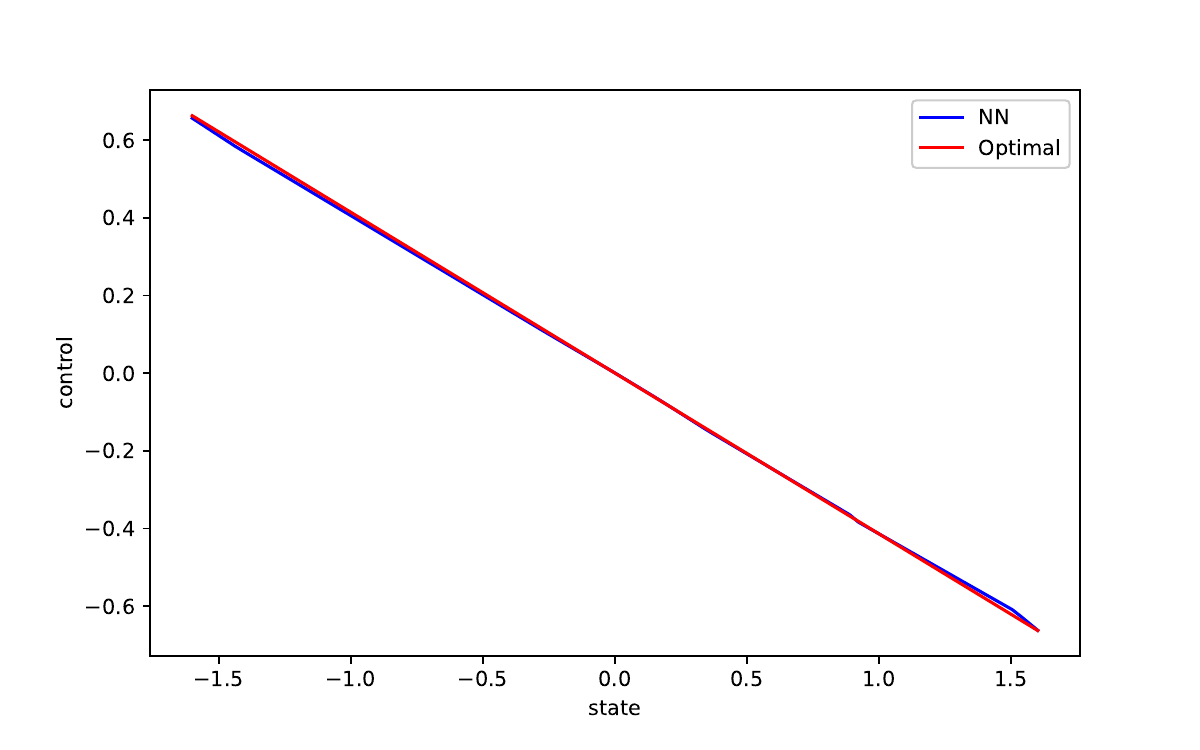}
\caption{Neural Network output after training}
\label{after NN}
\end{minipage}
\end{figure}

The training result for $1$ dimensional LQR with network network control 
is presented in Figure \ref{Before NN}, Figure \ref{after NN}, and Table \ref{NN res}. The error metrics are defined as:\footnote{Here the norm $\left\| \cdot \right\|$ denotes the $L^1$ norm, i.e. for a vector $Y = \left(y_1, y_2, \cdots, y_d \right) \in \mathbb{R}^d$,
$\left\| Y \right\| = \sum_{i=1}^d \left| y_i \right|$.}  
\bae
\text{Ave Error} &= \frac{\sum\limits_{i=1}^n\left\| f_{\theta_t}(X^i) - \theta^* X^i \right\|}{ \sum\limits_{i=1}^n \left\|\theta^* X^i\right\|}
\\
\text{Max Error} &= \frac{\max\limits_{i \in \left\{1,2,\cdots, n\right\}} \left\| f_{\theta_t}(X^i) - \theta^* X^i \right\|}{ \sum\limits_{i=1}^n \left\|\theta^* X^i\right\|} \\ 
\text{Cost Error} &= \frac{\left| J(\theta_T) - J(\theta^*) \right|}{ \left| C^* \right|},
\eae
where $\theta^*$ is the optimal control and $\theta_t$ is the trained parameter. $J(\theta_T)$ and $J(\theta^*)$ denote the objective function $J(\theta)$ in \eqref{NN LQR object} with the parameters $\theta_T$ and $\theta^*$, respectively. The points $\{X^i\}_{i=1}^n$ are uniformly sampled from $[-L, L]$ with $L$ chosen such that $[-L, L]$ contains the optimally controlled process $99\%$ of the time.

\subsubsection{Multi-dimensional Neural Network Control}

\hspace{1.4em} We now optimize a single-layer neural network control for a high-dimensional state process:
\bae
dX^\theta_t &= \left( -X^\theta_t + f_\theta\left(X_t^\theta\right) \right) dt + dW_t, \\ 
J(\theta) &= \lim\limits_{T \to \infty} \frac1T \int_0^T \left(X_t^\theta\right)^\top Q X_t^\theta + \left(f_{\theta}\left(X_t^\theta\right)\right)^\top R f_{\theta}\left(X_t^\theta\right) dt,
\eae
where $X_t^\theta \in \mathbb{R}^n$ and the single-layer neural network with $m$ hidden units is:
\beq
f_\theta(x) = c \sigma\left(wx + b\right),
\eeq 
where $w \in \mathbb{R}^{m \times n}, \ b \in \mathbb{R}^m,$ and $\ c \in \mathbb{R}^{n \times m}$. As in \eqref{NN derivatives 1 dim}, define 
\bae
\label{NN derivatives n dim}
\tilde X_t^w &= \nabla_w X_t^\theta \in \mathbb{R}^{n \times m \times n}, \quad \tilde X_t^{w, i} = \nabla_w X_t^{\theta, i} \in \mathbb{R}^{m \times n}, \\
\tilde X_t^b &= \nabla_b X_t^\theta \in \mathbb{R}^{n \times m}, \quad \tilde X_t^{b, i} = \nabla_b X_t^{\theta, i} \in \mathbb{R}^{m}, \\
\tilde X_t^c &= \nabla_c X_t^\theta \in \mathbb{R}^{n \times n \times m}, \quad \tilde X_t^{c, i} = \nabla_c X_t^{\theta, i} \in \mathbb{R}^{n \times m},
\eae
for $i = 1,2, \cdots, n$.

\begin{comment}
\textbf{Update for gradient} 
Noting that for $i \in \{1,2, \cdots, n\}$
\beq
dX_t^{\theta, i} = \left( -X_t^{\theta, i} + \sum_{k=1}^m c_{i, k} \sigma\left(\sum_{\ell=1}^n w_{k, \ell} X^{\theta, \ell}_t + b_k\right) \right) dt + dW_t^i
\eeq
and thus 
\bae
d\tilde X_t^{w, i} &= \left( -\tilde X_t^{w, i} + \sum_{k} c_{i, k} \sigma'\left( w X^{\theta}_t + b\right)_k \left( \sum_{\ell} w_{k, \ell} \tilde X_t^{w, \ell} \right) + (c_{i, :})^\top \odot \sigma'\left(w X_t^\theta + b\right) \left(X_t^\theta\right)^\top \right) dt\\
d\tilde X_t^{b, i} &= \left( -\tilde X_t^{b, i} + \sum_{k} c_{i, k} \sigma'\left( w X^{\theta}_t + b\right)_k \left( \sum_{\ell} w_{k, \ell} \tilde X_t^{b, \ell} \right) + (c_{i, :})^\top \odot \sigma'\left(w X_t^\theta + b\right) \right) dt\\
d\tilde X_t^{c, i} &= \left( -\tilde X_t^{c, i} + \sum_{k} c_{i, k} \sigma'\left( w X^{\theta}_t + b\right)_k \left( \sum_{\ell} w_{k, \ell} \tilde X_t^{c, \ell} \right) + D_{i} \left( \sigma\left(w X_t^\theta + b\right)\right) \right) dt
\eae
\end{comment}

The online algorithm \eqref{nonlinear update} becomes:
\bae
\label{multi lqr nn dynamic}
dw_t &= -\alpha_t \left[ \nabla_w \left( f_{\theta_t}(X_t)^\top R f_{\theta_t}(X_t) \right) + \sum_{i=1}^n \frac{\partial}{\partial x_i} \left( (X_t)^\top Q X_t + f_{\theta_t}(X_t)^\top R f_{\theta_t}(X_t) \right) \tilde X^{w, i}_t \right] dt \\
db_t &= -\alpha_t \left[ \nabla_b \left( f_{\theta_t}(X_t)^\top R f_{\theta_t}(X_t) \right) + \sum_{i=1}^n \frac{\partial}{\partial x_i} \left( (X_t)^\top Q X_t + f_{\theta_t}(X_t)^\top R f_{\theta_t}(X_t) \right) \tilde X^{b, i}_t \right] dt \\
dc_t &= -\alpha_t \left[ \nabla_c \left( f_{\theta_t}(X_t)^\top R f_{\theta_t}(X_t) \right) + \sum_{i=1}^n \frac{\partial}{\partial x_i} \left( (X_t)^\top Q X_t + f_{\theta_t}(X_t)^\top R f_{\theta_t}(X_t) \right) \tilde X^{c, i}_t \right] dt \\
dX_t &= ( -X_t + f_{\theta_t}(X_t)) dt + dW_t, \\
d\tilde X_t^{w, i} &= \left( -\tilde X_t^{w, i} + \sum_{k} c_{t, i, k} \sigma'\left( w_t X_t + b_t\right)_k \left( \sum_{\ell} w_{t, k, \ell} \tilde X_t^{w, \ell} \right) + (c_{t, i, :})^\top \odot \sigma'\left(w_t X_t + b_t\right) \left(X_t\right)^\top \right) dt \\
d\tilde X_t^{b, i} &= \left( -\tilde X_t^{b, i} + \sum_{k} c_{t, i, k} \sigma'\left( w_t X_t + b_t\right)_k \left( \sum_{\ell} w_{t, k, \ell} \tilde X_t^{b, \ell} \right) + (c_{t, i, :})^\top \odot \sigma'\left(w_t X_t + b_t\right) \right) dt\\
d\tilde X_t^{c, i} &= \left( -\tilde X_t^{c, i} + \sum_{k} c_{t, i, k} \sigma'\left( w_t X_t + b_t\right)_k \left( \sum_{\ell} w_{t, k, \ell} \tilde X_t^{c, \ell} \right) + D_{i} \left( \sigma\left(w_t X_t + b_t\right)\right) \right) dt  \\
d\bar X_t &= ( - \bar X_t + f_{\theta_t}(\bar X_t)) dt + d\bar W_t
\eae
for $i = 1, 2, \cdots, N$. In \eqref{multi lqr nn dynamic}, $C_{t, i, :} \in \mathbb{R}^n$ denotes the $i$-th row of the matrix $C_t$ and $D_i(X_t)$ is an $n \times n$ matrix whose elements are all zeros except for the $i$-th row, which has the vector value $\sigma\left(w_t X_t + b_t \right)$. 

The numerical results for training the neural network SDE control with the online optimization algorithm are presented in Figure \ref{NN 5 dim}, Figure \ref{NN 20 dim}, and Table \ref{NN res}. In general, the trained neural network control performs well, even in high dimensions. 

\begin{figure}[htbp]
\centering
\begin{minipage}[t]{0.48\textwidth}
\centering
\includegraphics[width=6cm]{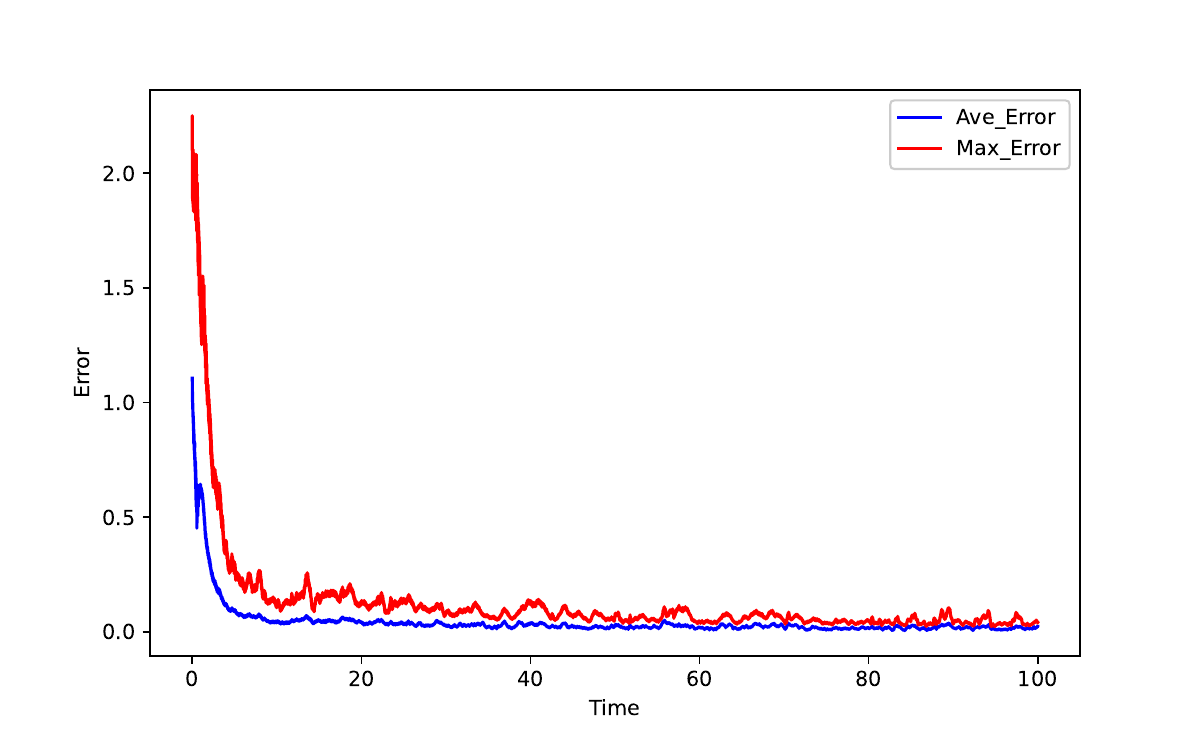}
\caption{Training result for dim = 5}
\label{NN 5 dim}
\end{minipage}
\begin{minipage}[t]{0.48\textwidth}
\centering
\includegraphics[width=6cm]{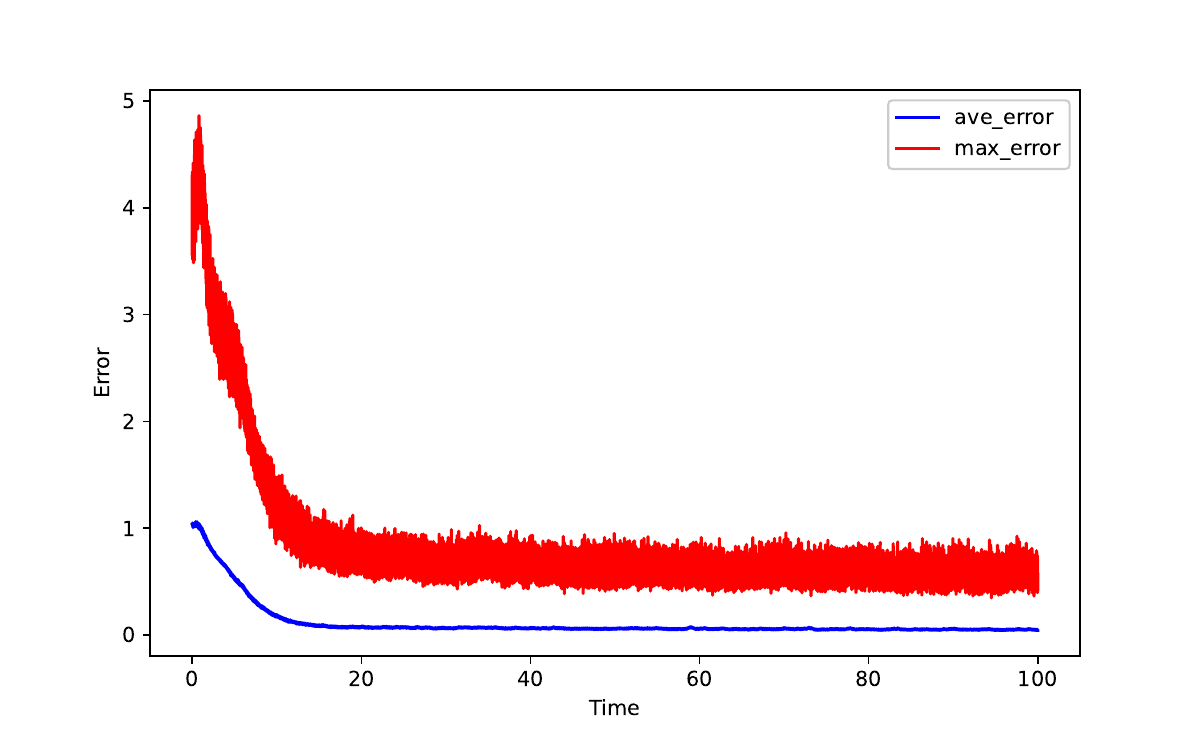}
\caption{Training result for dim = 20}
\label{NN 20 dim}
\end{minipage}
\end{figure}

\begin{table}[h]
\centering
\caption{Training Result for NN Control}
\begin{tabular}{*{4}{|l|}}
\hline
\textbf{Dimension} &\textbf{Ave Error} &\textbf{Max Error} &\textbf{Cost Error} \\
\hline
1  & $0.1\%$ & $0.1\%$ & $0.01\%$ \\  \hline
5  & $0.6\%$ & $1\%$ & $0.02\%$\\  \hline
20 & $1\%$ & $10\%$ & $0.1\%$\\  \hline
\end{tabular}
\label{NN res}
\end{table}

\subsection{Applications to Multi-Agent and Mean-Field System Control} \label{ergodic MFC model}

\hspace{1.4em} Finally, the online optimization algorithm can be used to solve multi-agent stochastic control problems -- e.g., mean-field control and mean-field games, which are important topics in mathematical finance  \cite{bardi2014linear, cao2022stationary, cardaliaguet2021ergodic, carmona2013mean, carmona2021convergence, carmona2021deep} -- in the ergodic setting. As an example, we numerically implement the online optimization model for a simplified version of the multi-agent systemic risk model (\cite{carmona2013mean}) in Section \ref{ergodic MFC model}. There are $N$ agents where each agent is modeled by an SDE. As $N \rightarrow \infty$, the system converges to a mean-field game limit. In the numerical example, we use the online optimization algorithm to solve the the high-dimensional stochastic optimal control problem corresponding to a large number of $N$ SDEs ($N = 5,000$). 

We consider the following multi-agent control problem, which is a simplified version of the systemic risk model in \cite{carmona2013mean}:
\beq
dX_t^{\theta, i} = \left[ a\left( \frac1N \sum_{j=1}^N X_t^{\theta, j} - X_t^{\theta, i} \right) + f_\theta(X_t^{\theta, i}) \right] dt + \sigma dW^i_t 
\eeq
for $i = 1, 2, \cdots N$ with the objective function 
\beq
J^N(\theta) = \frac1N \sum_{i=1}^N \lim_{T\to \infty} \frac1T \int_0^T  \left(X_t^{\theta, i}\right)^2 + f^2\left(X_t^{\theta, i}\right) dt.
\eeq
This mean-field system has the following mean-field limit:
\bae
\label{MFC problem}
dX_t^\theta &= a\left[ \left( EX_t^\theta - X_t^\theta \right) + f_\theta \left(X_t^\theta\right) \right] dt + \sigma dW_t \\
J(\theta) &= \lim_{T \to \infty} \frac1T \int_0^T \left(X_t^\theta\right)^2 + f^2_\theta(X_t^\theta) dt.
\eae

We describe how the online optimization algorithm can train both linear and neural network controls for this mean-field system. The algorithm \eqref{nonlinear update} to train the linear model becomes:
\bae
\label{risk linear}
d\theta_t &= -\alpha_t \left[ \frac1N \sum_{i=1}^N \left( 2 \theta_t \left( X_t^i \right)^2 + 2 \left(1+\theta_t^2\right) X_t^i \tilde X_t^i \right) \right] dt \\
dX^i_t &= \left[ a \left( \frac1N \sum_j X_t^{j} - X^{i}_t \right) + \theta_t X^i_t \right] dt + dW^i_t \\
d\tilde X^i_t &= \left[ a \left( \frac1N \sum_j \tilde X_t^j - \tilde X^i_t \right) + X_t^i + \theta_t \tilde X_t^i \right] dt. 
\eae
The training result for the linear control is displayed in Figure \ref{risk}.

We next train a neural network for the control function $f_{\theta}(x) = c \sigma(wx + b)$ where $\theta = (c, w, b)$. The online optimization algorithm becomes:
\bae
\label{risk NN}
dw_t &= -\alpha_t \left[ \frac1N \sum_{i=1}^N \left( 2 X^i_t \tilde X^{w, i}_t + 2f_{\theta_t}(X^i_t) 
\left( C_t \odot \sigma'(W_t X^i_t + B_t) X^i_t + f'_{\theta_t}(X^i_t) \tilde X^{w, i}_t \right) \right) \right] dt \\
db_t &= -\alpha_t \left[ \frac1N \sum_{i=1}^N \left( 2 X^i_t \tilde X^{b, i}_t + 2f_{\theta_t}(X^i_t) 
\left( C_t \odot \sigma'(W_t X^i_t + B_t) + f'_{\theta_t}(X^i_t) \tilde X^{b, i}_t \right) \right) \right] dt \\
dc_t &= -\alpha_t \left[ \frac1N \sum_{i=1}^N \left( 2 X^i_t \tilde X^{c, i}_t + 2f_{\theta_t}(X^i_t) 
\left( \sigma(W_t X^i_t + B_t) + f'_{\theta_t}(X^i_t) \tilde X^{c, i}_t \right) \right) \right] dt \\
dX^i_t &= \left[ a \left( \frac1N \sum_j X_t^{j} - X^{i}_t \right) + f_{\theta_t}(X^i_t) \right] dt + dW^i_t \\
d\tilde X^{w, i}_t &= \left[ a \left( \frac1N \sum_j \tilde X_t^{w, j} - \tilde X^{w, i}_t \right) + C_t \odot \sigma'(W_t X^i_t + B_t) X^i_t + f'_{\theta_t}(X^i_t) \tilde X^{w, i}_t \right] dt \\
d\tilde X^{b, i}_t &= \left[ a \left( \frac1N \sum_j \tilde X_t^{b, j} - \tilde X^{b, i}_t \right) + C_t \odot \sigma'(W_t X^i_t + B_t) + f'_{\theta_t}(X^i_t) \tilde X^{b, i}_t \right] dt \\
d\tilde X^{c, i}_t &= \left[ a \left( \frac1N \sum_j \tilde X_t^{c, j} - \tilde X^{c, i}_t \right) + \sigma(W_t X^i_t + B_t) + f'_{\theta_t}(X^i_t) \tilde X^{c, i}_t \right] dt.
\eae

The trained neural network control is also displayed in Figure \ref{risk}; the controls learned by the linear model and neural network are similar.

\begin{figure}[htbp]
\centering
\begin{minipage}[t]{0.48\textwidth}
\centering
\includegraphics[width=6cm]{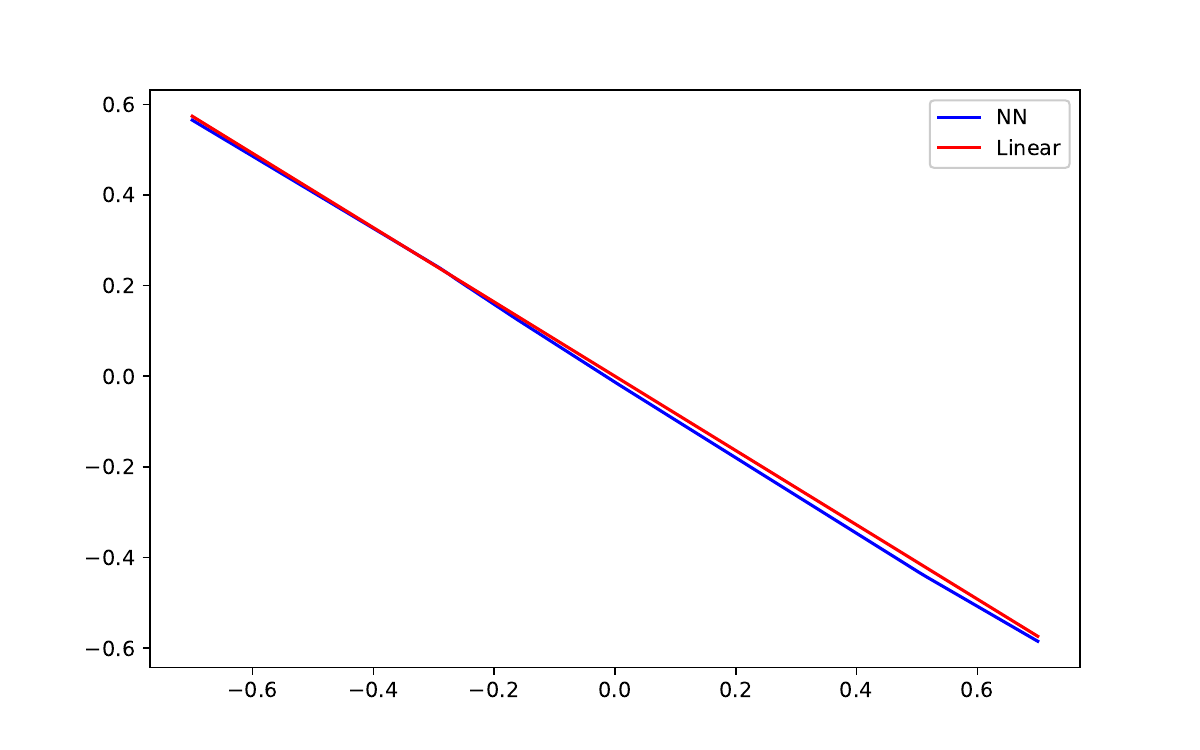}
\caption{Training result for \eqref{risk linear} and \eqref{risk NN}.}
\label{risk}
\end{minipage}
\end{figure}

\color{black}

\subsection{Models of Order Book Dynamics} \label{OrderBook}

Order books involve large numbers of high-frequency events ($\sim 10^5 - 10^6$  events per day per stock) and high-dimensional dynamics (many price levels, each with limit order submissions and cancellations, as well as market orders, hidden orders,
and transactions). Due to the size of the datasets and the high-dimensionality, calibrating simulation models of order book dynamics to data is computationally challenging. Recent examples of such model frameworks for the simulation of  
the order books include \cite{Pakkanen} \cite{Pakkanen2} \cite{Cartlidge} \cite{Abergel} \cite{Kumar}. \cite{Pakkanen} \cite{Pakkanen2} \cite{Cartlidge} \cite{Abergel} \cite{Kumar} develop stochastic point process models to model the event-by-event dynamics in order books.

For more complex stochastic models, it is computationally intractable for many traditional calibration methods to optimize over the entire order flow history (even for a few days of events) to estimate the model parameters from the data. The online forward propagation optimization algorithm proposed in this paper provides a tractable computational method to optimize over the entire order flow history. In particular, the online forward propagation optimization algorithm asymptotically minimizes the objective function over the stationary distribution of the entire order flow process (instead of optimizing over only small subsets of the data, which can lead to a sub-optimal model parameter calibration). In principle, our online optimization algorithm could be used to calibrate a general class of point process models to event-by-event order book data. Such a large-scale data project is outside of the scope of this paper, which is focused on developing a convergence theory. However, in order to demonstrate the applicability of our method to point process models, we present two simple numerical examples below. Synthetic data is simulated from a standard Hawkes process with stochastic intensity
\begin{eqnarray}
d \lambda_t^{\ast} = - \alpha^{\ast} ( \mu^{\ast} - \lambda_t^{\ast}) dt + \kappa^{\ast} d N_t^{\ast},
\label{Hawkes}
\end{eqnarray}
where $N_t^{\ast}$ is the number of events that have occurred by time $t$. Events arrive with stochastic intensity $\lambda_t$, i.e. $ \lim_{\Delta \rightarrow 0} \frac{ \mathbb{P}[N_{t+ \Delta}^{\ast} - N_t^{\ast} = 1 | \mathcal{F}_t ] }{\Delta} = \lambda_t^{\ast}$. For example, $N_t^{\ast}$ could be the number of limit orders submitted to the order book by time $t$. Multi-dimensional point process models can model the dynamics of the entire order book (e.g., limit order submissions, cancellations, market orders, hidden orders, and transactions) \cite{Pakkanen} \cite{Pakkanen2}.

Model parameters for point process models can be calibrated from event data. The data consists of only the observed process $N_t^{\ast}$; the stochastic intensity $\lambda_t^{\ast}$ is unobserved. Note that (\ref{Hawkes}) is an ergodic process with a stationary distribution. Hawkes process models have been widely used in the financial literature for modeling order book events (for example, see \cite{Pakkanen}). Using the event data $N_t^{\ast}$ simulated from (\ref{Hawkes}), we will calibrate point process models using the online forward propagation optimization algorithm.

First, we consider calibrating a standard Hawkes model using the online optimization algorithm. The model is
\begin{eqnarray}
d \lambda_t^{\theta} = - \alpha ( \mu - \lambda_t^{\theta}) dt + \kappa d N_t^{\theta},
\label{HawkesTrained}
\end{eqnarray}
where $\theta = (\alpha, \mu, \kappa)$ are the parameters that must be trained and the time-averaged log-likelihood objective function is
\begin{eqnarray}
L_T(\theta) = - \frac{1}{T} \int_0^T \hat{\lambda}_t^{\theta} dt + \frac{1}{T} \int_0^T \log( \hat{\lambda}_t^{\theta} ) d N^{\ast}_t,
\label{LikelihoodPointProcess}
\end{eqnarray}
where $\hat{\lambda}_t^{\theta}$ is the intensity process (\ref{HawkesTrained}) conditioned on the event observations $(N^{\ast}_{t'})_{t' \leq t}$, i.e. $d \hat{\lambda}_t^{\theta} = - \alpha ( \mu - \hat{\lambda}_t^{\theta}) dt + \kappa d N_t^{\ast}$.  Using our online optimization algorithm, we train the parameters $\theta_t$ to maximize the objective function $L_T(\theta)$. Figure \ref{HawkesTraining} displays the results from the training and demonstrate the numerical convergence of the method. The training converges to a global minimizer; the objective function evaluated at the trained parameters matches the objective function evaluated at the true parameters $\theta^{\ast} = ( \alpha^{\ast}, \mu^{\ast}, \kappa^{\ast} ) = (\frac{1}{10}, 1, \frac{1}{10} )$.

\begin{figure}[h!]
\begin{center}
\includegraphics[width=.4\textwidth, height=50mm]{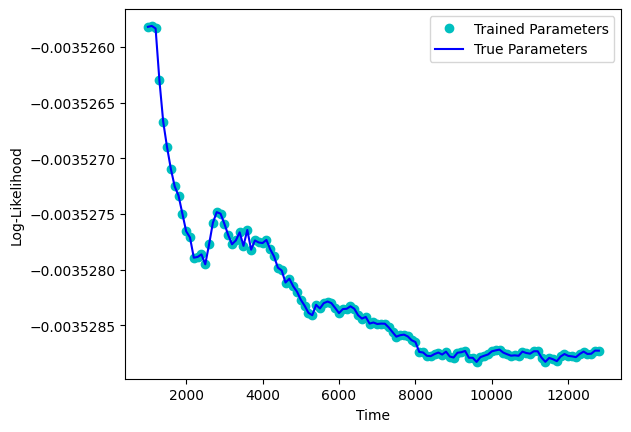}
\includegraphics[width=.4\textwidth, height=50mm]{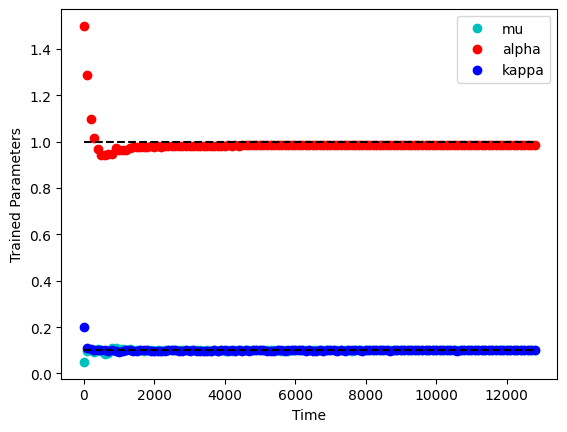}
\end{center}
\caption{Objective function (left) and trained parameters (right).}
\label{HawkesTraining}
\end{figure}

We now consider a slightly more complex model where the intensity dynamics are given by a neural network. Neural network (or ``neural SDEs") have been widely studied in the financial mathematics literature  \cite{Szpruch2, cohen2021arbitrage, cohen2022estimating, cohen2022hedging, Szpruch1, Szpruch3}. Neural network Hawkes processes (or ``neural Hawkes processes") have also been recently studied and implemented in a number of papers for modeling order book data \cite{Cartlidge} \cite{Abergel} \cite{Kumar}. We consider the following neural SDE:
\begin{eqnarray}
d \bar{ \lambda_t}^{\theta} = f( \bar{\lambda_t}^{\theta}; \theta) dt + \kappa d N_t^{\theta},
\label{NeuralHawkesTrained}
\end{eqnarray}
where, for this simplified numerical experiment, we set $\kappa = \kappa^{\ast}$ and $\lambda_t^{\theta} = | \bar{\lambda}_t^{\theta} | + \epsilon$ where $\epsilon > 0$. $f(  \lambda ; \theta) $ is a single-layer neural network with $25$ hidden units. The neural network parameters $\theta$ are trained with the online forward propagation optimization algorithm:
\begin{eqnarray}
d \tilde \lambda_t &=& \bigg{(} \frac{\partial f}{\partial \lambda}( \bar \lambda_t; \theta_t) \tilde{\lambda}_t  +  \frac{\partial f}{\partial \theta}( \bar \lambda_t; \theta_t)  \bigg{)}dt, \notag \\
d \bar{ \lambda_t} &=& f( \bar{\lambda_t}; \theta_t) dt + \kappa d N_t^{\ast}, \notag \\
d \theta_t &=& \alpha_t \bigg{(} - \frac{\partial \lambda_t}{\partial \bar \lambda_t } \tilde \lambda_t  dt+ ( \lambda_t)^{-1}  \frac{\partial \lambda_t}{\partial \bar \lambda_t } \tilde \lambda_t  d N^{\ast}_t  \bigg{)},
\end{eqnarray}
where $\lambda_t = | \bar{\lambda}_t | + \epsilon$ and $\alpha_t$ is the learning rate. The data $N_t^{\ast}$ which the model (\ref{NeuralHawkesTrained}) is trained on is generated using (\ref{Hawkes}) with the ``true parameters" $\theta^{\ast} = (\frac{1}{10}, 1, \frac{1}{10} )$. The training and out-of-sample test results are displayed in Figure \ref{HawkesTrainingNN}. The plots display the value of the objective function (\ref{LikelihoodPointProcess}) evaluated using the ``true" process (\ref{Hawkes}) with the true parameters $\theta^{\ast}$ (which is the global minimum) as compared to the value of the objective function (\ref{LikelihoodPointProcess}) for the trained model (\ref{NeuralHawkesTrained}). The neural network point process model (\ref{NeuralHawkesTrained}), trained with the online forward propagation algorithm, is able to achieve a nearly identical value for the objective function as the exact global minimizer (with $\sim 10^{-4}$ relative error), indicating that the trained model converges to a global minimizer.

\begin{figure}[h!]
\begin{center}
\includegraphics[width=.4\textwidth, height=50mm]{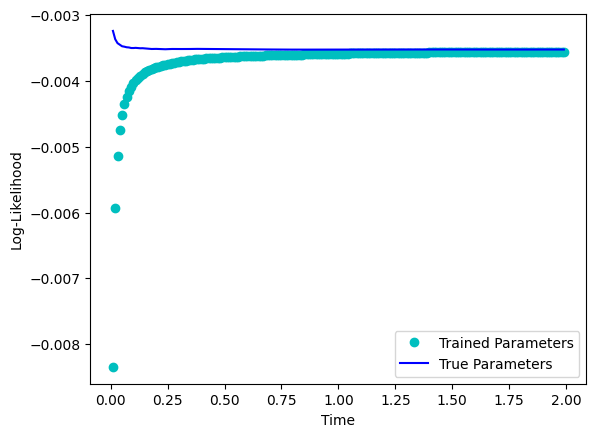}
\includegraphics[width=.4\textwidth, height=50mm]{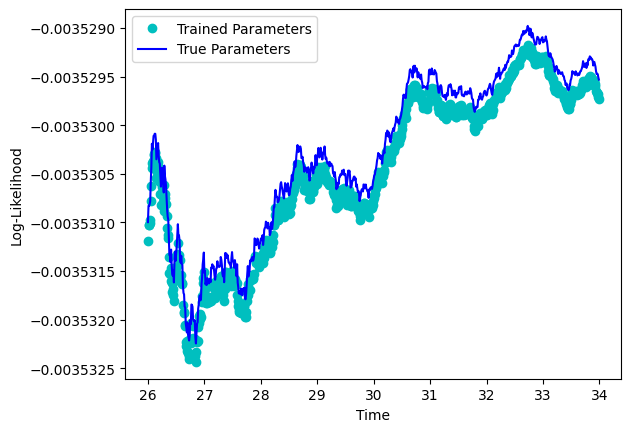}
\end{center}
\caption{Objective function during the initial time period of training (left) and out-of-sample objective function (right).}
\label{HawkesTrainingNN}
\end{figure}

We conclude by highlighting that -- although outside of the scope of this paper -- a more general multi-dimensional model for the entire order book (see \cite{Pakkanen}) could also be calibrated to real order book data using the online forward propagation algorithm. General classes of multi-dimensional neural SDE models can be optimized using our method. For example, ``recurrent neural SDEs", where the dynamics (\ref{HawkesTrained}) depend upon the evolution of a ``hidden" neural SDE, can also be calibrated using the online forward propagation method, such as:
\begin{eqnarray}
d \tilde \lambda_t^{\theta} = f( \tilde \lambda_t^{\theta}, S_t^{\theta}; \theta) dt + \kappa(  \tilde \lambda_t^{\theta}, S_t^{\theta}; \theta )d N_t^{\theta}, \notag \\
d S_t^{\theta} = g( \tilde \lambda_t^{\theta}, S_t^{\theta}; \theta) dt + h(  \tilde \lambda_t^{\theta}, S_t^{\theta}; \theta )d N_t^{\theta}, 
\label{HawkesTrainedGeneral}
\end{eqnarray}
where $f, g, h, \kappa$ are neural networks with collective parameters $\theta$ and where $\tilde \lambda_t^{\theta}, N_t^{\theta},$ and $S_t^{\theta}$ can be multi-dimensional. Recurrent neural networks Hawkes models for order books have been investigated
in \cite{Kumar} \cite{Cartlidge}. Recurrent neural network Hawkes processes have recently received significant interest in the broader machine learning community \cite{Eisner}. General classes of continuous-time recurrent network SDEs have also been
proposed in \cite{Natarajan}. A more general class of continuous-time recurrent network point processes has also been developed in \cite{Nickel}; (\ref{HawkesTrainedGeneral}) is an example from the general framework in \cite{Nickel}. The unique capability provided by the algorithm is to asymptotically optimize such models over the \emph{entire history} of the order flow dataset, while standard methods can typically only optimize over much smaller sub-sequences.

\section{Conclusion}

\hspace{1.4em}In this paper, we proposed a new online algorithm for computationally efficient optimization over the stationary distribution of ergodic SDEs. In particular, the online forward propagation algorithm can optimize over parameterized SDEs in order to minimize the distance between their stationary distribution and target statistics. By proving bounds for a new class of Poisson PDEs, we can analyze the parameters' fluctuations during training and rigorously prove convergence to a stationary point for linear SDE models. We also study the numerical performance of our algorithm for nonlinear examples. In the nonlinear cases which we present in this paper, the algorithm performs well and the parameters converge to a minimizer. 

Our algorithm can be used for applications where optimizing over the stationary distribution of an SDE model is of interest. In many applications, the stationary distribution $\pi_{\theta}$ is unknown and the dimension of the stochastic process may be large. The online algorithm developed in this paper is well-suited for such problems. 

Finally, there are several future research directions which should be explored. First, a convergence analysis for nonlinear SDEs would be an important next step. The focus of our paper is a convergence analysis for linear SDEs; this required addressing several non-trivial mathematical challenges, in particular the development and rigorous analysis of a new class of Poisson PDEs. Our results in this paper provide the building blocks for a future nonlinear analysis. The convergence of our online algorithm for discrete-time stochastic processes would also be interesting to study.

\section*{Acknowledgement}

This research has been supported by the EPSRC Centre for Doctoral Training in Mathematics of Random Systems: Analysis, Modelling and Simulation (EP/S023925/1).

\section*{Appendix}

\appendix
\renewcommand{\appendixname}{Appendix~\Alph{section}}

\section{Proof of Proposition \ref{ergodic estimation}} \label{ergodic appendix}

We first present a useful lemma before proving Proposition \ref{ergodic estimation}. The bound \eqref{key} will be frequently used in the proof of Proposition \ref{ergodic estimation}.

\begin{lemma}
\label{key lemma}
For any $m', k \in \mathbb{R}_+$, there exist constants $C, m > 0$ such that for any $x, x' \in \mathbb{R}^d$, 
\beq
\label{key}
e^{-\left| x' - x \right|^2} \cdot \left| x' - x \right|^k \le C \frac{1+|x|^m}{1+|x'|^{m'}}. 
\eeq
\end{lemma}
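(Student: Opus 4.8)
The plan is to move the denominator to the left-hand side and reduce everything to the elementary fact that a Gaussian beats any polynomial. Writing $r = |x'-x|$, it suffices to prove the equivalent bound $e^{-r^2} r^k (1 + |x'|^{m'}) \le C(1+|x|^m)$, and I will show that the choice $m = m'$ already works.

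First I would record the single analytic input: for every fixed exponent $j \ge 0$ the map $r \mapsto e^{-r^2} r^j$ is continuous on $[0,\infty)$ and tends to $0$ as $r \to \infty$, hence attains a finite maximum $C_j := \sup_{r \ge 0} e^{-r^2} r^j < \infty$. This is the only place where the Gaussian factor plays a role, and it is exactly the mechanism that lets a polynomial weight in $|x'-x|$ be absorbed into a constant.

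Next I would control $|x'|^{m'}$ in terms of $|x|$ and $r$. By the triangle inequality $|x'| \le |x| + r$, and by the elementary inequality $(a+b)^p \le C_p(a^p + b^p)$ valid for all $a,b \ge 0$ and $p>0$ (with $C_p = \max\{1, 2^{p-1}\}$), we obtain $|x'|^{m'} \le C_{m'}(|x|^{m'} + r^{m'})$. Substituting this and distributing the factor $e^{-r^2} r^k$ produces three terms: $e^{-r^2} r^k$, the mixed term $C_{m'} e^{-r^2} r^k |x|^{m'}$, and $C_{m'} e^{-r^2} r^{k+m'}$. Applying the boundedness from the previous step with $j = k$ and with $j = k+m'$ bounds the first and third terms by constants and the middle term by $C |x|^{m'}$, so the entire expression is at most $C(1 + |x|^{m'})$, which is the desired inequality with $m = m'$.

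There is no genuine obstacle here; the statement is elementary and the work is purely bookkeeping. The only point requiring a little care is that $m'$ may be smaller than $1$, which is why I invoke $(a+b)^p \le C_p(a^p+b^p)$ in the form valid for all $p>0$ (subadditivity when $0 < p \le 1$, convexity when $p \ge 1$) rather than a fixed binomial expansion.
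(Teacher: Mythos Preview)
Your argument is correct. Moving the denominator to the left, bounding $|x'|^{m'}$ via the triangle inequality $|x'| \le |x| + r$ and the elementary inequality $(a+b)^p \le C_p(a^p+b^p)$, and then absorbing all powers of $r$ into the Gaussian via $\sup_{r\ge 0} e^{-r^2} r^j < \infty$ gives the claim with $m = m'$.

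The paper proceeds differently: it splits into the two cases $|x'| \ge 2|x|$ and $|x'| < 2|x|$. In the first case it uses $|x'-x| \ge |x'|/2$ to get a Gaussian in $|x'|$ alone, which beats the polynomial $1+|x'|^{m'}$; in the second case it simply bounds $|x'-x| \le 3|x|$ and $1+|x'|^{m'} \le 1+(2|x|)^{m'}$, producing an $m$ of order $k+m'$. Your approach avoids the case split entirely and yields the sharper exponent $m = m'$ (though the precise value of $m$ is immaterial for how the lemma is used later). Conversely, the paper's case analysis makes the two mechanisms more visible: when $|x'|$ is large compared to $|x|$ the Gaussian does all the work, and when $|x'|$ is small compared to $|x|$ the numerator $1+|x|^m$ does.
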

\begin{proof}
For any fixed $x \in \mathbb{R}^d$, when $\frac{\left| x' \right|}{2} \ge \left| x \right|$ we have 
\begin{eqnarray}
\left| x' - x \right| &\ge& \left| x' \right| - \left| x \right| \ge \frac{\left| x' \right|}{2}, \notag \\
\left| x' - x \right| &\le& \left| x' \right| + \left| x \right| \le \frac{3\left| x' \right|}{2}.
\label{xminusxprimeIneq}
\end{eqnarray}
Therefore, we have that for any $m', k>0$ there exists a constant $C_1>0$ such that
\beq
e^{-\left| x' - x \right|^2} \cdot \left| x' - x \right|^k \le e^{ -\frac{\left| x' \right|^2}{4}} \left( \frac{3}{2} \left| x' \right|\right)^k \overset{(a)}{\le} \frac{C_1}{1+|x'|^{m'}},
\eeq
where the first inequality is due (\ref{xminusxprimeIneq}) and step (a) uses the fact that
\beq
\label{order}
\lim\limits_{s \to +\infty} \frac{s^m}{e^s} = 0, \quad \forall m > 0.
\eeq
When $\frac{\left| x' \right|}{2} < \left| x \right|$, for any $m',k>0$ there exist constants $C_2, m > 0$ such that
\beq
e^{-\left| x' - x \right|^2} \cdot \left| x' - x \right|^k \le \left( 3|x| \right)^k \le \frac{\left( 3|x| \right)^k \cdot \left( 1+|2x|^{m'}\right) }{1+|x'|^{m'}} \le C_2 \frac{1+|x|^m}{1+|x'|^{m'}}.
\eeq
Let us now choose $C= C_1 + C_2 $ and then \eqref{key} holds.
\end{proof}

\begin{proof}[Proof of Proposition \ref{ergodic estimation}:]
The proof for the convergence results leverages the closed-form formula for the distribution. Let
\beq
f(t,x,\theta) = e^{-h(\theta)t} x + h(\theta)^{-1} \left( I_d - e^{-h(\theta)t} \right)g(\theta), \quad \Sigma_t(\theta) = \sigma^2(2h(\theta))^{-1}\left( I_d - e^{-2h(\theta)t} \right),
\eeq
and from \eqref{solution} we know that
\beq
\label{multi dim normal}
X_t^\theta \sim N\left( f(t,x,\theta),\ \Sigma_t(\theta) \right).
\eeq
Thus, the stationary distribution for $X_t^\theta$ is $N\left( h^{-1}(\theta) g(\theta),\  \sigma^2(2h(\theta))^{-1} \right)$.
Since $h(\theta)$ is positive definite, there exists orthogonal matrix $Q(\theta)$ such that 
$$
h(\theta) = Q(\theta)^\top \Lambda(\theta) Q(\theta)
$$
where $\Lambda(\theta) = \text{diag}(\lambda_1(\theta), \cdots, \lambda_d(\theta))$ is a diagonal and all its eigenvalues are positive. Thus for $t>0$
\beq
\label{spectrum}
\Sigma_t(\theta) = \frac{\sigma^2}{2} Q(\theta)^T \Lambda^{-1}(\theta) \left(I_d -  e^{-2\Lambda(\theta)t}\right)  Q(\theta),
\eeq
and the eigenvalues of $\Sigma_t(\theta)$ are $\left(\frac{\sigma^2 \left( 1-e^{-2\lambda_1(\theta)t}\right)}{2\lambda_1(\theta)} , \cdots, \frac{\sigma^2 \left( 1-e^{-2\lambda_d(\theta)t}\right)}{2\lambda_d(\theta)}  \right)$.
Then we know the covariance matrix $\Sigma_t(\theta)$ is also positive definite for any $t>0$ and the density is 
\bae
\label{close formula}
p_t(x, x', \theta) &= \frac{1}{\sqrt{(2\pi)^{d} \left| \Sigma_t(\theta) \right|}} \exp\left\{ -\frac12 (x' -f(t,x,\theta))^\top \Sigma^{-1}_t(\theta)  (x' -f(t,x,\theta))  \right\}, \ t>0 \\
p_\infty(x',\theta) &= \frac{1}{\sqrt{(2\pi)^d \left| \sigma^2 (2h(\theta))^{-1} \right| }} \exp\left\{ - (x' -h(\theta)^{-1}g(\theta))^\top \frac{h(\theta)}{\sigma^2} (x' - h(\theta)^{-1} g(\theta)) \right\}.
\eae

\begin{itshape}Proof of (\romannumeral1).\end{itshape} Recall that (by assumption) $h(\theta)$ is uniformly positive definite and thus 
\beq
\label{invariant bound}
p_\infty(x', \theta) \le C \sqrt{ \left| h(\theta) \right| } \exp\left\{ -c|x' - h(\theta)^{-1}g(\theta)|^2 \right\} \overset{(a)}{\le} \frac{C}{1+|x'|^m}  
\eeq
where step $(a)$ uses the bounds for $g, h$ in Assumption \ref{condition} and \eqref{order}. Due to \eqref{close formula}, we have for any $k \in \{ 1, 2, \cdots, \ell\} $ that
\bae
\label{stationary first theta}
&\left| \frac{\partial}{\partial \theta_k} p_\infty(x', \theta) \right| \\
\le& C \left( \frac{\partial}{\partial \theta_k} \sqrt{|h(\theta)|} \right) \cdot \exp\left\{ -c\left|x' - h(\theta)^{-1} g(\theta)\right|^2 \right\} + C \sqrt{|h(\theta)|} \exp\left\{ -c\left|x' -h(\theta)^{-1} g(\theta)\right|^2 \right\}\\
\cdot& \left[ \left| \left(x'- h(\theta)^{-1} g(\theta)\right)^\top \frac{\partial h(\theta)}{\partial \theta_k} (x' - h(\theta)^{-1} g(\theta)) \right| + 2\left| \left( \frac{\partial \left( h(\theta)^{-1} g(\theta)\right) }{\partial \theta_k} \right)^\top h(\theta) \left(x' - h(\theta)^{-1} g(\theta) \right) \right| \right] \\
\overset{(a)}{\le}& C \exp\left\{ -c\left|x' -h(\theta)^{-1} g(\theta)\right|^2 \right\} + C \exp\left\{ -c\left|x' - h(\theta)^{-1} g(\theta)\right|^2 \right\} \left( \left|x' - h(\theta)^{-1} g(\theta)\right|^2 +  \left|x' - h(\theta)^{-1} g(\theta)\right| \right) \\
\overset{(b)}{\le}& \frac{C}{1+|x'|^m},
\eae
where step $(a)$ is by the boundedness of $g(\theta), \frac{\partial g(\theta)}{ \partial \theta_k}, h(\theta), \frac{\partial h(\theta)}{\partial \theta_k}$ and since $h(\theta)$ is positive definite due to Assumption \ref{condition}. Step $(b)$ is due to equation \eqref{key} with $x = h(\theta)^{-1}g(\theta)$ and equation (\ref{order}). Using the same method as in \eqref{stationary first theta}, we can obtain the bound for $\nabla^2_\theta p_\infty(x, \theta)$. 

\begin{itshape}Proof of (\romannumeral2) and (\romannumeral3).\end{itshape}
We now prove \eqref{x prime decay}. First let 
$$
X:= x' - f(t,x,\theta), \quad Y:= x' -h(\theta)^{-1}g(\theta),
$$
and then since $h$ is uniformly positive definite:
\bae
\label{difference}
|X - Y| =  \left|e^{-h(\theta)t}x - e^{-2h(\theta)t}h(\theta)^{-1}g(\theta) \right| \le Ce^{-ct}(1+|x|).
\eae
We will use the following decomposition:
\begin{equation}
\begin{aligned}
\label{decomposition0}
&|p_t(x,x',\theta) - p_\infty(x',\theta)| \\
\le& C \left( \frac{1}{\sqrt{ \left| \left( I_d - e^{-2h(\theta)t} \right) \right|}} - 1 \right) + C \left| \exp\left\{ -X^\top \frac{h(\theta)}{\sigma^2}\left( I_d - e^{-2h(\theta)t}\right)^{-1} X \right\} - \exp\left\{ -Y^\top \frac{h(\theta)}{\sigma^2}\left( I_d - e^{-2h(\theta)t}\right)^{-1} Y \right\} \right| \\
+& \left|\exp\left\{ -Y^\top \frac{h(\theta)}{\sigma^2} \left( I_d - e^{-2h(\theta)t}\right)^{-1} Y \right\} - \exp\left\{ -Y^\top \frac{h(\theta)}{\sigma^2} Y \right\}  \right| \\
=&: I_1 + I_2 + I_3
\end{aligned}
\end{equation}
For $I_1$, note that when $t>1$
\beq
\label{dif1}
\frac{1}{\sqrt{ \left| \left( I_d - e^{-2h(\theta)t} \right) \right|}} - 1 = \frac{1}{ \sqrt{ \prod\limits_{k=1}^d \left( 1 -  e^{-2\lambda_k(\theta)t} \right)} } - 1 \le C\left[ 1 - \prod_{k=1}^d \left( 1 -  e^{-2\lambda_k(\theta)t} \right) \right] \le C e^{-2\lambda_1(\theta)t} \le C e^{-2ct},
\eeq
where $\lambda_1(\theta) \le \lambda_2(\theta) \le \cdots \le \lambda_d(\theta)$ are the eigenvalues of the matrix $h(\theta)$. For $I_3$, similar to  \eqref{spectrum}, we know the eigenvalues of $h(\theta) \left( \left( I_d - e^{-2h(\theta)t}\right)^{-1} -I_d \right)$
are $\frac{\lambda_i(\theta) e^{-2\lambda_i(\theta)t}}{1-e^{-2\lambda_i(\theta)t}},\ i=1,\cdots,d$, which implies that $h(\theta)\left(\left( I_d - e^{-2h(\theta)t}\right)^{-1} -I_d\right)$
is also a positive definite matrix. When $t>1$, since $h(\theta)$ is uniformly positive definite, the eigenvalues will have a uniform upper bound:
\beq
\label{eigen bound}
\frac{\lambda_i(\theta) e^{-2\lambda_i(\theta)t}}{1-e^{-2\lambda_i(\theta)t}} \le \frac{C}{1-e^{-c}} \le C , \quad t>1, \ \forall i \in \{1, \cdots, d\}.
\eeq
Thus for any $m',k>0$, there exists a constant $C>0$ such that when $t>1$
\begin{equation}
\begin{aligned}
\label{dif2}
&\left| \exp\left\{ -Y^\top \frac{h(\theta)}{\sigma^2}\left( I_d - e^{-2h(\theta)t}\right)^{-1} Y  \right\} - \exp\left\{ -Y^\top \frac{h(\theta)}{\sigma^2} Y \right\} \right| \\
=& \exp\left\{ -Y^\top \frac{h(\theta)}{\sigma^2} Y \right\} \left| \exp\left\{ -Y^\top \frac{h(\theta)}{\sigma^2}\left( \left( I_d - e^{-2h(\theta)t}\right)^{-1} -I_d\right) Y  \right\} - 1 \right| \\
\overset{(a)}{\le}& C \exp\left\{ -Y^\top \frac{h(\theta)}{\sigma^2} Y \right\} \left|Y^\top \frac{h(\theta)}{\sigma^2}\left( \left( I_d - e^{-2h(\theta)t}\right)^{-1} -I_d\right) Y  \right|
\\
\overset{(b)}{\le}& C \exp\left\{ -Y^\top \frac{h(\theta)}{\sigma^2} Y \right\} |Y|^2 \cdot \lambda_{\max} \left( h(\theta) \left( \left( I_d -e^{-2h(\theta)t}\right)^{-1} -I_d\right)\right) \\
\overset{(c)}{\le}& C \exp\left\{ -c\left|x' - h(\theta)^{-1} g(\theta)\right|^2 \right\} \cdot \left|x' - h(\theta)^{-1} g(\theta)\right|^2 
\\ 
\overset{(d)}{\le}& C e^{-ct} \frac{1}{1+|x'|^{m'}}\\
\le& C\frac{1}{(1+|x'|^{m'})(1+t)^k},
\end{aligned}
\end{equation}
where step $(a)$ is by the positive definiteness of $\frac{h(\theta)}{\sigma^2}\left( \left( I_d - e^{-2h(\theta)t}\right)^{-1} -I_d\right)$, which means 
$$
Y^\top\frac{h(\theta)}{\sigma^2}\left( \left( I_d - e^{-2h(\theta)t}\right)^{-1} -I_d\right) Y \ge 0,
$$ 
and the fact $0\le 1-e^{-s} \le s,\ \forall s\ge0$. In step $(b)$, $\lambda_{\max}$ denotes the largest eigenvalue and step $(c)$ uses \eqref{eigen bound}. Step $(d)$ follows from \eqref{key} with $x = h(\theta)^{-1}g(\theta)$ and the boundedness of $g,h$.

For $I_2$, define the function on $F_t: \mathbb{R}^d \rightarrow \mathbb{R}$ for $t>0$
$$
F_t(x) := \exp \left \{ -x^\top \frac{h(\theta)}{\sigma^2}\left( I_d - e^{-2h(\theta)t}\right)^{-1} x \right\}.
$$
By mean value theorem,
\beq
F_t(x) - F_t(y) = \nabla F_t(x_0)^\top (x-y) = -\frac{2h(\theta)}{\sigma^2} \left( I_d - e^{-2h(\theta)t}\right)^{-1} F_t(x_0)x_0^\top (x-y),
\eeq
where $x_0 = t_0 x + (1-t_0)y$ for some $t_0 \in [0, 1]$. Thus for any $m', k>0$ there exist constants $C, m>0$ such that when $t>1$
\begin{equation}
\begin{aligned}
\label{dif3}
\left| F_t(x) - F_t(Y) \right| &\overset{(a)}{=} \frac{2}{\sigma^2} \left| \exp\left\{-(X_0)^\top \frac{h(\theta)}{\sigma^2}\left( I_d - e^{-2h(\theta)t}\right)^{-1} X_0 \right\} X_0^\top (X-Y) \right|\\
&\overset{(b)}{\le} e^{-c |X_0|^2} |X_0| Ce^{-ct}(1+|x|) \\ &\overset{(c)}{\le} C\frac{1+|x|^m}{(1+|x'|^{m'})(1+t)^k},
\end{aligned}
\end{equation}
where in step $(a)$ 
\beq
\label{mean value}
X_0 = t_0 X+ (1-t_0)Y = x' - t_0 f(t,x,\theta) - (1-t_0) h(\theta)^{-1}g(\theta),
\eeq
for some $t_0 \in [0,1]$. Step $(b)$ uses \eqref{difference} and \eqref{eigen bound} and step $(c)$ is by substituting in $x$ in \eqref{key} to be the $X_0$ in \eqref{mean value}. Combining \eqref{decomposition0}, \eqref{dif1}, \eqref{dif2}, and \eqref{dif3}, we have for $t>1$
\beq
\label{exp conv}
|p_t(x,x',\theta) - p_\infty(x',\theta)| \le C\frac{1+|x|^m}{(1+|x'|^{m'})(1+t)^k}.
\eeq

The proof of \eqref{x prime decay} for the case $i=1,2$ and \eqref{x decay} is the same as the proof for $|p_t(x,x',\theta) - p_\infty(x',\theta)|$ above (i.e., one uses the decomposition in \eqref{decomposition0} and \eqref{key} with different choices of $x$). The only challenge is establishing a bound for $\nabla_\theta e^{-h(\theta)t}$. $e^{-h(\theta)t}$ satisfies the ODE
\beq
\label{ode}
\frac{d}{dt} e^{-h(\theta)t} = -h(\theta) e^{-h(\theta)t}
\eeq
with initial value $I_d$.\footnote{Here we use the fact that $\frac{\partial}{\partial y} e^{A y} = A e^{Ay} = e^{A y} A$.} Differentiating \eqref{ode} with respect to $\theta_i, i \in \{1, \cdots, d\}$ yields an ODE for $\frac{\partial}{\partial \theta_i} e^{-h(\theta)t}$:
\beq
\frac{d}{dt} \frac{\partial}{\partial \theta_i} e^{-h(\theta)t} = -\frac{\partial h(\theta) }{\partial \theta_i} e^{-h(\theta)t} - h(\theta) \frac{\partial}{\partial \theta_i}e^{-h(\theta)t},
\eeq
with initial value $0$. Using an integrating factor yields 
$$
\frac{d}{dt} \left( e^{h(\theta) t} \frac{\partial}{\partial \theta_i} e^{-h(\theta)t} \right) = -e^{h(\theta) t} \frac{\partial h(\theta)}{\partial \theta_i} e^{-h(\theta)t},
$$
and thus 
\beq
\frac{\partial}{\partial \theta_i} e^{-h(\theta)t} = e^{-h(\theta)t} \int_0^t e^{h(\theta)s} \frac{\partial h(\theta)}{\partial \theta_i} e^{-h(\theta)s} ds.
\eeq
Since $e^{h(\theta)t}$ is invertible for any $t$, we know the matrices $e^{h(\theta)s} \frac{\partial h(\theta)}{\partial \theta_i} e^{-h(\theta)s}$ and $\frac{\partial h(\theta)}{\partial \theta_i}$ are similar and thus their eigenvalues are the same, which implies that their spectral norm are also the same. We therefore can show that
\beq
\label{norm}
\left| \frac{\partial}{\partial \theta_i} e^{-h(\theta)t} \right| \le C \left| e^{-h(\theta)t} \right| \int_0^t \left| \frac{\partial h(\theta)}{\partial \theta_i} \right| ds \overset{(a)}{\le} Ce^{-ct} t,
\eeq
where step $(a)$ is by the bound for $\nabla_\theta h(\theta)$ in Assumption \ref{condition}. Using the same method, we also can show that
\beq
\label{norm 2}
\left| \frac{\partial^2}{\partial \theta_i \partial \theta_j} e^{-h(\theta)t} \right| \le C \left| e^{-h(\theta)t} \right| \int_0^t \left| \frac{\partial h(\theta)}{\partial \theta_i \partial \theta_j} \right| ds \overset{(a)}{\le} Ce^{-ct} t, \quad \forall i,j \in \{1, \cdots, d\}.
\eeq

\begin{itshape} Proof of (\romannumeral4).\end{itshape}
The first part of \eqref{normal bound} follows from the fact that $X_t^\theta$ has a multivariate normal distribution whose mean and variance are uniformly bounded. \eqref{normal bound} is obvious when $t=0$. For $t>0$, as we know $\Sigma_t(\theta)$ is positive definite for $t>0$, thus the random variable
\beq
\label{standard normal}
Y:=  \Sigma_t^{-\frac12}(\theta) \left(  X_t^\theta - f(t,x,\theta) \right)  
\eeq
has a $d$-dimensional standard normal distribution, where $\Sigma_t^{\frac12}(\theta)$ denotes the square root matrix of $\Sigma_t(\theta)$. Since for any $m>0$ there exists a $C_m>0$ such that $\e\left|Y\right|^m = C_m <\infty$.
\bae
\label{X theta normal bound}
\e_x\left|X_t^\theta\right|^m = \e_x\left|\Sigma_t^{\frac12}(\theta) Y + f(t,x,\theta) \right|^m \le C \left( \left|\Sigma_t^{\frac12}(\theta) \right|^m \e_x \left| Y\right|^m + \left|f(t,x,\theta)\right|^m \right)
\overset{(a)}{\le} C(1+|x|^m),
\eae
where step $(a)$ is by the uniform bound for $g(\theta)$ and $h(\theta)$ in Assumption \ref{condition}. For the second part of \eqref{normal bound}, we use \eqref{X theta normal bound} to develop the following bound:
\bae
\label{tilde thete bound}
\e_{x,\tilde x}|\tilde X^\theta_t|^m &\le 2|\tilde x|^m + 2\e_{x, \tilde x}\left| \int_0^t \left| e^{-h(\theta)(t-s)}\right| \cdot \left| \nabla_\theta g(\theta) - \nabla_\theta h(\theta)X^\theta_s \right| ds \right|^m\\
&\overset{(a)}{\le} 2|\tilde x|^m + C_m \e_{x, \tilde x}\left| \int_0^t e^{-c(t-s)} \left( 1 + |X^\theta_s| \right) ds \right|^m\\
&\le 2|\tilde x|^m + C_m \e_{x}\left| \int_0^t \frac{e^{cs}}{e^{ct} - 1} \left( 1 + |X^\theta_s| \right) ds \right|^m e^{-cm t}\left( e^{ct} - 1 \right)^m \\
&\overset{(b)}{\le} 2|\tilde x|^m + C_m \e_x\left| \int_0^t \frac{e^{cs}}{e^{ct} - 1} \left( 1 + |X^\theta_s| \right)^m ds \right|\\
&\le C_m \left( 1 + |x|^m + |\tilde x|^m \right),
\eae		
where step $(a)$ is by Assumption \ref{condition} and the fact
\beq
\lambda_{\max} \left( e^{-h(\theta)(t-s)} \right) = e^{ -\lambda_{\min}\left( h(\theta)(t-s) \right)} \le e^{-c(t-s)}
\eeq
and step (b) is by Jensen's inequality. In particular, let $p(s) = \frac1c \frac{e^{cs}}{e^{ct}-1}$ and we have $\int_0^t p(s) ds = 1$, and therefore $p(s)$ is a probability density function on $[0,t]$. By Jensen's inequality,
\beq
\left| \int_0^t \left( 1 + |X^\theta_s| \right) p(s) ds \right|^m \le \int_0^t \left( 1 + |X^\theta_s| \right)^m p(s)ds,
\eeq
which we have used in step $(b)$ of equation (\ref{tilde thete bound}).

\begin{itshape} Proof of (\romannumeral5).\end{itshape}
For \eqref{expectation bound}, the conclusion for $t=0$ is trivial. When $t>0$, by \eqref{poly} and \eqref{normal bound}, we have for any polynomial bounded function $f$ that
\beq
\label{exp bound}
|\e_x f\left(X_t^\theta\right) | \le \e_x| f\left(X_t^\theta\right) | \le C \e_x (1 + \left|X_t^\theta\right|^m) \le C(1+|x|^m).
\eeq
For the derivatives, we will use the dominated convergence theorem. By \eqref{close formula}, we have 
\beq
\label{expectation z}
\e_x f\left(X_t^\theta\right) = \int_{\mathbb{R}^d} f\left(f(t,x,\theta) + x'\right) \frac{1}{\sqrt{(2\pi)^d \left| \Sigma_t(\theta) \right|}} \exp\left\{ -\frac12 x'^\top \Sigma^{-1}_t(\theta) x' \right\}dx'.
\eeq
Let $Z^\theta$ denote a normal distribution  
$$
Z_t^\theta \sim N\left( 0, \Sigma_t(\theta) \right)
$$
and then 
\beq
\e_x f(X_t^\theta) =  \e f \left( f(t,x,\theta) + Z_t^\theta \right) =  \e f \left( e^{-h(\theta)t} x + h(\theta)^{-1} \left( I_d - e^{-h(\theta)t} \right)g(\theta) + Z_t^\theta \right).
\eeq
For $\nabla_x \e_x f(X_t^\theta)$, we change the order of $\nabla_x$ and $\e_x$ and obtain for $t \in (0,1]$
\bae
& \e \left| \nabla_x f \left( e^{-h(\theta)t} x + h(\theta)^{-1} \left( I_d - e^{-h(\theta)t} \right)g(\theta) + Z_t^\theta \right) \right| \\
=& \e \left| e^{-h(\theta)t} \nabla f \left( e^{-h(\theta)t} x + h(\theta)^{-1} \left( I_d - e^{-h(\theta)t} \right)g(\theta) + Z_t^\theta \right)\right| \\
\le& e^{-ct} \e_x \left| \nabla f(X_t^\theta)\right| \\
\le& C(1+|x|^m).
\eae
Therefore, by DCT we have that
\beq
\left| \nabla_x \e_x f(X_t^\theta) \right| = \left| \e \nabla_x f \left( e^{-h(\theta)t} x + h(\theta)^{-1} \left( I_d - e^{-h(\theta)t} \right)g(\theta) + Z_t^\theta \right) \right| = \left| e^{-h(\theta)t} \e_x \nabla f(X_t^\theta) \right| \le C(1+|x|^m).
\eeq
Similarly for $\nabla^2_x \e_x f(X_t^\theta)$, we have for $t \in (0,1]$
\bae
\left| \nabla^2_x \e_x f(X_t^\theta) \right| =& \left| \e e^{-h(\theta)t} \nabla^2 f \left( e^{-h(\theta)t} x + h(\theta)^{-1} \left( I_d - e^{-h(\theta)t} \right)g(\theta) + Z_t^\theta \right) e^{-h(\theta)t} \right| \\
=& \left| e^{-h(\theta)t} \e_x \nabla^2 f(X_t^\theta) e^{-h(\theta)t} \right| \\
\le& C(1+|x|^m).
\eae
Finally, for $\nabla_\theta \nabla^2 _x\e_x f(X_t^\theta)$, by  \eqref{norm} we have for $t\in(0,1]$ that
\bae
\label{theta x 1}
\left| \frac{\partial}{\partial \theta_i} \nabla^2_x \e_x f(X_t^\theta) \right| &\le 2\left| \frac{\partial}{\partial \theta_i} e^{-h(\theta)t} \right| \left| \e_x \nabla^2 f(X_t^\theta) \right| \left| e^{-h(\theta)t} \right| + \left| e^{-h(\theta)t} \frac{\partial}{\partial \theta_i} \e_x \nabla^2 f(X_t^\theta) e^{-h(\theta)t} \right| \\
&\le C(1+|x|^m) + e^{-ct} \left| \frac{\partial}{\partial \theta_i} \e_x \nabla^2 f(X_t^\theta) \right|.
\eae
Thus, it remains to prove a bound for $\frac{\partial}{\partial \theta_i} \e_x f_0(X_t^\theta)$, where $f_0$ is any polynomial bounded function such that
$$
|f_0(x)| + |\nabla f_0(x)| \le C(1+|x|^m),\quad \forall x\in \mathbb{R}^d.
$$
In order to establish this result, we need a bound for $\nabla_\theta \Sigma^{-1}_t(\theta)$ when $t \in [0,1]$. For $t \in (0,1]$,
\bae
\frac{\partial}{\partial \theta_i} \Sigma^{-1}_t(\theta) &= 2\sigma^2	\frac{\partial}{\partial \theta_i} \left[h(\theta)\left( I_d - e^{-2h(\theta)t} \right)^{-1} \right]\\
&\overset{(a)}{=}  2\sigma^2	\left( \frac{\partial}{\partial \theta_i}h(\theta) \right) \left( I_d - e^{-2h(\theta)t} \right)^{-1} + 2\sigma^2 \left( I_d - e^{-2h(\theta)t} \right)^{-1} h(\theta) \left(\frac{\partial}{\partial \theta_i}e^{-2h(\theta)t}\right) \left( I_d - e^{-2h(\theta)t} \right)^{-1} \\
&=  2\sigma^2  \left( I_d - e^{-2h(\theta)t} \right)^{-1} \left[ I_d - e^{-2h(\theta)t} - 2 e^{-2h(\theta)t} h(\theta)t\right] \left( \frac{\partial}{\partial \theta_i} h(\theta)t \right) \left( I_d - e^{-2h(\theta)t} \right)^{-1},
\eae
where in step $(a)$ we change the order of $\left( I_d - e^{-2h(\theta)t} \right)^{-1} $ and $h(\theta)$ since $h(\theta) e^{h(\theta)t} = e^{h(\theta) t} h(\theta)$. For $t \in [0,1]$, $0$ is the only singular point for $\nabla_\theta \Sigma^{-1}_t(\theta)$. Therefore to prove the uniform bound, it suffices to prove the limit exists when $t \to 0^+$. As $t \to 0+$,
\beq
I_d - e^{-2h(\theta)t} - 2 e^{-2h(\theta)t} h(\theta)t = I_d - 2h(\theta)t - \left(I_d -2h(\theta)t + 2h^2(\theta)t^2 + o(t^2) \right) = -2h^2(\theta)t^2 + o(t^2). 
\eeq
Therefore, 
\bae
\lim_{t \to 0+} \frac{\partial}{\partial \theta_i} \Sigma^{-1}_t(\theta) &= 2\sigma^2 \lim_{t \to 0+} \left( 2h(\theta)t +o(t) \right)^{-1} \left( -2h^2(\theta)t^2 + o(t^2)\right)\left( \frac{\partial}{\partial \theta_i} h(\theta) \right) \left( 2h(\theta)t +o(t) \right)^{-1}  \\
&= 2\sigma^2 \lim_{t \to 0+} \left( 2h(\theta) +o(1) \right)^{-1} \left( -2h^2(\theta) + o(1)\right)\left( \frac{\partial}{\partial \theta_i} h(\theta) \right) \left( 2h(\theta) +o(1) \right)^{-1} \\
&= -16 h(\theta) \left( \frac{\partial}{\partial \theta_i} h(\theta) \right) h^{-1}(\theta),
\eae
which together with the bound for $h(\theta)$ from Assumption \ref{condition} yields
\beq
\label{inverse bound}
\left| \nabla_\theta \Sigma^{-1}_t(\theta) \right| \le C, \quad t\in[0, 1].
\eeq

We will now analyze $\frac{\partial}{\partial \theta_i} \e_x f_0(X_t^\theta)$ for $t \in (0,1]$ using formula \eqref{expectation z} and changing the order of $\frac{\partial}{\partial \theta_i}$ and $\e_x$.
\begin{equation}
\begin{aligned}
\label{theta x 2}
&\int_{\mathbb{R}^d} \left| \frac{\partial}{\partial \theta_i} \left( f_0\left(f(t,x,\theta) + x'\right) \frac{1}{\sqrt{(2\pi)^d \left| \Sigma_t(\theta) \right|}} \exp\left\{ -x'^\top \Sigma^{-1}_t(\theta) x' \right\} \right) \right| dx' \\
\le& \int_{\mathbb{R}^d}  \left| \left( \frac{\partial}{\partial \theta_i} f(t,x,\theta) \right)^\top \nabla f_0\left(f(t,x,\theta) + x'\right) \frac{1}{\sqrt{(2\pi)^d \left| \Sigma_t(\theta) \right|}} \exp\left\{ -x'^\top \Sigma^{-1}_t(\theta) x' \right\} \right| dx'\\
+& \int_{\mathbb{R}^d} \left| f_0\left(f(t,x,\theta) + x'\right) \frac{\partial}{\partial \theta_i} \left( \frac{1}{\sqrt{(2\pi)^d \left| \Sigma_t(\theta) \right|}} \right) \exp\left\{ -x'^\top \Sigma^{-1}_t(\theta) x' \right\}  \right| dx'\\
+& 2\int_{\mathbb{R}^d} \left| f_0\left(f(t,x,\theta) + x'\right) \frac{1}{\sqrt{(2\pi)^d \left| \Sigma_t(\theta) \right|}} \exp\left\{ -x'^\top \Sigma^{-1}_t(\theta) x' \right\}  x'^\top \frac{\partial}{\partial \theta_i} \left( \Sigma^{-1}_t(\theta) \right) x' \right| dx'  \\
\overset{(a)}{\le}& C \e \left| \nabla f_0\left(f(t,x,\theta) + Z_t^\theta\right) \right| +  C \e \left| f_0\left(f(t,x,\theta) + Z_t^\theta \right) \right| + C \e \left| f_0\left(f(t,x,\theta) + Z_t^\theta \right) (Z_t^\theta)^\top Z_t^\theta \right| \\
\le& C \e_x \left| \nabla f_0\left( X_t^\theta \right) \right| + \e_x \left| f_0\left(X_t^\theta \right) \right| + \e_x \left| f_0^2\left(X_t^\theta \right)\right|  + \e \left| Z_t^\theta \right|^4\\
\overset{(b)}{\le} & C(1+|x|^m),
\end{aligned}
\end{equation}
where step $(a)$ is by \eqref{inverse bound} and the uniform bounds for $g(\theta), h(\theta)$ and step $(b)$ is by \eqref{exp bound} and the polynomial boudnds for $f_0^2, \nabla f_0$. Then, by the dominated convergence theorem,
\beq
\label{theta x 3}
\left|\nabla_\theta \e_x f_0(X_t^\theta)\right| \le C(1+|x|^m), \quad t\in(0,1].
\eeq
Combining \eqref{theta x 1} and \eqref{theta x 3}, we obtain the bound for $\nabla_\theta \nabla^2 _x\e_x f(X_t^\theta)$. The bound $\nabla^2_\theta \nabla^2_x\e_x f(X_t^\theta)$ can be obtained using similar calculations, which concludes the proof of the proposition.
\end{proof}

\section{Poisson PDEs}\label{Poisson appendix}

In this section we give the detailed proof of the regularities for the solutions of Poisson PDEs. We first show the proof of Lemma \ref{poisson eq}.

\begin{proof}[Proof of Lemma \ref{poisson eq}:]
We begin by proving that the integral \eqref{representation} is finite. We divide \eqref{representation} into two terms:
\begin{eqnarray}
v^1(x,\tilde x, \theta) &=& ( \e_{Y \sim \pi_\theta} f(Y) - \beta) \int_0^\infty \left( \nabla_\theta \e_{Y \sim \pi_\theta}f(Y) - \e_{x,\tilde x} \nabla f(X_t^\theta)\tilde X_t^\theta \right)^\top dt \notag \\
&=& (\e_{Y \sim \pi_\theta} f(Y) - \beta) \left[ \int_0^\infty \left( \nabla_\theta \e_{Y \sim \pi_\theta}f(Y) - \nabla_\theta \e_{x} f(X_t^\theta) \right)^\top dt + \int_0^\infty \left( \nabla_\theta \e_{x} f(X_t^\theta) - \e_{x,\tilde x} \nabla f(X_t^\theta) \tilde X_t^\theta \right)^\top dt \right] \notag \\
&=:& v^{1,1}(x, \theta) + v^{1,2}(x, \tilde x, \theta).
\end{eqnarray}
By Assumption \ref{condition} and \eqref{invariant density},
\beq
\left|\int_{\mathbb{R}^d} f(x') \nabla^i_\theta p_\infty(x',\theta) dx' \right| \le C\int \frac{ 1 + |x'|^m }{1 + |x'|^{m'} } dx' \overset{(a)}{\le} C, \quad i = 0, 1, 2,
\eeq
where step $(a)$ is by choosing $m'>m+d$. Thus by dominated convergence theorem (DCT):
\beq
\label{theta bound}
\left| \nabla^i_\theta \e_{Y \sim \pi_\theta} f(Y) \right| = \left|\int_{\mathbb{R}^d} f(x') \nabla^i_\theta p_\infty(x',\theta) dx' \right| \le C, \quad i =0,1,2.
\eeq
Similarly, we can bound $v^{1,1}$ as follows:
\bae
\label{control v11}
\left| v^{1,1}(x,\theta) \right| &\overset{(a)}{\le} C \int_0^1 \left( 1 + \left| \nabla_\theta \e_{x} f(X_t^\theta) \right| \right)dt + C \int_1^\infty \int_R \left( 1 + |x'|^m \right) \left| \nabla_\theta p_\infty(x',\theta)  - \nabla_\theta p_t(x, x', \theta) \right|dx'dt \\
&\overset{(b)}{\le} C + C \int_0^\infty \int_R \left( 1 + |x'|^m \right) \frac{1+|x|^{m'}}{(1+|x'|^{m''})(1+t)^2} dx'dt \\
&\overset{(c)}{\le} C \left( 1 + |x|^{m'} \right),
\eae
where steps $(a)$ is by Assumption \ref{condition} and \eqref{theta bound}, step $(b)$ by \eqref{x prime decay} and \eqref{expectation bound}, and step $(c)$ follows from selecting $m'' > m+d$. For $v^{1,2}$, by Assumption \ref{condition} and \eqref{normal bound} we have 
\beq
\label{pre DCT}
\left| \e_{x, 0} \nabla f(X_t^\theta) \tilde X_t^\theta \right| \le \e_{x, 0} \left|\tilde X_t^\theta\right|^2 + \e_x \left|\nabla f(X_t^\theta)\right|^2 \le C < \infty.
\eeq
Thus by DCT we have 
\beq
\label{change order gradient theta}
\nabla_\theta \e_{x} f(X_t^\theta) = \e_{x} \nabla_\theta f(X_t^\theta) = \e_{x, 0} \nabla f(X_t^\theta) \tilde X_t^\theta,
\eeq
which together with \eqref{tilde} derives
\begin{eqnarray}
\label{tilde x cal}
\nabla_\theta \e_{x} f(X_t^\theta) - \e_{x,\tilde x} \nabla f(X_t^\theta) \tilde X_t^\theta &=& \e_{x, 0} \nabla f(X_t^\theta) \tilde X_t^\theta - \e_{x,\tilde x}  \nabla f(X_t^\theta) \tilde X_t^\theta \notag \\
&=& -\e_{x} \nabla f\left(X_t^\theta\right) e^{-h(\theta)t} \tilde x.
\end{eqnarray}

Thus, $v^{1,2}$ satisfies the bound
\bae
\label{control v12}
\left| v^{1,2}(x, \tilde x, \theta) \right| &= \left| (\e_{Y \sim \pi_\theta} f(Y) -\beta) \int_0^\infty \e_{x} \nabla f\left(X_t^\theta\right) e^{-h(\theta)t} \tilde x dt \right| \\
&\overset{(a)}{\le} C \int_0^\infty \left( 1 + \e_{x} \left|X_t^\theta\right|^m \right) e^{-ct} dt \cdot \left| \tilde x  \right| \\
&\overset{(b)}{\le} C \int_0^\infty \left( 1 + |x|^{m'} \right) e^{-ct} dt \cdot \left| \tilde x  \right| \\
&\le C \left(1 + |x|^{m'} + |x'|^{m'} \right),
\eae
where step $(a)$ is by Assumption \ref{condition}, \eqref{theta bound} and $\lambda_{\max} \left( e^{-h(\theta)t} \right) \le e^{-ct}$. Step $(b)$ is by \eqref{normal bound}.

Next we show $v^1(x,\tilde x, \theta)$ is differentiable with respect to $x, \tilde x,$ and $\theta$. We can prove this using a version of the dominated convergence theorem (see Theorem 2.27 in \cite{folland1999real}), where it suffices to show that the derivative of the integrand is bounded by an integrable function. Using the same analysis as in \eqref{control v12}, we can show that
\beq
\label{control vxtilde}
\left| \int_0^\infty e^{-h(\theta)t} \e_{x} \nabla f(X_t^\theta) dt \right| \le C \int_0^\infty \left( 1 + \e_{x} \left|X_t^\theta\right|^m \right) e^{-ct} dt  \le C \left(1 + |x|^{m'} \right).
\eeq
Therefore, by the dominated convergence theorem, we know  $v^1$ is differentiable with respect to $\tilde x$. Furthermore, we can change the order of $\nabla_{\tilde x}$ and the integral in $v^1$ and obtain 
\beq
\left| \nabla_{\tilde x} v^1(x, \tilde x, \theta) \right| = \left| (\e_{Y \sim \pi_\theta} f(Y) -\beta) \int_0^\infty \e_{x} \nabla f\left(X_t^\theta\right) e^{-h(\theta)t} dt \right| \overset{(a)}{\le} C \left(1 + |x|^{m'} \right),
\eeq
where step $(a)$ is by \eqref{theta bound} and \eqref{control vxtilde}.

By \eqref{x prime decay}, \eqref{theta bound}, and the same approach as in \eqref{control v11}, we have  
\begin{equation}
\begin{aligned}
\label{cal 1}
&\left| \nabla_\theta \e_{Y \sim \pi_\theta} f(Y) \int_1^\infty \left( \nabla_\theta \e_{Y \sim \pi_\theta}f(Y) - \nabla_\theta \e_{x} f(X_t^\theta) \right) dt + \left( \e_{Y \sim \pi_\theta} f(Y) - \beta \right) \int_1^\infty \left( \nabla^2_\theta \e_{Y \sim \pi_\theta}f(Y) - \nabla^2_\theta \e_{x} f(X_t^\theta) \right) dt \right| \\
\le& C \int_1^\infty \int_{\mathbb{R}^d} \left| f(x') \nabla_\theta \left[ p_\infty(x',\theta) - p_t(x, x', \theta) \right] \right| dx'dt + C \int_1^\infty \int_{\mathbb{R}^d} \left| f(x') \nabla_\theta^2 \left[ p_\infty(x',\theta) - p_t(x, x', \theta) \right] \right| dx' dt \\
\le& C \left( 1 + |x|^{m'} \right).
\end{aligned}
\end{equation}
By \eqref{expectation bound} and \eqref{theta bound}, 
\bae
\label{cal 2}
&\left| \nabla_\theta \e_{Y \sim \pi_\theta} f(Y) \int_0^1 \nabla_\theta \e_{Y \sim \pi_\theta}f(Y) - \nabla_\theta \e_{x} f(X_t^\theta) dt + \left( \e_{Y \sim \pi_\theta} f(Y) - \beta \right) \int_0^1 \nabla^2_\theta \e_{Y \sim \pi_\theta}f(Y) - \nabla^2_\theta \e_{x} f(X_t^\theta) dt \right| \\ 
\le& C \left( 1 + |x|^{m'}\right). 
\eae
By \eqref{cal 1}, \eqref{cal 2}, and DCT we know $v^{1,1}$ is differentiable with respect to $\theta$ and 
\beq
\label{control v11 theta}
\left| \nabla_\theta v^{1,1}(x,\theta) \right|
\le C \left( 1 + |x|^{m'}\right) .
\eeq
For $\nabla_\theta v^{1,2}$, by \eqref{tilde x cal} we have for any $i \in \{1,2,\cdots, \ell \}$ 
\bae
\label{change order v12}
&\left| \int_0^\infty \frac{\partial}{\partial \theta_i} \left( \nabla_\theta \e_{x} f(X_t^\theta) - \e_{x,\tilde x} \nabla f(X_t^\theta) \tilde X_t^\theta \right) dt \right| \\
=&  \left| \int_0^\infty \e_{x} \nabla f\left(X_t^\theta\right) \left( \frac{\partial}{\partial \theta_i} e^{-h(\theta)t} \right) \tilde x dt + \int_0^\infty \left( \frac{\partial}{\partial \theta_i} \e_{x} \nabla f\left(X_t^\theta\right) \right) e^{-h(\theta)t} \tilde x dt \right| \\
\overset{(a)}{\le}& \left|\tilde x\right| \cdot \int_0^\infty \left|\frac{\partial}{\partial \theta_i} e^{-h(\theta)t}\right| \cdot \left|\e_x \nabla f(X_t^\theta)\right| dt + \left|\tilde x\right| \cdot \int_0^\infty \left|e^{-h(\theta)t}\right| \cdot \left|\e_x \frac{\partial}{\partial \theta_i} \nabla f(X_t^\theta)\right| dt\\
=&: I_4 + I_5.
\eae
where in step $(a)$ we use 
\beq
\frac{\partial}{\partial \theta_i} \e_{x} \nabla f\left(X_t^\theta\right) = \e_x \frac{\partial}{\partial \theta_i} \nabla f(X_t^\theta),
\eeq
which is due to \eqref{pre DCT} and \eqref{change order gradient theta}.

By \eqref{norm},
\bae
\label{I4}
& I_4 \le C\left| \tilde x\right| \left( \left| \int_0^1 \e_x \nabla f(X_t^\theta) e^{-ct} t dt \right| + \left| \int_1^\infty \e_x \nabla f(X_t^\theta) e^{-ct} t dt \right| \right) \\
\overset{(a)}{\le}&  C |\tilde x| \left( 1 + \int_1^\infty e^{-ct}t \int_{\mathbb{R}^d} \left( 1 + |x'|^{m} \right)\left| p_t(x, x', \theta) - p_\infty(x',\theta) \right| dx'dt + \int_1^\infty e^{-ct}t \int_{\mathbb{R}^d} \left( 1 + |x'|^{m} \right) \left| p_\infty(x',\theta) \right| dx'dt \right) \\
\overset{(b)}{\le}& C \left( 1 + |x|^{m'} + |\tilde x|^{m'} \right),
\eae
where in step (a) we used \eqref{expectation bound} and step $(b)$ is by \eqref{invariant density}, \eqref{x prime decay}, and the same analysis as in \eqref{theta bound} and \eqref{control v11}. Similarly, 
\bae
\label{I5}
I_5 \le& C\left| \tilde x\right| \left( \int_0^1 \left|\e_x \frac{\partial}{\partial \theta_i} \nabla f(X_t^\theta)\right| e^{-ct} dt + \int_1^\infty \left|\e_x \frac{\partial}{\partial \theta_i} \nabla f(X_t^\theta)\right| e^{-ct} dt \right)  \\
\le& C |\tilde x| + C |\tilde x| \cdot \int_1^\infty e^{-ct} \int_{\mathbb{R}^d} \left( 1 + |x'|^{m} \right)\left| \frac{\partial}{ \partial \theta_i} p_t(x, x', \theta) - \frac{\partial}{ \partial \theta_i} p_\infty(x',\theta) \right| dx'dt \\
+& C |\tilde x| \cdot \int_1^\infty e^{-ct} \int_{\mathbb{R}^d} \left( 1 + |x'|^{m} \right) \left| \frac{\partial}{ \partial \theta_i} p_\infty(x',\theta) \right| dx'dt \\
\le& C \left( 1 + |x|^{m'} + |\tilde x|^{m'} \right).
\eae
Combining \eqref{control v12}, \eqref{change order v12}, \eqref{I4}, \eqref{I5}, and DCT, we know 
$v^{1,2}$ is differentiable with respect to $\theta$ and for any $i \in \{1,2,\cdots, \ell\}$

\beq 
\label{control v12 theta}
\left|\frac{\partial v^{1,2}}{\partial \theta_i}(x, \tilde x, \theta)\right| \le \left| \frac{\partial}{\partial \theta_i} \e_{Y \sim \pi_\theta} f(Y) \int_0^\infty (e^{-h(\theta)t} \tilde x)^\top \e_{x} \nabla f\left(X_t^\theta\right) dt\right| 
+ \left|\e_{Y \sim \pi_\theta} f(Y) -\beta \right| \cdot (I_4 + I_5) \le C\left(1+ |x|^{m'} + |\tilde x|^{m^{\prime}}\right), 
\eeq
which together with \eqref{control v11 theta} yields
\beq
\left|\nabla_\theta v^1(x, \tilde x, \theta)\right| \le C\left(1+ |x|^{m'} + |\tilde x|^{m^{\prime}}\right).
\eeq

Similarly, by \eqref{x decay}, \eqref{normal bound}, and \eqref{expectation bound}, 
$$
\begin{aligned}
&\left| \int_0^\infty \nabla_x \left( \nabla_\theta \e_{Y \sim \pi_\theta}f(Y) - \nabla_\theta \e_{x} f(X_t^\theta) \right) dt \right| \\
=& \left| \int_0^1 \nabla_x \nabla_\theta \e_{x} f(X_t^\theta) dx'dt \right| + \left| \int_1^{+\infty} \int_{\mathbb{R}} f(x') \nabla_x \nabla_\theta p_t(x,x',\theta) dx'dt \right| \\
\le&  C \left( 1 + |x|^{m'} \right)
\end{aligned}
$$
and 
$$
\left| \int_0^\infty \nabla_x\left( \nabla_\theta \e_{x} f(X_t^\theta) - \e_{x,\tilde x} \nabla f(X_t^\theta) \tilde X_t^\theta \right) dt \right| = \left|\int_0^\infty \nabla_x \left( \e_{x} \nabla f\left(X_t^\theta\right) \right) e^{-h(\theta)t} \tilde x dt \right| \le C \left(1 + |x|^{m'} + |\tilde x|^{m'} \right).
$$
By DCT and \eqref{theta bound},
\bae
\label{control v1 x}
\left| \nabla_x v^{1,1}(x, \theta) \right| &= \left| \left( \e_{Y \sim \pi_\theta} f(Y) - \beta \right) \int_0^\infty \int_{\mathbb{R}^d} f(x') \nabla_x \nabla_ \theta p_t(x, x', \theta) dx'dt \right| \le C \left( 1 + |x|^{m'} \right), \\
\left| \nabla_x v^{1,2}(x, \tilde x, \theta) \right| &= \left| (\e_{Y \sim \pi_\theta} f(Y) -\beta) \int_0^\infty \nabla_x \left( \e_{x} \nabla f\left(X_t^\theta\right) \right) e^{-h(\theta)t} \tilde x dt \right|  \le C \left(1 + |x|^{m'} + |\tilde x|^{m'} \right).
\eae
Then, for $\nabla_x^2 v^1(x,\tilde x, \theta)$, we have 
$$
\begin{aligned}
&\left| \int_0^\infty \nabla^2_x \left( \nabla_\theta \e_{Y \sim \pi_\theta}f(Y) - \nabla_\theta \e_{x} f(X_t^\theta) \right) dt \right| \\
=& \left| \int_0^1 \nabla^2_x \nabla_\theta \e_{x} f(X_t^\theta) dx'dt \right| +  \left|\int_0^\infty \int_{\mathbb{R}^d} f(x') \nabla^2_x \nabla_ \theta p_t(x, x', \theta) dx'dt \right| \\
\le& C \left( 1 + |x|^{m'} \right),\\
\end{aligned}
$$
and
$$
\begin{aligned}
\left| \int_0^\infty \nabla^2_x\left( \nabla_\theta \e_{x} f(X_t^\theta) - \e_{x,\tilde x} \nabla_x f(X_t^\theta) \tilde X_t^\theta \right) dt \right| &= \left| \int_0^\infty \nabla^2_x \left( \e_{x} \nabla f\left(X_t^\theta\right) \right)  e^{-h(\theta)t} \tilde x dt \right| \le C \left(1 + |x|^{m'} + |\tilde x|^{m'} \right).
\end{aligned}
$$
By DCT and \eqref{theta bound},
\bae
\label{control v1 xx}
\left| \nabla_x^2 v^{1,1}(x, \theta) \right| &=\left| \left( \e_{Y \sim \pi_\theta} f(Y) - \beta \right) \int_0^\infty \int_{\mathbb{R}^d} f(x') \nabla^2_x \nabla_ \theta p_t(x, x', \theta) dx'dt \right| \le  C \left( 1 + |x|^{m'} \right), \\
\left| \nabla_x^2 v^{1,2}(x, \tilde x, \theta) \right| &=\left| (\e_{Y \sim \pi_\theta} f(Y) -\beta) \int_0^\infty \nabla^2_x \left( \e_{x} \nabla f\left(X_t^\theta\right) \right) e^{-h(\theta)t} \tilde x dt \right| \le C \left(1 + |x|^{m'} + |\tilde x|^{m'} \right).
\eae

Finally, we verify that $v^1$ is a solution to the PDE \eqref{PDE}. Note that 
\beq
\label{fubini 1}
\int_0^\infty \e_{x,\tilde x}  \e_{X^\theta_s,\tilde X_s^\theta} \left| G^1(X_t^\theta,\tilde X_t^\theta, \theta) \right| dt \overset{(a)}{=} \int_0^\infty \e_{x,\tilde x}  \left| G^1(X_{t+s}^\theta,\tilde X_{t+s}^\theta, \theta) \right| dt \overset{(b)}{=} \int_s^\infty \e_{x,\tilde x}  \left| G^1(X_{t}^\theta,\tilde X_{t}^\theta, \theta) \right| dt \overset{(c)}{<} \infty,
\eeq
where step $(a)$ is by the Markov property of the process $(X_\cdot^\theta, \tilde X_\cdot^\theta)$, step $(b)$ by change of variables and step $(c)$ is by 
the convergence of $v^1$. By Fubini's theorem,
\beq
\label{fubini 2}
\e_{x,\tilde x} v^1(X_s^\theta, \tilde X_s^\theta, \theta) = \e_{x,\tilde x}  \int_0^\infty \e_{X^\theta_s,\tilde X_s^\theta} G^1(X_t^\theta,\tilde X_t^\theta, \theta) dt = \int_0^\infty  \e_{x,\tilde x} \e_{X^\theta_s,\tilde X_s^\theta} G^1(X_t^\theta,\tilde X_t^\theta, \theta) dt.
\eeq
Combining \eqref{fubini 1} and \eqref{fubini 2}, we have that 
\bae
\label{weak solution}
\frac{1}{s} \left[ \e_{x,\tilde x} v^1(X_s^\theta, \tilde X_s^\theta, \theta) - v^1(x, \tilde x, \theta) \right] &= \frac{1}{s} \left[ - \int_0^\infty \e_{x,\tilde x} \e_{X^\theta_s,\tilde X_s^\theta} G^1(X_t^\theta,\tilde X_t^\theta, \theta) dt + \int_0^\infty \e_{x,\tilde x} G^1(X_t^\theta,\tilde X_t^\theta, \theta) dt \right]\\
&= \frac{1}{s} \left[ - \int_0^\infty \e_{x,\tilde x}  G^1(X_{t+s}^\theta,\tilde X_{t+s}^\theta, \theta) dt + \int_0^\infty \e_{x,\tilde x} G^1(X_t^\theta,\tilde X_t^\theta, \theta) dt \right]\\
&= \frac{1}{s} \left[ - \int_s^\infty \e_{x,\tilde x}  G^1(X_{t}^\theta,\tilde X_{t}^\theta, \theta) dt + \int_0^\infty \e_{x,\tilde x} G^1(X_t^\theta,\tilde X_t^\theta, \theta) dt \right]\\
&= \frac{1}{s} \int_0^s\e_{x,\tilde x} G^1(X_t^\theta,\tilde X_t^\theta, \theta) dt,
\eae
Let $s \to 0^+$. By the definition of the infinitesimal generator and since $v^1(x,\tilde x, \theta)$ is twice differentiable with respect to $x$ and once differentiable with respect to $\tilde x$, $v^1(x,\tilde x,\theta)$ is the classical solution of the Poisson PDE \eqref{PDE}.
\end{proof}

Now we show the proof of Lemma \ref{poisson eq 2}.

\begin{proof}[Proof of Lemma \ref{poisson eq 2}:]
The proof is exactly the same as in Lemma \ref{poisson eq} except for the presence of the dimension $\bar x$ and $\mathcal{L}_{\bar x}$. We first show that the integral in \eqref{representation2} converges. Note that 
\begin{eqnarray}
v^2(x,\tilde x, \bar x, \theta) &=& \int_0^\infty  
\e_{x,\tilde x, \bar x} \left[  \left(\e_{Y \sim \pi_{\theta}}f(Y) - f(\bar X_t^\theta) \right) \cdot  \left(\nabla f(X_t^\theta) \tilde X_t^\theta \right)^\top \right] dt \notag \\
&\overset{(a)}{=}& \int_0^\infty  \left( \e_{Y \sim \pi_{\theta}}f(Y) - \e_{\bar x}f(\bar X_t^\theta) \right) \cdot \e_{x,\tilde x} \left( \nabla f(X_t^\theta) \tilde X_t^\theta \right)^\top dt,
\end{eqnarray}
where step $(a)$ is by the independence of $\bar X^\theta_\cdot$ and $(X^\theta_\cdot, \tilde X^\theta_\cdot)$. 

We now prove a uniform bound for $ \e_{x,\tilde x} \nabla f(X_t^\theta) \tilde X_t^\theta$ and then by the ergodicity of $\bar X^\theta_\cdot$ in Lemma \ref{ergodic estimation} we can show that the integrals converge. 
\bae
\left| \e_{x,\tilde x} \nabla f(X_t^\theta) \tilde X_t^\theta \right| &= \left| \e_{x,\tilde x} \nabla f(X_t^\theta) \tilde X_t^\theta - \nabla_\theta \e_x f(X_t^\theta) + \nabla_\theta \e_x f(X_t^\theta) \right|\\
&\overset{(a)}{\le} \left| \e_{x} \nabla f(X_t^\theta) e^{-h(\theta)t} \tilde x \right| + \left| \nabla_\theta \e_x f(X_t^\theta) \right|,
\eae
where step $(a)$ is by \eqref{tilde x cal}. Therefore, for any $t\in[0,1]$, we can conclude 
\beq
\label{control v2 half 1}
\left| \e_{x} \nabla f(X_t^\theta) e^{-h(\theta)t} \tilde x \right| + \left| \nabla_\theta \e_x f(X_t^\theta) \right| \le C \left(1 + \left| x \right|^{m'} + \left| \tilde x \right|^{m'} \right),
\eeq
where we have used Assumption \ref{condition} and equation \eqref{expectation bound}. For $t>1$, we have 
\begin{equation}
\begin{aligned}
\label{control v2 half 2}
&\left| \e_{x} \nabla f(X_t^\theta) e^{-h(\theta)t} \tilde x \right| + \left| \nabla_\theta \e_x f(X_t^\theta) \right|\\
\overset{(a)}{\le}& C \left(1 + \e_x \left|X_t^\theta\right|^m \right) \cdot \left| \tilde x \right| + C \int_{\mathbb{R}^d} \left( 1 + |x'|^m \right) \left| \nabla_\theta p_t(x, x', \theta) -  \nabla_\theta p_\infty(x', \theta) \right| dx' + C \int_{\mathbb{R}^d} \left( 1 + |x'|^m \right) \left| \nabla_\theta p_\infty(x', \theta) \right| dx' \\
\overset{(b)}{\le}& C \left(1 + \left| x \right|^{m'} + \left| \tilde x \right|^{m'} \right),
\end{aligned}
\end{equation}
where step $(a)$ uses Assumption \ref{condition} and step $(b)$ uses Proposition \ref{ergodic estimation} and the same calculations as in \eqref{theta bound} and \eqref{control v11}. Combining \eqref{control v2 half 1} and \eqref{control v2 half 2}, we have for any $t\ge0$
\beq
\label{control v2 half}
\left| \e_{x,\tilde x} \nabla f(X_t^\theta) \tilde X_t^\theta \right| \le C \left(1 + \left| x \right|^{m'} + \left| \tilde x \right|^{m'} \right).
\eeq
Thus, by \eqref{control v2 half} and the same derivation as in \eqref{control v11}, we have 
\begin{eqnarray}
\left| v^2(x,\tilde x, \bar x, \theta) \right| &\le& C \left(1 + \left| x \right|^{m'} + \left| \tilde x \right|^{m'} \right) \cdot \int_0^\infty  \left| \e_{\bar x} f(\bar X_t^\theta) - \e_{Y \sim \pi_{\theta}}f(Y) \right| dt \notag \\
&\le& C\left(1+ |x|^{m'} + |\tilde x|^{m'} + |\bar x|^{m^{\prime}}\right).
\end{eqnarray}

We next show that $v^2(x,\tilde x, \bar x, \theta)$ is differentiable with respect to $x, \tilde x, \bar x, \theta$. Similar to Lemma \ref{poisson eq}, we first change the order of differentiation and integration and show the corresponding integral exists. Then, we apply DCT to prove that the differentiation and integration can be interchanged. For the ergodic process $\bar X_\cdot^\theta$, by \eqref{control v2 half}, \eqref{x decay}, and \eqref{expectation bound}, we have the bounds
\bae 
\int_0^\infty \int_{\mathbb{R}^d} \left| f(\bar x') \nabla_{\bar x} p_t(\bar x, \bar x', \theta) \right| d\bar x' \cdot \left| \e_{x,\tilde x} \nabla f(X_t^\theta) \tilde X_t^\theta  \right|dt &\le  C\left(1+ |x|^{m'} + |\tilde x|^{m'} + |\bar x|^{m^{\prime}}\right), \\
\int_0^\infty \int_{\mathbb{R}^d} \left| f(\bar x') \nabla^2_{\bar x} p_t(\bar x, \bar x', \theta) \right| d\bar x' \cdot \left| \e_{x,\tilde x} \nabla f(X_t^\theta) \tilde X_t^\theta  \right| dt &\le C\left(1+ |x|^{m'} + |\tilde x|^{m'} + |\bar x|^{m^{\prime}}\right),
\eae
and thus by the DCT
\beq
\sum_{i=1}^2 \left| \nabla^i_{\bar x} v^2(x,\tilde x, \bar x, \theta) \right| \le C\left(1+ |x|^{m'} + |\tilde x|^{m'} + |\bar x|^{m^{\prime}}\right). 
\eeq

To address $\nabla_x v^2, \nabla_x^2 v^2$, we first note that for any $i,j \in \{1,2,\cdots, d\}$
\bae
\left| \nabla_x \e_{x,\tilde x} \nabla f(X_t^\theta) \tilde X_t^\theta \right| &\le \left|  \nabla_x \e_{x} \nabla f(X_t^\theta) e^{-h(\theta)t} \tilde x \right| + \left| \nabla_x \nabla_\theta \e_x f(X_t^\theta) \right| \overset{(a)}{\le} C \left(1 + \left| x \right|^{m'} + \left| \tilde x \right|^{m'} \right),\\
\left| \frac{\partial^2 }{\partial x_i \partial x_j} \e_{x,\tilde x} \nabla f(X_t^\theta) \tilde X_t^\theta \right| &\le \left| \frac{\partial^2 }{\partial x_i \partial x_j} \e_{x} \nabla f(X_t^\theta) e^{-h(\theta)t} \tilde x \right| + \left| \frac{\partial^2 }{\partial x_i \partial x_j} \nabla_\theta \e_x f(X_t^\theta) \right| \overset{(a)}{\le} C \left(1 + \left| x \right|^{m'} + \left| \tilde x \right|^{m'} \right),
\eae
where in step $(a)$ we use \eqref{x decay} when $t>1$ and \eqref{expectation bound} for $t\in[0,1]$. Thus we have $ \forall i,j \in \{1,2,\cdots, d\}$
\bae
\int_0^\infty \left| \left[ \e_{Y \sim \pi_{\theta}}f(Y) - \e_{\bar x}f(\bar X_t^\theta) \right] \right| \cdot \left| \nabla_x \e_{x,\tilde x} \nabla f(X_t^\theta) \tilde X_t^\theta \right| dt &\le C\left(1+ |x|^{m'} + |\tilde x|^{m'} + |\bar x|^{m^{\prime}}\right),\\
\int_0^\infty  \left| \left[ \e_{Y \sim \pi_{\theta}}f(Y) - \e_{\bar x}f(\bar X_t^\theta) \right] \right| \cdot \left| \frac{\partial^2}{\partial x_i \partial x_j} \e_{x,\tilde x} \nabla f(X_t^\theta) \tilde X_t^\theta \right| dt &\le C\left(1+ |x|^{m'} + |\tilde x|^{m'} + |\bar x|^{m^{\prime}}\right).
\eae
Then by DCT,
\beq
\sum_{i=1}^2 \left| \nabla^i_x v^2(x,\tilde x,\bar x, \theta) \right| \le C\left(1+ |x|^{m'} + |\tilde x|^{m'} + |\bar x|^{m^{\prime}}\right).
\eeq

Then for $\nabla_\theta v^2$, first we have for any $i \in \{1,2,\cdots, \ell\}$
\bae
\left| \frac{\partial}{\partial \theta_i} \e_{x,\tilde x} \nabla f(X_t^\theta) \tilde X_t^\theta \right| &\le  \left| \left( \frac{\partial}{\partial \theta_i} \e_{x} \nabla f(X_t^\theta) \right) e^{-h(\theta)t} \tilde x \right| + \left| \e_{x} \nabla f(X_t^\theta) \left(\frac{\partial }{\partial \theta_i} e^{-h(\theta)t}\right) \tilde x \right| + \left| \frac{\partial}{\partial \theta_i} \nabla_\theta \e_x f(X_t^\theta) \right| \\
&\overset{(a)}{\le} C \left(1 + \left| x \right|^{m'} + \left| \tilde x \right|^{m'} \right),
\eae
where in step $(a)$ we use \eqref{norm} and the same analysis as in \eqref{I5}. Thus 
\begin{equation}
\begin{aligned}
&\left| \int_0^\infty \frac{\partial}{\partial \theta_i} \left( \left[ \e_{Y \sim \pi_{\theta}}f(Y) - \e_{\bar x}f(\bar X_t^\theta) \right] \cdot \e_{x,\tilde x} \nabla f(X_t^\theta) \tilde X_t^\theta \right) dt \right| \\
\le& \int_0^\infty \int_{\mathbb{R}^d} \left| f(x') \frac{\partial}{\partial \theta_i} \left( p_\infty(\bar x', \theta) - p_t(\bar x, \bar x', \theta) \right) \right| d\bar x' \cdot \left| \e_{x,\tilde x} \nabla f(X_t^\theta) \tilde X_t^\theta \right| dt\\
+& \int_0^\infty \int_{\mathbb{R}^d} \left| f(x') \left( p_\infty(\bar x', \theta) - p_t(\bar  x, \bar x', \theta) \right) \right| d\bar x' \cdot \left| \frac{\partial }{\partial \theta_i}\e_{x,\tilde x} \nabla f(X_t^\theta) \tilde X_t^\theta \right| dt\\
\le& C\left(1+ |x|^{m'} + |\tilde x|^{m'} + |\bar x|^{m^{\prime}}\right),
\end{aligned}
\end{equation}
which together with the DCT derives
\beq
\left| \nabla_\theta v^2(x,\tilde x,\bar x, \theta) \right| \le C\left(1+ |x|^{m'} + |\tilde x|^{m'} + |\bar x|^{m^{\prime}}\right).
\eeq
Finally, note that 
\bae
&\left| \int_0^\infty \nabla_{\tilde x} \left( \left[ \e_{Y \sim \pi_{\theta}}f(Y) - \e_{\bar x}f(\bar X_t^\theta) \right] \cdot \e_{x,\tilde x} \nabla f(X_t^\theta) \tilde X_t^\theta \right) dt \right| \\
\le&  \int_0^\infty  \left| \left[ \e_{Y \sim \pi_{\theta}}f(Y) - \e_{\bar x} f(\bar X_t^\theta) \right] \right| \cdot \left| \nabla_{\tilde x} \e_{x,\tilde x} \nabla f(X_t^\theta) \tilde X_t^\theta \right| dt \\
\le& C\int_0^\infty  \left| \left[ \e_{Y \sim \pi_{\theta}}f(Y) - \e_{\bar x}f(\bar  X_t^\theta) \right] \right| \cdot \left| \e_{x}  \nabla f(X_t^\theta) e^{-h(\theta)t} \right| dt  \\
\le& C\left(1+ |x|^{m'} + |\bar x|^{m^{\prime}}\right)
\eae
and then by DCT 
\beq
\left| \nabla_{\tilde x}v^2(x,\tilde x, \bar x, \theta) \right| \le  C\left(1+ |x|^{m'} + |\bar x|^{m^{\prime}}\right).
\eeq
By the same calculations as in \eqref{weak solution}, we know $v^2$ is the classical solution of PDE \eqref{PDE 2} and the bound \eqref{control v2} holds. 
\end{proof}

\bibliographystyle{plain}

\bibliography{cite}

\end{document}